\documentclass{article}

\usepackage{arxiv}

\usepackage{kpfonts}
\usepackage[utf8]{inputenc} % allow utf-8 input
\usepackage[T1]{fontenc}    % use 8-bit T1 fonts
\usepackage{hyperref}       % hyperlinks
\usepackage{url}            % simple URL typesetting
\usepackage{booktabs}       % professional-quality tables
\usepackage{amsfonts}       % blackboard math symbols
\usepackage{nicefrac}       % compact symbols for 1/2, etc.
\usepackage{microtype}      % microtypography
\usepackage{lipsum}		% Can be removed after putting your text content
\usepackage{graphicx}
\usepackage{natbib}
\usepackage{doi}

\usepackage{algorithm}
\usepackage{algorithmic}

%
% authors defined
%
\usepackage{amsfonts} % to write \mathbb
\usepackage{amsmath} % to write \text in math
\usepackage{amsthm} % to write proofs
\usepackage{bm} % to write \bm
\usepackage{dsfont} % to write indicator function
\usepackage[inline]{enumitem} % to write enumerate
\usepackage{subcaption} % to use subfigure
\usepackage[table,xcdraw]{xcolor} % to write \color
\usepackage{cleveref}
\usepackage{booktabs}
\usepackage{multirow} % for tables in experiments section

\usepackage{tikz} % to draw causal graphs
\usetikzlibrary{bayesnet}
\usetikzlibrary{patterns} % to fill nodes with patterns

\newcommand{\indep}{\perp \!\!\! \perp} % to denote statistical independence
\newcommand{\dep}{\not\!\perp\!\!\!\perp} % stat. dependence

\newtheorem{theorem}{Theorem}[section] % to write theorems and such
\newtheorem{corollary}[theorem]{Corollary}
\newtheorem{lemma}[theorem]{Lemma}
\newtheorem{proposition}{Proposition}

\newtheorem{definition}{Definition}

\newtheorem{example}{Example}

%%%
%%% Static references to appendices.
%%% These are necessary because AAAI demands latex source without appendices.
%%%
\newcommand{\figTradeOff}{Figure 4}
\newcommand{\apxBestResponse}{Appendix A.1}
\newcommand{\apxInvalidIV}{Appendix B}
\newcommand{\apxMaximin}{Appendix C}
\newcommand{\apxBoundedReductionMulti}{Appendix D}
\newcommand{\apxExperiments}{Appendix F}
\newcommand{\apxExpSetup}{Appendix F.1}
\newcommand{\apxAnalysesOfAssumptions}{Appendix F.2}
\newcommand{\apxFurtherExp}{Appendix F.3}
\newcommand{\apxBoundingTheta}{Appendix F.4}
%%%
%%%

\title{Causal Strategic Learning with Competitive Selection}

%\date{September 9, 1985}	% Here you can change the date presented in the paper title
%\date{} 					% Or removing it

\author{ 
Kiet Q. H. Vo\textsuperscript{\rm 1,\rm 2}, Muneeb Aadil\textsuperscript{\rm 1,\rm 2}, Siu Lun Chau\textsuperscript{\rm 1}, Krikamol Muandet\textsuperscript{\rm 1}
\\ \\
\textsuperscript{\rm 1}CISPA Helmholtz Center for Information Security, Saarbr\"ucken, Germany \\
\textsuperscript{\rm 2}Saarland University, Saarbr\"ucken, Germany\\
}

% Uncomment to remove the date
%\date{}

% Uncomment to override  the `A preprint' in the header
%\renewcommand{\headeright}{Technical Report}
%\renewcommand{\undertitle}{Technical Report}

%%% Add PDF metadata to help others organize their library
%%% Once the PDF is generated, you can check the metadata with
%%% $ pdfinfo template.pdf
% \hypersetup{
% pdftitle={A template for the arxiv style},
% pdfsubject={q-bio.NC, q-bio.QM},
% pdfauthor={David S.~Hippocampus, Elias D.~Striatum},
% pdfkeywords={First keyword, Second keyword, More},
% }

\def\thetaao{\bm{\theta}^{\text{AO}}}
\def\thetaaohat{\hat{\bm{\theta}}^{\text{AO}}}

\def\thetastar{\bm{\theta}^*}
\def\thetastarhat{\hat{\bm{\theta}}^*}

\def\thetaolshat{\hat{\bm{\theta}}^{\text{OLS}}}

%
% these are experimental, 
% do NOT re-use them for different purposes.
%
\def\X{X} % random vector X of the covariates
\def\B{B} % random vector B of the base covariates
\def\homomatrix{\bm{\mathcal{E}}}
\def\vecalpha{\bm{\alpha}} % vector alpha

\begin{document}
\maketitle

\begin{abstract}
We study the problem of agent selection in causal strategic learning under multiple decision makers and address two key challenges that come with it. 
Firstly, while much of prior work focuses on studying a fixed pool of agents that remains static regardless of their evaluations, we consider the impact of selection procedure by which agents are not only evaluated, but also selected.
When each decision maker unilaterally selects agents by maximising their own utility, we show that the optimal selection rule is a trade-off between selecting the best agents and providing incentives to maximise the agents' improvement. 
Furthermore, this optimal selection rule relies on incorrect predictions of agents' outcomes. 
Hence, we study the conditions under which a decision maker's optimal selection rule will not lead to deterioration of agents' outcome nor cause unjust reduction in agents' selection chance. 
To that end, we provide an analytical form of the optimal selection rule and a mechanism to retrieve the causal parameters from observational data, under certain assumptions on agents' behaviour. 
Secondly, when there are multiple decision makers, the interference between selection rules introduces another source of biases in estimating the underlying causal parameters. 
To address this problem, we provide a cooperative protocol which all decision makers must collectively adopt to recover the true causal parameters. 
Lastly, we complement our theoretical results with simulation studies.
Our results highlight not only the importance of causal modeling as a strategy to mitigate the effect of gaming, as suggested by previous work, but also the need of a benevolent regulator to enable it.
\end{abstract}

% keywords can be removed
\keywords{Causal Inference \and Strategic Learning}

\section{Introduction}
Machine Learning (ML) has gained significant popularity in facilitating personalised decision making across diverse domains such as healthcare \citep{Wiens19:Roadmap,chau2021bayesimp,Ghassemi22:ML-Health}, criminal justice \citep{kleinberg2018human}, college admissions \citep{harris2022strategic},  hiring \citep{Deshpande20:Ai-Resume}, and credit scoring \citep{bjorkegren2020behavior}. In these critical domains, mutual trust between decision makers and agents who are affected by the decisions is of utmost importance. 
As a result, the decision makers might need to render algorithmic rules transparent to all stakeholders. However, this transparency can incentivise agents to strategically adjust their variables to receive more favorable decisions, resulting in either genuine improvements or gaming~\citep{pmlr-v130-bechavod21a}. Although in both scenarios agents receive better decision outcomes, gaming is undesirable for the decision makers as it negatively impacts their utility. Learning under strategic behavior is well-studied in both economics and machine learning \citep{hardt2016strategic,perdomo2020performative,dranove2003more,dee2019causes,munro2022learning}. Our work aligns with research efforts to identify causal features that reduce gaming effects and to promote genuine agent improvements \citep{miller2020strategic}, an approach often referred to as causal strategic learning~(CSL).

Let us consider a college admission example from \citet{harris2022strategic}. 
The college, acting as the \textit{decision maker~(DM)}, aims to evaluate applicants~(\textit{agents}) by predicting their prospective college GPAs based on their submitted high school GPAs and SAT scores. For transparency, the college makes this evaluation rule public. In response, applicants can strategically direct their efforts on certain exams (high school or SAT) to optimise their evaluations. Recognising this strategic approach, the college's objective is to formulate and publicise an evaluation rule that maximises the expected college GPA (or \textit{agents' outcome}) for all applicants. Envision a scenario where a student's college GPA is causally determined by their high school GPA only, yet the deployed rule considers both exam results. There is potential for gaming behavior under this rule, if an applicant emphasises their SAT preparation over their high school GPA,
since this might boost their evaluation without necessarily improving the actual college academic performance. 

\begin{figure}[t!]
\centering
\includegraphics[width=0.85\columnwidth]{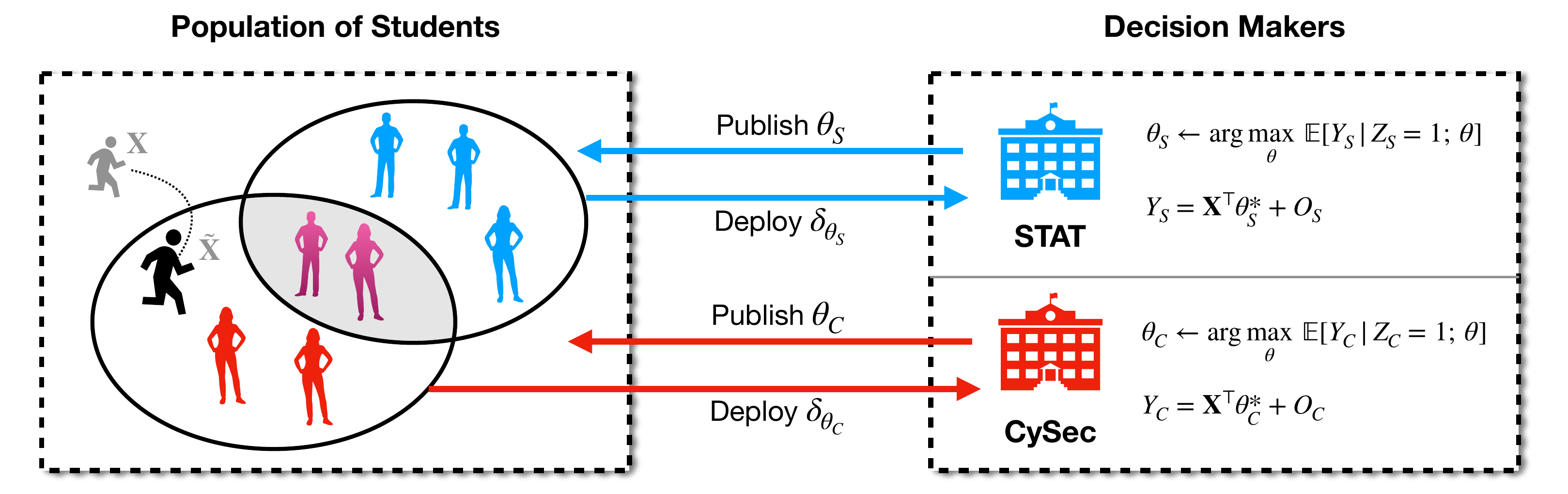}
\caption{Causal strategic learning with two decision makers. The statistics (STAT) and cyber-security (CySec) departments, each applies selection rules $\bm{\delta}_{\theta_S}$ and $\bm{\delta}_{\theta_C}$ to maximise their enrolled students' expected outcomes (future GPA). The parameters $\theta_S$ and $\theta_C$ are made public, prompting future students to strategically alter their attributes ($\mathbf{X}$ to $\tilde{\mathbf{X}}$) to boost admission chances.}
\label{fig:selection-diagram}
\end{figure}

The above example underscores the necessity of incorporating causal knowledge into decision making to incentivise agents towards genuine improvement, aligning with what \citet{miller2020strategic} have proven. CSL presents numerous challenges. 
For example, \citet{alon2020multiagent} explore mechanism designs that incentivise agents to respond with the intended outcomes of the DM, assuming knowledge of the true underlying causal structure. Similarly, \citet{munro2022learning} also assumes knowledge of casual information and incorporates stochasticity into their released decision rule to discourage gaming. However, without prior causal knowledge, learning the true causal mechanism in practice is challenging due to confounding bias in observational data. To address this, \citet{shavit2020causal} show that a DM can publicise a sequence of evaluation rules specifically to eliminate confounding bias and achieving causal identifiability. 
In contrast, \citet{harris2022strategic} consider scenarios in which the DM can utilise the evaluation rule itself as instrumental variable, and identify the true causal mechanism via instrumental variable regression~\citep{Angrist96:IV,Newey03:NIV,Hartford17:DIV,singh2019kernel,muandet2020dual}. 
While much of previous CSL research focuses on evaluating (and motivating) agents in light of strategic feedback from a single DM's perspective, our research extends further, considering not just evaluating, but also selecting agents based on their evaluations. This brings in additional challenges, notably the introduction of selection bias, which undermines previous causal identifiability results. Additionally, we venture into situations with multiple DMs competing to select agents. We believe this work is well-motivated for real-world strategic learning scenarios that involve competitive selection, such as in hiring and loan application.

Continuing from our motivating example, consider that we now have multiple college departments (as DMs), e.g. statistics and cyber-security, competing not only to evaluate applicants but also to select them based on their evaluations (see \Cref{fig:selection-diagram}). 
Unlike previous methods, each department (DM) aims to optimise the expected GPA of their enrolled students, rather than focusing on all applicants.
This natural objective nonetheless leads to a dilemma between selecting the top-performing candidates and motivating general candidates to improve. Furthermore, a selection rule focusing solely on top candidates can disincentivise self-improvement, potentially lowering future college GPAs~(see \Cref{col:max-improvement-s}). Additionally, as the optimal selection rule has to rely on incorrect (non-causal) predictions of agents' outcomes, their chances of being selected can be diminished compared to if evaluations were based on accurate (causal) predictions~(see \Cref{col:bounded-reduction-s}). We refer to an agent's prospective outcome and selection chances collectively as \textit{agent welfare}. To safeguard such welfare, we adopt a regulator's viewpoint, proposing regulations for the DM to follow, such that their resulting optimal decision rule will lead to neither deterioration of agents' outcomes nor excessive reduction in agents' selection chance. As such regulation requires DM to have access to causal parameters, we provide conditions for a single DM to achieve causal identifiability under selection bias. With multiple DMs, the selection bias is now harder to correct for due to the interference between decision rules. In particular, it is difficult for any individual DM to predict an agent's strategic response when that agent is incentivised by all DMs. Additionally, anticipating their compliance behavior is challenging since this agent can adhere to at most one DM's positive decision. Consequently, we propose a cooperative protocol for the DMs to follow so that their causal parameters can be identified, to subsequently safeguard the welfare of agents.

The rest of the paper is outlined as follows. \Cref{sec:prob-formulation} introduces the CSL formulation with selection procedure under multiple DMs. 
\Cref{sec:main-results} then discusses the impact of selection in the context of CSL alongside our main results and extensions to the setting of competitive selection. We validate our approach through various simulation studies in Section \ref{sec:experiments}. Finally, we conclude in Section \ref{sec:conclusion}. All proofs are provided in the appendices.

\section{Causal Strategic Learning with Selection}
\label{sec:prob-formulation}

\textbf{Notations. } We denote random variables and random vectors with upper case letters, and their realisations with lower case and bold lower case letters, respectively. Random matrices are also denoted with upper case letters, and their realisations with bold upper case. We write $\{1,\ldots, n\}$ as $[n]$.

Following prior work \citep{shavit2020causal,harris2022strategic,bechavod2022information}, we build our setting on the sequential decision making context, following the framework of Stackelberg game. We assume throughout that there exist $n$ decision makers~(DMs), with $n\geq 1$, who take turn with agents playing their strategies over $T$ rounds indexed by $t \in [T]$. Let $W_{\mathit{it}}$ be a binary variable representing the decision from DM $i$ for the sole agent who arrives at round $t$, e.g., whether or not the college $i$ admits this student. 
At the beginning of each round, each DM publicises their decision rule $\bm{\delta}_{\bm{\theta}_{\mathit{it}}}$ parameterised by the parameter vector $\bm{\theta}_{\mathit{it}}\in\mathbb{R}^m$, i.e.,
\begin{equation*}
\bm{\delta}_{\bm{\theta}_{\mathit{it}}}:\mathbf{x}\mapsto p\left(W_{\mathit{it}}=1\mid\X_t=\mathbf{x} \,;\, \bm{\theta}_{\mathit{it}}\right), \quad i\in[n]
\end{equation*}
where $\X_t\in\mathbb{R}^m$ denotes the random vector containing the covariates of the agent in round $t$ and $p(W_{it}=1\,|\,\X_t=\mathbf{x} \,;\, \bm{\theta}_{it})$ is a probability that this agent will later receive a positive decision, i.e., being admitted into the college, if they report attributes $\X_t=\mathbf{x}$. 
We assume that $W_{\mathit{it}}\sim\text{Bernoulli}(\bm{\delta}_{\bm{\theta}_{\mathit{it}}}(\X_t))$. 
After knowing about $\{\bm{\delta}_{\bm{\theta}_{it}}\}_{i=1}^n$, this agent modifies their attributes and then reports the final values $\mathbf{x}$, e.g., SAT score and high school GPA, to all DMs, so as to maximise the chance of receiving favorable decisions.
Next, all DMs evaluate this agent using their decision rules and return the selection statuses $\{w_{it}\}_{i=1}^n$. Finally, the agent's compliance to the decisions can be modeled as a random variable $Z_t\sim\text{Categorical}(\{0\}\cup[n])$, whose value dictates which positive decision the agent will comply with\footnote{When $Z_t=0$, the agent either does not comply with any of the positive decisions or does not receive any positive decision.}.
Throughout this work, we focus on the \emph{perfect information} setting where both DMs and agents know all information about the decision rules including their parameter vectors \citep{shavit2020causal,harris2022strategic}. Specifically, for round $t_s$, the agent knows about $\{\bm{\delta}_{\bm{\theta}_{it_s}},\bm{\theta}_{it_s}\}_{i=1}^n$ and all DMs know about $\{\{\bm{\delta}_{\bm{\theta}_{it}},\bm{\theta}_{it}\}_{i=1}^n\}_{t=1}^{t_s-1}$.

Following \citet{harris2022strategic}, we assume that the potential outcome of an agent, $Y_{\mathit{it}}\in\mathbb{R}$, e.g., their future GPA, in any environment $i$ is a linear function of their covariates: $Y_{\mathit{it}}:=\X^{\top}_t\bm{\theta}_i^*+O_{\mathit{it}}$ where $\bm{\theta}_i^*\in\mathbb{R}^m$ is the true causal parameter vector that maps the covariates $\X_t$ to the outcome $Y_{\mathit{it}}\in\mathbb{R}$ and $O_{\mathit{it}}$ is the unobserved noise.
In practice, the DMs lack access to the true $\bm{\theta}^*_i$, so each of them bases their decision on the predicted outcome $\hat{y}_{it}=\mathbf{x}^\top\bm{\theta}_{it}$ using the agent's covariates $\X_t=\mathbf{x}$ where $\bm{\theta}_{it}$ is a parameter estimate. 
Finally, we assume that the covariates $\X_t$ is a linear function of an agent's baseline and their strategic improvement, namely $\X_t:=\B_t+\mathcal{E}_t\mathbf{a}_t$ where the conversion matrix $\mathcal{E}_t\in\mathbb{R}^{m\times d}$ translates their strategic action $\mathbf{a}_t\in\mathbb{R}^d$ into the improvement upon the baseline $\B_t\in\mathbb{R}^m$.
The unobserved noise $O_{it}$ is correlated with the agent's baseline $\B_t$ and is specific to the environment $i$, which can be due to the private type of each agent, e.g., a student's socioeconomic background, that can further influence their academic baseline $\B_t$ and their cultural fit $O_{it}$ in this environment.

\textbf{Agents' utilities. } Since each agent has access to multiple predicted outcomes $\hat{y}_{it}$ (where $i\in[n]$) alongside their preferred environments, we assume that the agent $t$ aims at maximising the following utility function 
\begin{equation}\label{eq:agent-utility} 
    u(\mathbf{a}_t) := \sum^n_{i=1}\gamma_{it}\hat{y}_{it}({\bf a}_t)
    -\frac{1}{2}\|\mathbf{a}_t\|^2_2  \quad \text{with } \gamma_{it} \geq 0,\; \forall i,t
\end{equation}
in each round $t$ after being informed of the parameter vectors, where $\{\gamma_{it}\}_{i=1}^n$ represents the preference of this agent.
Unlike previous work \citep{shavit2020causal,harris2022strategic,bechavod2022information}, the utility function \eqref{eq:agent-utility} also involves the agent's preference over multiple DMs.
For any list of parameter vectors $\{\bm{\theta}_{1t},\bm{\theta}_{2t},\ldots,\bm{\theta}_{nt}\}$, it is not difficult to see that the maximiser of \eqref{eq:agent-utility} is $\mathbf{a}_t=\mathcal{E}^\top_t\left(\sum_{i=1}^n\gamma_{it}\bm{\theta}_{it}\right)$; see \apxBestResponse\ for the full proof.

\textbf{Decision makers' objectives. } We assume that the DMs are utility maximisers each of whom aims to maximise the expected future outcome of the agents that comply with their decisions. Without loss of generality, we specify the objective function for an arbitrary DM $i$:
\begin{align}\label{eq:dm-objectives}
\max_{\bm{\theta}_{\mathit{it}}}\; &\mathbb{E}\left[Y_{\mathit{it}}(\{\bm{\theta}_t^{-i},\bm{\theta}_{\mathit{it}}\})\mid Z_t=i\ ;\ \bm{\theta}_{\mathit{it}}\right]
\end{align}
where we use $\{\bm{\theta}_t^{-i},\bm{\theta}_{it}\}$, $\bm{\theta}_t^\text{all}$, or $\{\bm{\theta}_{it}\}_{i=1}^n$ to denote a collection of parameters associated with the deployed selection rules. 
We use the notation $Y_{it}(\{\bm{\theta}_t^{-i},\bm{\theta}_{it}\})$ to highlight that the outcome variable is a function of all parameters $\bm{\theta}_t^\text{all}$ due to agents' strategic behaviour. Furthermore, notice that the expectation also depends on the conditional distribution of the rival DMs' parameters, $p(\bm{\theta}_t^{-i}|Z_t=i,\bm{\theta}_{it})$. More detailed discussion will follow in subsequent sections.

In summary, our approach distinguishes itself from previous work in causal strategic learning mainly by its integration of the selection variable $W_{\mathit{it}}$ within a competitive context involving multiple DMs. \Cref{fig:causal-graphs} illustrates the causal graphs associated with our novel setting.

\begin{figure}
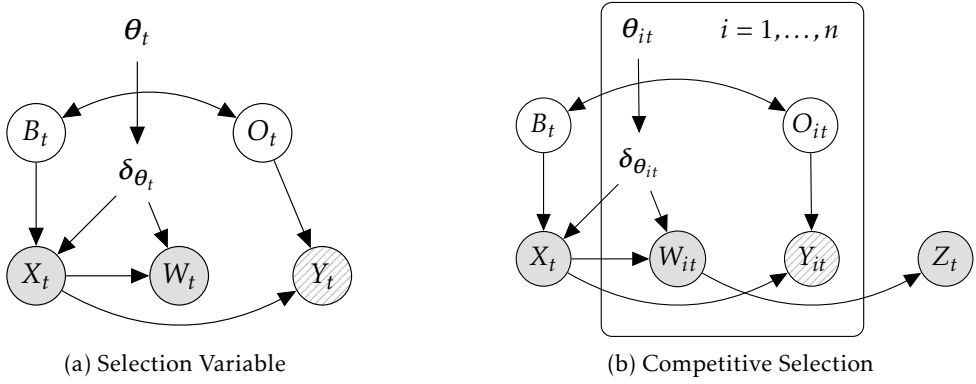

  \begin{subfigure}{0.45\textwidth}  
  \centering
  \resizebox{!}{0.6\textwidth}{
  \tikz{
    % nodes
     \node[obs] (vx) {$\X_t$};%
     \node[latent,above=of vx] (vb) {$\B_t$}; 
     \node[latent,draw opacity=0, above right = of vb] (vtheta) {$\bm{\theta}_t$};
     \node[obs, right=of vx] (vw) {$W_t$};     
     \node[latent,draw opacity=0, above right = of vx] (vdelta) {$\bm{\delta}_{\bm{\theta}_t}$};
     \node[latent, right =of vb, xshift=1cm] (vO) {$O_t$};
     \node[latent, below=of vO, right = of vw, pattern=north east lines, pattern color=gray!50] (vy) {$Y_t$};
     % \node[obs, below = of vw](vz) {$Z_t$};
     
     \edge {vb, vdelta} {vx} ;
     \edge {vx} {vw};
     \path (vx) edge[bend right, ->] (vy);
     %\edge {vx} {vy};
     \edge {vtheta} {vdelta};
     \edge {vdelta} {vw};
     % \edge {vw} {vz};
     % \path (vb) edge[bend right, in=100, out=100, <->] (vO    );
     \path (vb) edge[bend right, in=150, out=30, <->] (vO); 
     % \path (vtheta) edge[bend right, in=100, out=100, <->] (vO);
     \edge {vO} {vy};
     }}
     \caption{Selection Variable}
     \label{fig:causal-single-env}
  \end{subfigure}
  \begin{subfigure}{0.45\textwidth}  
    \centering
    \resizebox{!}{0.6\textwidth}{
    \tikz{
    % nodes
     \node[obs] (vx) {$\X_t$};%
     \node[latent,above=of vx] (vb) {$\B_t$}; 
     \node[latent,draw opacity=0, above right = of vb] (vtheta) {$\bm{\theta}_{it}$};
     \node[latent,draw opacity=0, below right = of vb, above right = of vx] (vdelta) {$\boldsymbol{\delta}_{\boldsymbol{\theta}_{it}}$};
     \node[obs, below=of vdelta, right=of vx](vw) {$W_{it}$};     
     \node[latent, below right =of vtheta, xshift=1cm] (vO) {$O_{it}$};
     \node[latent, below=of vO, right = of vw, pattern=north east lines, pattern color=gray!50] (vy) {$Y_{\mathit{it}}$};
     \node[obs, right = of vy](vz) {$Z_t$};
     
     \edge {vb, vdelta} {vx} ;
     \edge {vx} {vw};
     \path (vx) edge[bend right, ->] (vy);
     % \edge {vx} {vy};
     \edge {vtheta} {vdelta};
     \edge {vdelta} {vw};
     %\edge {vw} {vz};
     \path (vw) edge[bend right, ->] (vz);
     % \path (vb) edge[bend right, in=100, out=100, <->] (vO    );
     \path (vb) edge[bend right, in=150, out=30, <->] (vO); 
     % \path (vtheta) edge[bend right, in=100, out=100, <->] (vO);
     \edge {vO} {vy};
     
     \plate [inner sep=.2cm, xshift=.1cm, yshift=-.2cm] {plate-e1}{(vtheta)(vdelta)(vw)(vO)(vy)}{};
     \node[above=of vO, xshift=-.4cm, yshift=-.4cm] (i) {$i=1,\ldots,n$}; 
     }}
     \caption{Competitive Selection}
     \label{fig:causal-multi-env}
  \end{subfigure}
     \caption{Causal graphs for our settings with selection variable $W_t$ (left) and multiple decision makers (right). The patterned nodes $Y_t$ and $Y_{it}$ represent the partial observability nature of these variables.} 
      % The causal graph (\subref{fig:causal-multi-env}) only shows the case where all students have homogeneous compliance.}
     \label{fig:causal-graphs}
\end{figure}

\section{Main Results}
\label{sec:main-results}
Our main results are based on the following two homogeneity assumptions on the strategic responses of agents.

\begin{enumerate}[align=left]
    \item[\textbf{H1.}] \emph{Homogeneous effort conversion}: for all $t\in[T]$, $\mathcal{E}_t=\homomatrix$ for some conversion matrix $\homomatrix$.
    \item[\textbf{H2.}] \emph{Homogeneous preference and compliance}: for each DM $i$ and for all $t\in[T]$, $\gamma_{it}=\gamma_{i}$ for some $\gamma_i \geq 0$ and $Z_t\indep\{\X_t,\B_t\}\mid\{W_{it}\}_{i=1}^n$.
\end{enumerate}
The former condition suggests that all agents exhibit the same strategic response regardless of their individual baselines, i.e., they only differ by their baselines $\B_t$, while the latter condition implies that all agents share the same preference over the $n$ DMs, and any two agents will demonstrate identical compliance behavior based on the given set of selection statuses $\{w_{it}\}_{i=1}^n$.
In the context of college admission, the common preference $\{\gamma_i\}_{i=1}^n$ may naturally align with the prestige of the colleges. Intuitively, these two assumptions suggest that while strategic responses may encompass both common and idiosyncratic elements, we solely concentrate on the common part, simplifying our theoretical analyses at the cost of potentially overlooking significant individual variations of agents' strategic behaviour. 

Our work thus concerns itself with a \emph{partially} heterogeneous setting. On the contrary, when completely heterogeneous agents are subjected to selection, many variables are rendered dependent; see, e.g., \cref{eq:conditional-objective}, making our theoretical analyses much more cumbersome. However, such homogeneity assumptions do not undermine the impact of selection that we discuss throughout this section since it is likely to persist in a more complex setting. This impact includes the trade-off between choosing capable agents and providing a maximal incentive, e.g., \Cref{col:max-improvement-s}, and the selection bias, e.g., \Cref{theorem:local-exo-s}.
To understand the impact of these two assumptions, we provide the sensitivity analyses in \apxAnalysesOfAssumptions. 
A relaxation of these assumptions will be considered in future work.

\subsection{Impact of Selection Procedure}
\label{subsec:single-selection}

To illustrate the impact of the selection procedure, we commence with the single DM setting, i.e. $n=1$.
For simplicity, we omit the subscript $i$ and assume that all agents comply with the decisions they receive. 
\Cref{fig:causal-single-env} shows the associated causal graph. 
The objective \eqref{eq:dm-objectives} for single DM then becomes
\begin{eqnarray}\label{eq:conditional-objective}
\mathbb{E}\left[Y_t\mid W_t=1;\boldsymbol{\theta}_t\right] &=& \mathbb{E}\left[\B^\top_t\bm{\theta}^*+O_t\mid W_t=1;\bm{\theta}_t\right] + \mathbb{E}\left[\left(\mathcal{E}_t\mathbf{a}_t\right)^\top\bm{\theta}^*\mid W_t=1;\bm{\theta}_t\right] \nonumber \\
&=& \text{cBP}(\bm{\theta}_t) + \text{cPI}(\bm{\theta}_t),
\end{eqnarray}%
where the first and second terms on the right-hand side are referred to as the \textit{conditional base performance} (cBP) and \textit{conditional performance improvement} (cPI), respectively. 
The former pertains to the agent outcome without strategic behavior, while the latter represents the improvement achieved through strategic behavior.
Both cBP and cPI are defined as expected values over the admitted agents, making them functions of the selection parameter $\bm{\theta}_t$. 
Additionally, the complexity of cBP and cPI relies on the chosen selection function $\bm{\delta}_{\bm{\theta}_t}$.
Our objective \eqref{eq:conditional-objective} differs from the marginal expected outcome commonly studied in prior work, where no selection occurs \citep{shavit2020causal, harris2022strategic, bechavod2022information}:
\begin{align}\label{eq:marginal-objective}
\mathbb{E}\left[Y_t;\ \boldsymbol{\theta}_t\right]
&= \mathbb{E}\big[\B^\top_t\bm{\theta}^*+O_t\big] + \mathbb{E}\big[\left(\mathcal{E}_t\mathbf{a}_t\right)^\top\bm{\theta}^*;\ \bm{\theta}_t\big].
\end{align}
We refer to the two terms on the right-hand side of \eqref{eq:marginal-objective} similarly as the \textit{marginal base performance} (mBP) and \textit{marginal performance improvement} (mPI). Observe that maximising \eqref{eq:marginal-objective} amounts to maximising only the mPI, whereas maximising our objective \eqref{eq:conditional-objective} might involve a trade-off between maximising cBP and cPI, as shown below. 

\textbf{Utility maximisation. } We further impose the following two assumptions, exclusively for utility maximisation:

\begin{enumerate}[align=left]
    \item[\textbf{S1}.]\textbf{Linear effect}: The selection yields a linear structure of cBP as follows: cBP$(\bm{\theta}_t) = \vecalpha^\top\bm{\theta}_t + \beta$ for some vector $\vecalpha\in\mathbb{R}^m$ and constant $\beta\in\mathbb{R}$;

    \item[\textbf{S2}.]\textbf{Bounded parameters}: For all $\bm{\theta}_t, \|\bm{\theta}_t\|_2\leq1$ \citep{shavit2020causal}. 
\end{enumerate}

On the one hand, Assumption S1 allows us to further simplify the analysis of the DM's behaviour and to further simplify the demonstration of the trade-off between choosing agents and incentivising them, which we discuss later. Even when S1 does not hold, this will only complicate the analysis without changing the implication resulted from \Cref{col:max-improvement-s} and \Cref{col:bounded-reduction-s}. On the other hand, as $\mathcal{Q}(\bm{\theta}_t)$ is not scale-invariant, we adopt Assumption S2, which was also used by previous work such as \citet{shavit2020causal} and \citet{bechavod2022information}. As a result, this allows us to restrict $\bm{\theta}_t$ to some arbitrarily small region and justifies a linear approximation to cBP$(\bm{\theta}_t)$. Nevertheless, we acknowledge the limitation of these assumptions and provide a more detailed discussion in \apxAnalysesOfAssumptions.

We denote the objective \eqref{eq:conditional-objective} by $\mathcal{Q}(\bm{\theta}_t)$ and expand it as
{\begin{align*}
\mathcal{Q}(\bm{\theta}_t) := \text{cBP}(\bm{\theta}_t) + \text{cPI}(\bm{\theta}_t) = \big(\vecalpha^\top\bm{\theta}_t + \beta\big) + \gamma\bm{\theta}_t^\top\homomatrix\homomatrix^\top\thetastar
\end{align*}}%
where we used the fact that $\mathbf{a}_t^\top=\gamma\bm{\theta}_t^\top\mathcal{E}_t$ and $\mathcal{E}_t=\homomatrix$ as a  result of Assumption (H1). 
Then, we formally state the optimal behaviour of the DM with the next theorem.

\begin{theorem}[Bounded optimum]\label{theorem:bounded-optim-s}
Suppose Assumptions (H1), (H2), (S1), and (S2) hold. Then, the optimal parameter vector for the DM can be expressed as 
$$\thetaao = \frac{\vecalpha+\gamma\homomatrix\homomatrix^\top\bm{\theta}^*}{\|\vecalpha+\gamma\homomatrix\homomatrix^\top\bm{\theta}^*\|_2}.$$
\end{theorem}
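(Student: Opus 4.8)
The plan is to recognise that, after invoking Assumptions (H1) and (S1) to write $\mathcal{Q}(\bm{\theta}_t) = \big(\vecalpha^\top\bm{\theta}_t + \beta\big) + \gamma\bm{\theta}_t^\top\homomatrix\homomatrix^\top\thetastar$, the objective is in fact an \emph{affine} function of the decision variable $\bm{\theta}_t$. The first step is to collect the terms that are linear in $\bm{\theta}_t$: since $\homomatrix\homomatrix^\top\thetastar$ is a fixed vector (it does not depend on $\bm{\theta}_t$), both $\vecalpha^\top\bm{\theta}_t$ and $\gamma\bm{\theta}_t^\top\homomatrix\homomatrix^\top\thetastar$ are linear, so I would define the constant vector $\mathbf{v} := \vecalpha + \gamma\homomatrix\homomatrix^\top\thetastar$ and rewrite the objective compactly as $\mathcal{Q}(\bm{\theta}_t) = \mathbf{v}^\top\bm{\theta}_t + \beta$. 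The additive constant $\beta$ is irrelevant to the location of the maximiser and can be dropped.

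The next step is to recall that, by Assumption (S2), the feasible set is the closed unit ball $\{\bm{\theta}_t : \|\bm{\theta}_t\|_2 \leq 1\}$. Thus the DM's problem reduces to maximising a linear functional over the unit ball, which I would solve by Cauchy--Schwarz: for any feasible $\bm{\theta}_t$ we have $\mathbf{v}^\top\bm{\theta}_t \leq \|\mathbf{v}\|_2\,\|\bm{\theta}_t\|_2 \leq \|\mathbf{v}\|_2$, with equality precisely when $\bm{\theta}_t$ is a nonnegative multiple of $\mathbf{v}$ lying on the boundary, i.e.\ when $\bm{\theta}_t = \mathbf{v}/\|\mathbf{v}\|_2$. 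Substituting back the definition of $\mathbf{v}$ yields the claimed form of $\thetaao$. (Equivalently, one could set up the Lagrangian for the constraint $\|\bm{\theta}_t\|_2^2 \leq 1$ and solve the KKT stationarity condition $\mathbf{v} = 2\lambda\bm{\theta}_t$, which forces $\bm{\theta}_t \propto \mathbf{v}$ and, together with the active constraint $\|\bm{\theta}_t\|_2 = 1$, gives the same normalised solution.)

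There is no genuinely hard step here, as the problem is a linear program over a Euclidean ball; the only point requiring care is the \emph{degenerate case} $\mathbf{v} = \mathbf{0}$, i.e.\ $\vecalpha = -\gamma\homomatrix\homomatrix^\top\thetastar$. In that case $\mathcal{Q}$ is constant over the feasible set, every parameter vector is optimal, and the normalisation in the statement is ill-defined; I would note this explicitly and assume $\mathbf{v} \neq \mathbf{0}$, which is the generic situation and is implicitly presumed by the form of $\thetaao$. Under this mild nondegeneracy the maximiser is unique and equals the stated expression, completing the proof.
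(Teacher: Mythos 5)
Your proof is correct and follows essentially the same route as the paper: collect the linear terms into $\vecalpha+\gamma\homomatrix\homomatrix^\top\bm{\theta}^*$ and maximise the resulting affine function over the unit ball (the paper does this via a small lemma proved by KKT or a geometric cosine argument, which is equivalent to your Cauchy--Schwarz step). Your explicit treatment of the degenerate case $\vecalpha+\gamma\homomatrix\homomatrix^\top\bm{\theta}^*=\mathbf{0}$ is a minor point of care the paper omits, but it does not change the substance of the argument.
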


Since $\mathcal{Q}(\bm{\theta}_t)$ is linear in $\bm{\theta}_t$, the DM can obtain an un-normalised version of $\thetaao$ by regressing $\mathcal{Q}(\bm{\theta}_t)$ onto $\bm{\theta}_t$ using ordinary least squares (OLS) regression. As shown in \Cref{theorem:bounded-optim-s}, the optimal selection parameter $\thetaao$ is determined by the coefficients  $\vecalpha$ and $\homomatrix\homomatrix^\top\thetastar$ from cBP and cPI, respectively. 
Intuitively, this implies that an optimal selection rule might be a trade-off between selecting the best agents and incentivising agents to maximise their improvement.
\figTradeOff\ (in the supplementary material) illustrates when this trade-off happens and there exists no $\bm{\theta}_t$ for which both cBP$(\bm{\theta}_t)$ and cPI$(\bm{\theta}_t)$ are maximised simultaneously. 
The next corollary formalises this intuition.

\begin{corollary}[Maximum improvement]\label{col:max-improvement-s}
Suppose Assumptions (H1), (H2), (S1), (S2) hold, and $\gamma>0$. 
If $\vecalpha=(k-\gamma)\homomatrix\homomatrix^\top\bm{\theta}^*$ for some $k>0$, 
then the maximiser of $\mathcal{Q}(\bm{\theta}_t)$ is also the maximiser of cPI$(\bm{\theta}_t)$.
\end{corollary}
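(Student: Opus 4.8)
The plan is to compute both maximisers explicitly as directions on the unit sphere and show they coincide under the stated hypothesis. Since Assumption (S2) confines $\bm{\theta}_t$ to the unit ball $\|\bm{\theta}_t\|_2\le 1$, and both $\mathcal{Q}$ and $\text{cPI}$ are linear functionals of $\bm{\theta}_t$, each is maximised at the normalised direction of its linear coefficient. \Cref{theorem:bounded-optim-s} already supplies the maximiser of $\mathcal{Q}$, namely $\thetaao$, so the only remaining ingredient is the maximiser of $\text{cPI}$.

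First I would record that $\text{cPI}(\bm{\theta}_t)=\gamma\,\bm{\theta}_t^\top\homomatrix\homomatrix^\top\thetastar$ is linear in $\bm{\theta}_t$ with coefficient vector $\gamma\homomatrix\homomatrix^\top\thetastar$. By Cauchy--Schwarz, maximising this over the unit ball yields the maximiser
$$\bm{\theta}_t^{\text{cPI}}=\frac{\homomatrix\homomatrix^\top\thetastar}{\|\homomatrix\homomatrix^\top\thetastar\|_2},$$
where the positive scalar $\gamma$ cancels in the normalisation (here using $\gamma>0$). This implicitly requires the non-degeneracy condition $\homomatrix\homomatrix^\top\thetastar\neq\mathbf{0}$, which is in any case already needed for the closed form in \Cref{theorem:bounded-optim-s} to be well defined.

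Next I would substitute the hypothesis $\vecalpha=(k-\gamma)\homomatrix\homomatrix^\top\thetastar$ into the expression $\thetaao=(\vecalpha+\gamma\homomatrix\homomatrix^\top\thetastar)/\|\vecalpha+\gamma\homomatrix\homomatrix^\top\thetastar\|_2$ from \Cref{theorem:bounded-optim-s}. The numerator collapses, since $\vecalpha+\gamma\homomatrix\homomatrix^\top\thetastar=k\,\homomatrix\homomatrix^\top\thetastar$, giving $\thetaao=k\homomatrix\homomatrix^\top\thetastar/\|k\homomatrix\homomatrix^\top\thetastar\|_2$.

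Finally, the step requiring care is pulling the scalar $k$ through the norm: because $\|k\,\homomatrix\homomatrix^\top\thetastar\|_2=|k|\,\|\homomatrix\homomatrix^\top\thetastar\|_2$, the sign of $k$ matters, and it is precisely the hypothesis $k>0$ that yields $|k|=k$ and lets the factor cancel, producing $\thetaao=\homomatrix\homomatrix^\top\thetastar/\|\homomatrix\homomatrix^\top\thetastar\|_2=\bm{\theta}_t^{\text{cPI}}$. I expect this sign bookkeeping to be the only genuine subtlety: were $k<0$, the two maximisers would be antipodal rather than equal. With $k>0$ the maximiser of $\mathcal{Q}$ and the maximiser of $\text{cPI}$ coincide, which is the claim.
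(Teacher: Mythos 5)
Your proposal is correct and follows essentially the same route as the paper: the paper likewise identifies the cPI maximiser as the normalised direction of $\homomatrix\homomatrix^\top\thetastar$ (via its linear-optimum lemma, equivalent to your Cauchy--Schwarz step), substitutes the hypothesis into the closed form of $\thetaao$ from \Cref{theorem:bounded-optim-s} so the numerator collapses to $k\,\homomatrix\homomatrix^\top\thetastar$, and uses $k>0$ (written there via $sgn(k)$) to cancel the scalar. Your sign bookkeeping and the implicit non-degeneracy remark match the paper's treatment exactly.
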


Generally speaking, the vector $\vecalpha$ represents the causal mechanism translating $\bm{\theta}_t$ into the base performance of the chosen agents, i.e., cBP$(\bm{\theta}_t)$, whereas the vector $\homomatrix\homomatrix^\top\thetastar$ denotes the causal mechanism translating $\bm{\theta}_t$ into the performance improvement of the selected agents, i.e., cPI$(\bm{\theta}_t)$. 
Hence, $\bm{\theta}_t$ serves not only as a selection parameter but also as the incentive for agents' improvement.
From \Cref{col:max-improvement-s}, when $k>\gamma$, the two aforementioned causal mechanisms align with each other, i.e., $\cos(\vecalpha, \homomatrix\homomatrix^\top\thetastar)=1$, then $\thetaao$ not only selects the best agents (i.e., in terms of cBP) but also is the incentive that maximises their improvement.

\textbf{Safeguarding the social welfare. } There is therefore a possibility that the deployed selection rule may result in undesirable societal outcomes.
For instance, this would involve rejecting agents who, with proper incentivisation, could have been chosen. Another example is when a decision rule selects the best agents but incidentally discourages them from further improvement, which corresponds to the case when $\cos(\thetaao,\homomatrix\homomatrix\thetastar)=-1$.
To prevent such situations, a benevolent regulator may opt to enforce a regulation such that a decision rule must result in $\cos(\thetaao,\homomatrix\homomatrix^\top\thetastar)>0$, thereby guaranteeing that the optimal parameters $\thetaao$ do not lead to a decline in the selected agents' outcome. 

In addition to the inherent trade-off induced by the selection process, \Cref{theorem:bounded-optim-s} also shows that $\thetaao$ differs from the true causal parameters $\thetastar$ in general. Relying on a selection criterion that uses the optimal parameters $\thetaao$ results in consistently inaccurate predictions of agents' outcomes. This unjustly reduces an agent's admission chance, compared to when the causal parameters were employed instead.
The following corollary then outlines conditions under which the reduction in an arbitrary agent's admission chance can be bounded when DM utilises $\thetaao$ as the selection parameter, instead of the causal parameter $\thetastar$.

\begin{corollary}[Bounded reduction]\label{col:bounded-reduction-s}
Suppose Assumptions (H1), (H2), (S1), (S2) hold, and the DM considers only two choices {$\bm{\theta}_t\in\{\bm{\theta}^*,\bm{\theta}^{\text{AO}}\}$}. Suppose further: 
\begin{enumerate}[label={(\arabic*)}]
    \item {$\|\bm{\theta}^*\|_2 \leq 1$};
    \item {$\vecalpha=(k-\gamma)\homomatrix\homomatrix^\top\bm{\theta}^*$} with {$k,\gamma>0$};
    \item {$\B_t\sim\mathcal{N}(0,\sigma^2I)$} where {$I$} is the identity matrix and {$\sigma\in\mathbb{R}^+$};
    \item The selection function, {$\bm{\delta}(\mathbf{x}; \bm{\theta}_t):=\widetilde{\bm{\delta}}(\hat{y}_t(\bm{\theta}_t))$}, is increasing w.r.t. {$\hat{y}_t$}. In addition, it is Lipschitz continuous, i.e. {$|\widetilde{\bm{\delta}}(\hat{y}) - \widetilde{\bm{\delta}}(\hat{y}^\prime)| \leq L|\hat{y}-\hat{y}^\prime|$} for {$L>0$}.
\end{enumerate}
Then, for any {$M>0$}:
\begin{equation*}
    p\left(\xi(\thetastar)-\xi(\thetaao)>M\right) \leq \Phi\left(\frac{-M/L-\lambda}{\sigma\|\thetaao-\thetastar\|_2}\right),    
\end{equation*}
where {$\xi(\bm{\theta}_t) := p(W_t=1|\B_t;\bm{\theta}_t)$} denotes the admission chance of the agent $t$, $\Phi$ denotes the CDF of $\mathcal{N}(0,1)$, and {$\lambda := \gamma\big((\thetaao)^\top\homomatrix\homomatrix^\top\thetaao - (\thetastar)^\top\homomatrix\homomatrix^\top\thetastar\big) \geq 0$}.
\end{corollary}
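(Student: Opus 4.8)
The plan is to reduce the statement about admission chances $\xi$ to a one‑sided Gaussian tail bound on a linear functional of the baseline $\B_t$, and then to verify the sign claim $\lambda\ge 0$ separately. First I would substitute the agent's best response. Specialising the maximiser from \apxBestResponse\ to the single‑DM case under (H1) gives $\mathbf{a}_t=\gamma\homomatrix^\top\bm{\theta}_t$, so that $\X_t=\B_t+\gamma\homomatrix\homomatrix^\top\bm{\theta}_t$ and the predicted outcome driving the selection function is $\hat{y}_t(\bm{\theta}_t)=\X_t^\top\bm{\theta}_t=\B_t^\top\bm{\theta}_t+\gamma\,\bm{\theta}_t^\top\homomatrix\homomatrix^\top\bm{\theta}_t$. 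Since the best response is deterministic given $\B_t$ and $\bm{\theta}_t$, we have $\xi(\bm{\theta}_t)=\widetilde{\bm{\delta}}(\hat{y}_t(\bm{\theta}_t))$. Subtracting the two candidates and recognising the definition of $\lambda$, the quadratic terms collapse into $\hat{y}_t(\thetastar)-\hat{y}_t(\thetaao)=\B_t^\top(\thetastar-\thetaao)-\lambda$.

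Next I would exploit monotonicity to obtain a one‑sided event inclusion. For $M>0$, because $\widetilde{\bm{\delta}}$ is increasing, the event $\{\xi(\thetastar)-\xi(\thetaao)>M\}$ forces $\hat{y}_t(\thetastar)>\hat{y}_t(\thetaao)$ (otherwise the left‑hand side is $\le 0<M$); on that event the Lipschitz bound gives $M<\xi(\thetastar)-\xi(\thetaao)\le L\bigl(\hat{y}_t(\thetastar)-\hat{y}_t(\thetaao)\bigr)$, hence $\hat{y}_t(\thetastar)-\hat{y}_t(\thetaao)>M/L$. Therefore $\{\xi(\thetastar)-\xi(\thetaao)>M\}\subseteq\{\B_t^\top(\thetastar-\thetaao)>M/L+\lambda\}$, and I pass to probabilities. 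It is worth noting that using the increasing property (rather than the plain two‑sided Lipschitz estimate) is precisely what yields a single $\Phi$ term instead of a two‑sided tail. Under condition (3), $\B_t^\top(\thetastar-\thetaao)\sim\mathcal{N}\bigl(0,\sigma^2\|\thetaao-\thetastar\|_2^2\bigr)$, so standardising and using $1-\Phi(z)=\Phi(-z)$ produces exactly $\Phi\!\left(\frac{-M/L-\lambda}{\sigma\|\thetaao-\thetastar\|_2}\right)$.

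It remains to prove the asserted sign $\lambda\ge 0$, which is where conditions (1) and (2) enter. Under condition (2), \Cref{theorem:bounded-optim-s} collapses to $\thetaao=\homomatrix\homomatrix^\top\thetastar/\|\homomatrix\homomatrix^\top\thetastar\|_2$ (using $k>0$), so the sign of $\lambda$ is the sign of $(\thetaao)^\top\mathbf{G}\thetaao-(\thetastar)^\top\mathbf{G}\thetastar$ with $\mathbf{G}:=\homomatrix\homomatrix^\top\succeq 0$. Writing $\mathbf{G}$ in its eigenbasis and letting $\mu$ denote its spectrum weighted by the squared coordinates of $\thetastar$, the Rayleigh quotient at $\thetaao$ equals $\mathbb{E}[\mu^3]/\mathbb{E}[\mu^2]$, while $(\thetastar)^\top\mathbf{G}\thetastar\le \mathbb{E}[\mu]$ after invoking $\|\thetastar\|_2\le 1$ from condition (1) to bound the quadratic form by its Rayleigh quotient. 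The claim thus reduces to $\mathbb{E}[\mu^3]/\mathbb{E}[\mu^2]\ge\mathbb{E}[\mu]$, i.e. $\mathrm{Cov}(\mu^2,\mu)\ge 0$, which holds since $\mu\mapsto\mu^2$ and $\mu\mapsto\mu$ are comonotone on $\mu\ge 0$ (Chebyshev's sum inequality).

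I expect this last monotone‑spectrum step to be the main obstacle: the probability bound itself is a routine substitution‑plus‑standardisation argument, but establishing $\lambda\ge 0$ is the one place where the alignment hypothesis (2) and the norm constraint (1) are genuinely needed, and it requires the correlation/Rayleigh‑quotient comparison rather than a direct computation.
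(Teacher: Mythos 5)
Your proof is correct, and the tail-bound half is essentially the paper's own argument: the paper likewise expands $\hat{y}(\bm{\theta}_t)=\B_t^\top\bm{\theta}_t+\gamma\bm{\theta}_t^\top\homomatrix\homomatrix^\top\bm{\theta}_t$, splits on the sign of $\hat{y}(\thetastar)-\hat{y}(\thetaao)$ using the monotonicity of $\widetilde{\bm{\delta}}$ (your event-inclusion phrasing is a tidier packaging of that case analysis), applies the Lipschitz bound, and standardises the Gaussian variable $\B_t^\top(\thetastar-\thetaao)$ to get the one-sided $\Phi$ term. Where you genuinely diverge is the sign claim $\lambda\ge 0$. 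The paper proves it as a separate lemma by a short variational argument: condition (2) makes $\thetaao$ the maximiser of $\mathrm{cPI}$ over the unit ball (\Cref{col:max-improvement-s}), and condition (1) places $\thetastar$ inside that ball, so $\mathrm{cPI}(\thetastar)\le\mathrm{cPI}(\thetaao)$, i.e. $\|\homomatrix^\top\thetastar\|_2^2\le(\homomatrix^\top\thetaao)^\top(\homomatrix^\top\thetastar)$; combining this with the elementary inequality $2\mathbf{u}^\top\mathbf{v}\le\|\mathbf{u}\|_2^2+\|\mathbf{v}\|_2^2$ immediately yields $\|\homomatrix^\top\thetastar\|_2^2\le\|\homomatrix^\top\thetaao\|_2^2$, which is exactly $\lambda\ge 0$. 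You instead substitute the closed form $\thetaao=\homomatrix\homomatrix^\top\thetastar/\|\homomatrix\homomatrix^\top\thetastar\|_2$, pass to the eigenbasis of $\mathbf{G}=\homomatrix\homomatrix^\top$, and reduce the claim to $\mathbb{E}[\mu^3]\ge\mathbb{E}[\mu^2]\,\mathbb{E}[\mu]$ for the spectrum weighted by the squared coordinates of $\thetastar$, settled by Chebyshev's association inequality. Both routes are valid, and yours additionally identifies the size of the gap with a covariance of the spectral measure. However, your closing assessment --- that this step resists a ``direct computation'' and requires the Rayleigh-quotient comparison --- is not accurate: the paper's variational argument is precisely such a direct two-line computation, and because it uses only the maximiser property of $\thetaao$ rather than its explicit spectral form, it is the same argument the paper reuses essentially verbatim for the competitive-selection extension (\Cref{col:bounded-reduction-m}), where the objective acquires extra cross terms from $\bm{\theta}_t^{-i}$ that would make a coordinate-level spectral computation noticeably heavier.
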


Specifically, this corollary rewrites the admission probability of an agent in terms of their baseline $B_t$ and denotes it with $\xi(\bm{\theta}_t)$. When the counterfactual quantity $\xi(\thetastar)-\xi(\thetaao)$ is positive for an agent, it implies that the admission chance of this agent will be reduced if the DM employs $\thetaao$ instead of $\thetastar$. As a result, this corollary gives us an upper bound for the probability of this reduction for an unknown agent.

\textbf{Causal parameters estimation. } As shown in \Cref{col:max-improvement-s} and \Cref{col:bounded-reduction-s}, it is necessary for the DM to know the incentivising causal mechanism, $\homomatrix\homomatrix^\top\thetastar$, in order to comply with the regulations, and for the regulator to know $\thetastar$ in order to verify the conditions of \Cref{col:bounded-reduction-s}.
Unfortunately, unbiased estimation of $\thetastar$ from observational data alone is impossible without imposing further assumptions on the data-generating process \citep{peters2017elements}.
In our case, unobserved common causes of the outcome $Y_t$ and covariates $\X_t$ create dependencies between $\X_t$ and $O_t$, rendering it impossible to estimate $\thetastar$ consistently via the OLS regression. 
To this end, \citet{harris2022strategic} proposes to view $\bm{\theta}_{t}$ as an instrumental variable (IV) and subsequently applies a two-stage least square (2SLS) regression \cite{cameron2005microeconometrics} to estimate $\thetastar$.
However, existing IV regression approaches are not suitable for our setting because the DM can only observe the outcomes of the selected agents, violating the unconfoundedness assumption of the IV; see \apxInvalidIV\ for the proof.

In what follows, we present an alternative approach to estimate $\thetastar$. This approach can be readily adapted to directly estimate $\homomatrix\homomatrix^\top\thetastar$. 
To that end, we first consider a ranking-based selection rule that is commonly deployed in practice.

\begin{definition}[Ranking selection]\label{def:ranking-selection} 
The DM selects an agent $t$ based on their relative ranking compared to other agents who are subject to the same selection parameters $\bm{\theta}_t$. Specifically, $$\bm{\delta}_{\bm{\theta}_t}(\mathbf{x}) = p\left(\X^\top_t\bm{\theta}_t\leq\mathbf{x}^\top\bm{\theta}_t\right) = \operatorname{CDF}_{\X^\top_t\bm{\theta}_t}\left(\mathbf{x}^\top\bm{\theta}_t\right).$$
\end{definition}

Based on this selection rule, the higher an agent's evaluation (relative to their peers) the more likely they will be selected. Note that in this work, we do not restrict the DM to the ranking selection rule for utility maximisation. This ranking selection rule is only provided so that the DM can retrieve the true causal parameters, which are useful for designing subsequent selection rules. The next theorem provides an unbiased estimate of the true causal parameter $\thetastar$ in our setting with a selection variable.

\begin{theorem}[Local exogeneity]\label{theorem:local-exo-s}
Under Assumptions (H1) \& (H2), if there exists a pair of rounds $t$ and $t^\prime$ such that $\bm{\theta}_t=k\bm{\theta}_{t^\prime}$ for some $k>0$, then we have: $$\mathbb{E}\left[Y_t \,\vert\, W_t=1\ ;\  \bm{\theta}_t\right] - \mathbb{E}\left[Y_{t^\prime} \,\vert\, W_{t^\prime}=1\ ;\ \bm{\theta}_{t^\prime}\right] 
= \Big(\mathbb{E}\left[\X_t \,\vert\, W_t=1\ ;\  \bm{\theta}_t\right]-\mathbb{E}\left[\X_{t^\prime} \,\vert\, W_{t^\prime}=1\ ;\  \bm{\theta}_{t^\prime}\right]\Big)^\top\bm{\theta}^*.$$
\end{theorem}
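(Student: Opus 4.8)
The plan is to reduce the claim to a single statement about the selection-induced bias of the noise, and then to exploit the scale invariance built into the ranking rule of \Cref{def:ranking-selection}. Writing $Y_t=\X_t^\top\bm{\theta}^*+O_t$ and using linearity of the conditional expectation, I would first decompose
\[
\mathbb{E}[Y_t\mid W_t=1;\bm{\theta}_t]=\mathbb{E}[\X_t\mid W_t=1;\bm{\theta}_t]^\top\bm{\theta}^*+\mathbb{E}[O_t\mid W_t=1;\bm{\theta}_t],
\]
and analogously for round $t'$. Subtracting the two identities, the asserted equation is equivalent to the single claim that the conditional noise terms cancel, i.e. $\mathbb{E}[O_t\mid W_t=1;\bm{\theta}_t]=\mathbb{E}[O_{t'}\mid W_{t'}=1;\bm{\theta}_{t'}]$. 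Everything then reduces to proving this invariance under the hypothesis $\bm{\theta}_t=k\bm{\theta}_{t'}$ with $k>0$.

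Next I would rewrite each conditional noise term via Bayes' rule. Since $W_t\sim\mathrm{Bernoulli}(\bm{\delta}_{\bm{\theta}_t}(\X_t))$ depends on the agent only through $\X_t$, the tower rule yields
\[
\mathbb{E}[O_t\mid W_t=1;\bm{\theta}_t]=\frac{\mathbb{E}\!\left[O_t\,\bm{\delta}_{\bm{\theta}_t}(\X_t)\right]}{\mathbb{E}\!\left[\bm{\delta}_{\bm{\theta}_t}(\X_t)\right]}.
\]
Now I bring in the structural facts. Under (H1)--(H2) the best response is the deterministic, agent-independent shift $\mathcal{E}_t\mathbf{a}_t=\gamma\homomatrix\homomatrix^\top\bm{\theta}_t$, so $\X_t=\B_t+\gamma\homomatrix\homomatrix^\top\bm{\theta}_t$ and the score is $\X_t^\top\bm{\theta}_t=\B_t^\top\bm{\theta}_t+c_t$ with $c_t:=\gamma\bm{\theta}_t^\top\homomatrix\homomatrix^\top\bm{\theta}_t$ a constant. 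For the ranking rule, $\bm{\delta}_{\bm{\theta}_t}(\X_t)=F_{\X_t^\top\bm{\theta}_t}(\X_t^\top\bm{\theta}_t)$, and the constant shift $c_t$ cancels between the CDF and its argument, leaving $\bm{\delta}_{\bm{\theta}_t}(\X_t)=F_{\B_t^\top\bm{\theta}_t}(\B_t^\top\bm{\theta}_t)$, a function of $\B_t$ alone.

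The heart of the argument, and the step I expect to be the main obstacle to phrase cleanly, is the scale invariance of this rank. Because $\bm{\theta}_t=k\bm{\theta}_{t'}$ with $k>0$, we have $\B^\top\bm{\theta}_t=k\,\B^\top\bm{\theta}_{t'}$, and since multiplying a scalar random variable by a positive constant leaves the value of its own CDF unchanged, $F_{\B^\top\bm{\theta}_t}(\B^\top\bm{\theta}_t)=F_{\B^\top\bm{\theta}_{t'}}(\B^\top\bm{\theta}_{t'})$ pointwise as functions of $\B$. Thus the selection probability, viewed as a map on baselines, is identical in the two rounds. Finally, appealing to the standing assumption that the pairs $(\B_t,O_t)$ are drawn from a common population law, both the numerator and the denominator of the Bayes expression agree across $t$ and $t'$, giving $\mathbb{E}[O_t\mid W_t=1;\bm{\theta}_t]=\mathbb{E}[O_{t'}\mid W_{t'}=1;\bm{\theta}_{t'}]$ and hence the theorem. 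The subtlety to handle carefully is the interplay of two invariances---translation, coming from the homogeneous agent-independent improvement, and positive scaling, coming from $k>0$---which together neutralise the selection bias on the noise \emph{without} requiring any assumption on the $\B_t$--$O_t$ dependence.
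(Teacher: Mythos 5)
Your proposal is correct and follows essentially the same route as the paper: reduce the claim to cancellation of the conditional noise terms, rewrite the ranking selection probability as a function of the baseline alone (the homogeneous improvement being a constant shift that cancels inside the CDF), and invoke invariance of the rank under positive scaling of $\bm{\theta}$ together with the common population law of $(\B_t, O_t)$. The only difference is one of packaging: the paper proves a distributional lemma --- $p(\B_t \mid W_t=1)$ and $p(O_t \mid W_t=1)$ are invariant across the two rounds, established via Bayes' theorem --- whereas you work directly with the expectation ratio $\mathbb{E}\left[O_t\,\bm{\delta}_{\bm{\theta}_t}(\X_t)\right]/\mathbb{E}\left[\bm{\delta}_{\bm{\theta}_t}(\X_t)\right]$, which is all the theorem actually requires.
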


Intuitively, when all agents exert an equal amount of effort, ranking their covariates $\X_t$ is equivalent to ranking their baselines $\B_t$. Therefore, multiplying $\bm{\theta}_t$ by a positive scalar preserves the ranking. 
Consequently, we obtain a linear equation that contains no endogenous noise (from \Cref{theorem:local-exo-s}), allowing for unbiased estimation of the true causal parameters $\thetastar$. Specifically, if we refer to the left-hand side as $\Delta\bar{y}$ and the coefficient on the right-hand side as $\Delta\bar{\mathbf{x}}$, then $\thetastar$ can be estimated by regressing $\Delta\bar{y}$ onto $\Delta\bar{\mathbf{x}}$. 
We refer to this procedure as Mean-shift Linear Regression (MSLR) and discuss a sample algorithm in \Cref{sec:experiments}.

\subsection{Impact of Competitive Selection}
\label{subsec:competitive-selection}

When there are multiple DMs ($n\geq 2$), selecting an agent becomes competitive as their incentives affect the agent's covariates $\X_t$ simultaneously. Also, whether an agent complies with any DM is also influenced by other DMs' decisions. Consequently, additional assumptions are required to safeguard the agent's welfare as before. 

We assume each DM aims at unilaterally maximising the expected outcome of their own agents. We denote the objective of DM $i$ as $\max_{\bm{\theta}_{it}}\mathcal{Q}_i\left(\bm{\theta}_{t}^{all}\right)=\max_{\bm{\theta}_{it}} \mathbb{E}\left[Y_{it}\mid Z_t=i ; \bm{\theta}^{\text{all}}_t\right]$ and expand the expectation as
    \begin{align}\label{eq:cond-comp-obj}
    & \mathbb{E}\big[\B^\top_t\bm{\theta}^*_i + O_{it} \mid Z_t=i ; \bm{\theta}^{\text{all}}_t\big] + \big(\sum^n_{j=1}\gamma_j\bm{\theta}_{jt}\big)^\top\homomatrix\homomatrix^\top\bm{\theta}_i^*
    \end{align}%
where the first and second terms are denoted similarly as $\text{cBP}_i(\bm{\theta}_t^\text{all})$ and $\text{cPI}_i(\bm{\theta}_t^\text{all})$, respectively, and $\mathbf{a}_t^\top=(\sum_{j=1}^n\gamma_{jt}\bm{\theta}_{jt})^\top\homomatrix$ is again the agents' optimal strategic action.
We highlight that the objective \eqref{eq:cond-comp-obj} depends not only on $\bm{\theta}_{it}$ but also on $\bm{\theta}_t^{-i}$ due to the interaction between DMs via competitive selection. 
As a result, $\mathcal{Q}_i(\{\bm{\theta}_{it},\bm{\theta}_t^{-i}\})$ can be seen as a family of objective functions parameterised by $\bm{\theta}_t^{-i}$. 
When DM $i$ is an expected-utility maximiser, we would maximise the expectation of $\mathcal{Q}_i(\{\bm{\theta}_{it},\bm{\theta}_t^{-i}\})$ to marginalise out the effect of $\bm{\theta}_t^{-i}$. However, this requires knowledge on the conditional distribution $p(\bm{\theta}_t^{-i}| Z_t=i,\bm{\theta}_{it})$ which the DM $i$ does not have. To tackle this challenge, we consider the worst-case scenario in which all rival DMs cooperate to minimise the objective and study how DM $i$ can in response maximise this worst-case objective function. We show in \apxMaximin\ that our solution, specifically in this case, is also a maximin strategy of the DM $i$.

\textbf{Utility maximisation. } The objective~\eqref{eq:cond-comp-obj} is difficult to optimise as it depends not only on the choice of selection rules but also on the behaviour of other DM's objective. 
To simplify the analysis, we rely on the following assumptions, exclusively for utility maximisation:
\begin{enumerate}[align=left]
    \item[\textbf{M1}.] \textbf{Partially additive interaction between DMs}: For an arbitrary DM $i$, their cBP$_i$ can be decomposed as cBP$_i(\{\bm{\theta}_{it},\bm{\theta}_t^{-i}\})\
    =\ g_{i}(\bm{\theta}_{it}) + h_i(\bm{\theta}_t^{-i}) + c_i$ for some function $g_i, h_i$ and constant $c_i$.

    \item[\textbf{M2}.] \textbf{Linear self-effect}: The contribution of DM $i$ to cBP$_i$ admits a linear structure, i.e. $g_i(\bm{\theta}_{it}) = \vecalpha_{i}^\top\bm{\theta}_{it} + \beta_i$ for some vector $\vecalpha_i\in\mathbb{R}^m$ and constant $\beta_i\in\mathbb{R}$;

    \item[\textbf{M3}.] \textbf{Bounded parameters}: For all $\bm{\theta}_{it}, \|\bm{\theta}_{it}\|_2\leq1$.

\end{enumerate}

Assumptions (M2) \& (M3) are extensions of (S1) and (S2), whereas (M1) is an additional assumption.

\begin{proposition}[Dominant strategy]\label{prop:dom-strat}
Suppose Assumptions (H1), (H2), and (M1) hold for DM $i$. Then, $\arg\max_{\bm{\theta}_{\mathit{it}}}\mathcal{Q}_i(\{\bm{\theta}_{\mathit{it}},\bm{\theta}_{\bullet}^{-i}\})=\arg\max_{\bm{\theta}_{\mathit{it}}}\mathcal{Q}_i(\{\bm{\theta}_{\mathit{it}},\bm{\theta}_{\diamond}^{-i}\})$ for any 
pair of distinct values $\bm{\theta}_{\bullet}^{-i}$ and $\bm{\theta}_{\diamond}^{-i}$ of $\bm{\theta}_t^{-i}$. 
\end{proposition}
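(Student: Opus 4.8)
The plan is to show that the objective $\mathcal{Q}_i(\{\bm{\theta}_{it},\bm{\theta}_t^{-i}\})$ is \emph{additively separable}, meaning it splits into a term depending only on DM $i$'s own parameter $\bm{\theta}_{it}$ plus a term depending only on the rivals' parameters $\bm{\theta}_t^{-i}$. Once this is established, maximising over $\bm{\theta}_{it}$ ignores the second term entirely, since it is merely a constant offset, so the maximiser cannot depend on $\bm{\theta}_t^{-i}$, giving the claimed equality of $\arg\max$ sets. First I would start from the expansion in \eqref{eq:cond-comp-obj}, $\mathcal{Q}_i(\bm{\theta}_t^{\text{all}}) = \text{cBP}_i(\bm{\theta}_t^{\text{all}}) + \text{cPI}_i(\bm{\theta}_t^{\text{all}})$, and treat the two summands separately.

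For the base-performance term, Assumption (M1) is exactly what I need: it gives $\text{cBP}_i(\{\bm{\theta}_{it},\bm{\theta}_t^{-i}\}) = g_i(\bm{\theta}_{it}) + h_i(\bm{\theta}_t^{-i}) + c_i$, which is already separated, with $\bm{\theta}_{it}$ confined to $g_i$. For the improvement term, recall from (H1)--(H2) that the agent's optimal action yields $\text{cPI}_i = \big(\sum_{j=1}^n \gamma_j \bm{\theta}_{jt}\big)^\top \homomatrix\homomatrix^\top \bm{\theta}_i^*$, a \emph{linear} (hence additive) function of the $\bm{\theta}_{jt}$'s. I would split off the $j=i$ summand, writing $\sum_{j=1}^n \gamma_j \bm{\theta}_{jt} = \gamma_i \bm{\theta}_{it} + \sum_{j \neq i} \gamma_j \bm{\theta}_{jt}$, so that $\text{cPI}_i = \gamma_i \bm{\theta}_{it}^\top \homomatrix\homomatrix^\top \bm{\theta}_i^* + \big(\sum_{j \neq i}\gamma_j \bm{\theta}_{jt}\big)^\top \homomatrix\homomatrix^\top \bm{\theta}_i^*$; the first piece depends only on $\bm{\theta}_{it}$ and the second only on $\bm{\theta}_t^{-i}$. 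Collecting the $\bm{\theta}_{it}$-dependent pieces of both terms defines $F_i(\bm{\theta}_{it}) := g_i(\bm{\theta}_{it}) + \gamma_i \bm{\theta}_{it}^\top \homomatrix\homomatrix^\top \bm{\theta}_i^*$, and the remainder defines $G_i(\bm{\theta}_t^{-i})$, so that $\mathcal{Q}_i = F_i(\bm{\theta}_{it}) + G_i(\bm{\theta}_t^{-i})$.

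Finally, since $G_i(\bm{\theta}_t^{-i})$ is constant with respect to $\bm{\theta}_{it}$, adding it does not alter the set of maximisers, so $\arg\max_{\bm{\theta}_{it}} \mathcal{Q}_i(\{\bm{\theta}_{it},\bm{\theta}_t^{-i}\}) = \arg\max_{\bm{\theta}_{it}} F_i(\bm{\theta}_{it})$ for every fixed $\bm{\theta}_t^{-i}$; the right-hand side no longer mentions $\bm{\theta}_t^{-i}$, which yields the desired equality for the two choices $\bm{\theta}_{\bullet}^{-i}$ and $\bm{\theta}_{\diamond}^{-i}$. Note this argument treats $\arg\max$ as a set, so it requires no uniqueness of the optimiser. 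The proof is short; the only genuine content -- and hence the step I would be most careful about -- is confirming that \emph{no cross term survives}: Assumption (M1) is precisely the hypothesis that prevents $\bm{\theta}_{it}$ and $\bm{\theta}_t^{-i}$ from interacting inside $\text{cBP}_i$ (where conditioning on $Z_t=i$ could in principle couple them), while the bilinearity of $\text{cPI}_i$ guarantees the improvement term separates for free. Verifying these two non-interaction facts, rather than any computation, is the crux.
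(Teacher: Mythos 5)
Your proof is correct and follows essentially the same route as the paper: the splitting of $\mathcal{Q}_i$ into $F_i(\bm{\theta}_{it}) + G_i(\bm{\theta}_t^{-i})$ via Assumption (M1) and the $j=i$ versus $j\neq i$ split of the cPI term is exactly the paper's decomposition into $\phi_i(\bm{\theta}_{it}) + \psi_i(\bm{\theta}_t^{-i})$. The only (cosmetic) difference is that the paper spells out the maximiser-transfer inequality explicitly, whereas you invoke the cleaner observation that an additive term constant in $\bm{\theta}_{it}$ leaves the $\arg\max$ set unchanged.
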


This proposition shows that the monotonicity of the objective $\mathcal{Q}_i$ remains unaffected regardless of other values $\boldsymbol{\theta}_{jt}$ released by the DMs $j$ where $j\neq i$. 
Based on Assumption (M1) and this result, any DM $i$ is provided with a condition to maximise all the objective functions within the family $\mathcal{Q}_i(\{\bm{\theta}_{it},\bm{\theta}_t^{-i}\})$ simultaneously.

\begin{theorem}[Bounded optimum, extended]\label{theorem:bounded-optim-m}
Suppose that (H1), (H2), and (M1)-(M3) hold. Then, the optimal parameter vector for any DM $i$ takes the form $$\bm{\theta}^{\text{AO}}_i= \frac{\vecalpha_i+\gamma_i\homomatrix\homomatrix^\top\bm{\theta}^*_i}{\|\vecalpha_i+\gamma_i\homomatrix\homomatrix^\top\bm{\theta}^*_i\|_2}.$$
\end{theorem}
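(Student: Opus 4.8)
The plan is to mirror the proof of \Cref{theorem:bounded-optim-s}, reducing the multi-DM objective to the maximisation of a single linear functional over the unit ball, with the crucial extra step of neutralising the dependence on the rivals' parameters $\bm{\theta}_t^{-i}$ via \Cref{prop:dom-strat}. First I would fix DM $i$ and expand $\mathcal{Q}_i(\{\bm{\theta}_{it},\bm{\theta}_t^{-i}\})$ starting from \eqref{eq:cond-comp-obj}. Applying Assumptions (M1) and (M2) rewrites the conditional base performance as $\text{cBP}_i = \vecalpha_i^\top\bm{\theta}_{it} + \beta_i + h_i(\bm{\theta}_t^{-i}) + c_i$, while the conditional performance improvement $\text{cPI}_i = (\sum_{j=1}^n \gamma_j\bm{\theta}_{jt})^\top\homomatrix\homomatrix^\top\bm{\theta}^*_i$ is already additive in the $\bm{\theta}_{jt}$.

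Next, I would separate the terms that depend on $\bm{\theta}_{it}$ from those that do not. Collecting the former yields the linear functional $(\vecalpha_i + \gamma_i\homomatrix\homomatrix^\top\bm{\theta}^*_i)^\top\bm{\theta}_{it}$, whereas every remaining term --- namely $\beta_i + c_i + h_i(\bm{\theta}_t^{-i}) + \sum_{j\neq i}\gamma_j\bm{\theta}_{jt}^\top\homomatrix\homomatrix^\top\bm{\theta}^*_i$ --- is constant with respect to $\bm{\theta}_{it}$. Therefore, for any fixed $\bm{\theta}_t^{-i}$, the maximiser of $\mathcal{Q}_i$ over $\bm{\theta}_{it}$ coincides with the maximiser of this linear functional.

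I would then invoke \Cref{prop:dom-strat} to confirm that this argmax is unaffected by the choice of $\bm{\theta}_t^{-i}$, so that it suffices to solve $\max_{\|\bm{\theta}_{it}\|_2\leq 1}(\vecalpha_i + \gamma_i\homomatrix\homomatrix^\top\bm{\theta}^*_i)^\top\bm{\theta}_{it}$ under Assumption (M3). This is exactly the optimisation already handled in \Cref{theorem:bounded-optim-s}: by Cauchy--Schwarz the objective is bounded above by $\|\vecalpha_i + \gamma_i\homomatrix\homomatrix^\top\bm{\theta}^*_i\|_2$, with equality attained uniquely at the normalised direction, giving $\bm{\theta}^{\text{AO}}_i = (\vecalpha_i + \gamma_i\homomatrix\homomatrix^\top\bm{\theta}^*_i)/\|\vecalpha_i + \gamma_i\homomatrix\homomatrix^\top\bm{\theta}^*_i\|_2$ as claimed.

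The main obstacle is conceptual rather than computational: ensuring that the rivals' parameters genuinely drop out of the optimisation. This is precisely where Assumption (M1) does the work, as the additive decomposition of $\text{cBP}_i$ rules out cross-terms between $\bm{\theta}_{it}$ and $\bm{\theta}_t^{-i}$ that would otherwise make the gradient, and hence the optimiser, depend on $\bm{\theta}_t^{-i}$; combined with the inherently additive linear form of $\text{cPI}_i$, this yields the dominant-strategy structure of \Cref{prop:dom-strat}. Once this separation is established, the remainder is the routine unit-ball linear maximisation already carried out in the single-DM setting.
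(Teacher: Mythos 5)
Your proposal is correct and follows essentially the same route as the paper's proof: the identical decomposition of $\mathcal{Q}_i$ into the linear functional $(\vecalpha_i+\gamma_i\homomatrix\homomatrix^\top\bm{\theta}_i^*)^\top\bm{\theta}_{it}$ plus a $\bm{\theta}_{it}$-independent remainder $\beta_i+\psi_i(\bm{\theta}_t^{-i})$, the same appeal to \Cref{prop:dom-strat} to discharge the dependence on rivals' parameters, and the same unit-ball linear maximisation at the end. The only cosmetic difference is that you close with Cauchy--Schwarz, whereas the paper packages this step as a separate lemma proved via KKT conditions or the geometric identity $\max_{\bm{\theta}}\vecalpha^\top\bm{\theta}=\max\|\vecalpha\|\|\bm{\theta}\|\cos(\vecalpha,\bm{\theta})$ --- these are interchangeable.
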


As each objective function $\mathcal{Q}_i(\{\bm{\theta}_{it},\bm{\theta}_\diamond^{-i}\})$, conditioned on some arbitrary $\bm{\theta}_t^{-i}:=\bm{\theta}_\diamond^{-i}$, is a linear function of $\bm{\theta}_{it}$, the DM $i$ can obtain an un-normalised version of $\bm{\theta}_i^\text{AO}$ by regressing $\mathcal{Q}_i(\{\bm{\theta}_{it},\bm{\theta}_\diamond^{-i}\})$ onto $\bm{\theta}_{it}$ using the OLS regression.

\textbf{Safeguarding the social welfare. } Like the previous setting, we provide below an extension of \Cref{col:max-improvement-s} on maximum agents' improvement, with the difference being that $\thetaao_i$ is also a dominant strategy for cPI$(\{\bm{\theta}_{it},\bm{\theta}_t^{-i}\})$.

\begin{corollary}[Maximum improvement, extended]\label{col:max-improvement-m}
Suppose Assumptions (H1), (H2), and (M1)-(M3) hold for an arbitrary DM $i$, and that $\gamma_i>0$. 
If $\vecalpha_i=(k_i-\gamma_i)\homomatrix\homomatrix^\top\thetastar_i$ for some $k_i>0$, then $\thetaao_i$ maximises both $\mathcal{Q}_i(\{\bm{\theta}_{it},\bm{\theta}_t^{-i}\})$ and cPI$_i(\{\bm{\theta}_{it},\bm{\theta}_t^{-i}\})$, regardless of $\bm{\theta}_t^{-i}$.
\end{corollary}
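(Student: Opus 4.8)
The plan is to reduce the statement to the single-DM \Cref{col:max-improvement-s} by exploiting the additive separability of the competitive objective in $\bm{\theta}_{it}$ and $\bm{\theta}_t^{-i}$. First I would substitute the hypothesised alignment $\vecalpha_i=(k_i-\gamma_i)\homomatrix\homomatrix^\top\thetastar_i$ into the formula from \Cref{theorem:bounded-optim-m}. The numerator $\vecalpha_i+\gamma_i\homomatrix\homomatrix^\top\thetastar_i$ then collapses to $k_i\homomatrix\homomatrix^\top\thetastar_i$, and since $k_i>0$ the normalisation cancels this positive scalar, yielding the clean expression $\thetaao_i=\homomatrix\homomatrix^\top\thetastar_i/\|\homomatrix\homomatrix^\top\thetastar_i\|_2$. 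This identifies the optimal direction explicitly and serves as the anchor for both halves of the claim.

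For the first half, that $\thetaao_i$ maximises $\mathcal{Q}_i(\{\bm{\theta}_{it},\bm{\theta}_t^{-i}\})$ regardless of $\bm{\theta}_t^{-i}$, I would simply invoke \Cref{theorem:bounded-optim-m} together with \Cref{prop:dom-strat}: the former identifies $\thetaao_i$ as the maximiser of the objective for a fixed configuration of rivals, while the latter guarantees that the location of this maximiser is insensitive to the rivals' values $\bm{\theta}_t^{-i}$. No further computation is needed here.

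The substantive step is the second half, showing $\thetaao_i$ also maximises $\text{cPI}_i$. Here I would expand $\text{cPI}_i(\bm{\theta}_t^{\text{all}})=\big(\sum_{j=1}^n\gamma_j\bm{\theta}_{jt}\big)^\top\homomatrix\homomatrix^\top\thetastar_i$ and split off the $j=i$ term, writing $\text{cPI}_i=\gamma_i\bm{\theta}_{it}^\top\homomatrix\homomatrix^\top\thetastar_i+\big(\sum_{j\neq i}\gamma_j\bm{\theta}_{jt}\big)^\top\homomatrix\homomatrix^\top\thetastar_i$. The second summand is constant in $\bm{\theta}_{it}$, so maximising $\text{cPI}_i$ over $\bm{\theta}_{it}$ subject to $\|\bm{\theta}_{it}\|_2\leq1$ (Assumption (M3)) with $\gamma_i>0$ reduces to maximising the linear functional $\bm{\theta}_{it}^\top\big(\homomatrix\homomatrix^\top\thetastar_i\big)$ over the unit ball. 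By Cauchy--Schwarz this is uniquely maximised by the normalised direction $\homomatrix\homomatrix^\top\thetastar_i/\|\homomatrix\homomatrix^\top\thetastar_i\|_2$, which is precisely the $\thetaao_i$ computed above; since the dropped summand never involved $\bm{\theta}_{it}$, the conclusion holds for every fixed $\bm{\theta}_t^{-i}$.

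I do not anticipate a hard obstacle: the argument is a direct specialisation of the single-DM case, and the only point requiring care is checking that the two maximisers, of $\mathcal{Q}_i$ and of $\text{cPI}_i$, genuinely coincide. That coincidence is exactly what the hypothesis $\vecalpha_i=(k_i-\gamma_i)\homomatrix\homomatrix^\top\thetastar_i$ engineers, by forcing the cBP direction $\vecalpha_i$ to align with the cPI direction $\homomatrix\homomatrix^\top\thetastar_i$. The uniformity over $\bm{\theta}_t^{-i}$ is supplied for $\mathcal{Q}_i$ by \Cref{prop:dom-strat} and for $\text{cPI}_i$ by the additive separability just noted, so no genuinely new difficulty arises relative to \Cref{col:max-improvement-s}.
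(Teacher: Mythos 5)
Your proposal is correct and follows essentially the same route as the paper's proof in \apxBoundedReductionMulti's companion section (Appendix A.6): both exploit the additive separability granted by (M1) to write $\text{cPI}_i$ as a linear function of $\bm{\theta}_{it}$ plus a term constant in $\bm{\theta}_{it}$, maximise that linear functional over the unit ball (the paper via its linear-optimum lemma, you via Cauchy--Schwarz, which is the same fact), and observe that the alignment hypothesis collapses $\thetaao_i$ to $\homomatrix\homomatrix^\top\thetastar_i/\|\homomatrix\homomatrix^\top\thetastar_i\|_2$, the common maximiser. No meaningful difference in substance or structure.
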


As a result, if the interaction between DMs and agents exhibits additive structures, regulations can be solely imposed on DM $i$ to ensure improved average outcome of the agents who are selected by (and comply with) DM $i$. 
Next, we extend \Cref{col:bounded-reduction-s} regarding agents' admission chance for the environment $i$.

\begin{corollary}[Bounded reduction, extended]\label{col:bounded-reduction-m}
Suppose assumptions (H1), (H2), (M1)-(M3) hold for all DMs and each DM considers only two choices $\bm{\theta}_{jt}\in\{\bm{\theta}_j^*,\bm{\theta}_j^{\text{AO}}\}$ for $j\in[n]$. Further, let $i$ be an arbitrary DM and suppose:
\begin{enumerate}[label={(\arabic*)}]
    \item $\|\bm{\theta}_i^*\|_2 \leq 1$; 
    \item $\vecalpha_j=(k_j-\gamma_j)\homomatrix\homomatrix^\top\bm{\theta}_j^*$ with $k_j,\gamma_j>0$ for $j\in[n]$;
    \item $(\bm{\theta}_{jt})^\top\homomatrix\homomatrix^\top(\thetaao_i-\thetastar_i) \geq 0$ for $j\neq i$;
    \item $\B_t\sim\mathcal{N}(0,\sigma^2I)$ where $I$ is the identity matrix and $\sigma\in\mathbb{R}^+$; 
    \item The selection function, $\bm{\delta}_i(\mathbf{x}; \bm{\theta}_{it}):=\widetilde{\bm{\delta}}_i(\hat{y}_{it}(\bm{\theta}_t^\text{all}))$, is increasing w.r.t. $\hat{y}_{it}$. In addition, it is Lipschitz continuous, i.e., $ |\widetilde{\bm{\delta}}_i(\hat{y}_{it}) - \widetilde{\bm{\delta}}_i(\hat{y}_{it}^\prime)| \leq L|\hat{y}_{it}-\hat{y}_{it}^\prime|$ for $L>0$.
\end{enumerate}
Then, for any $M>0$:
\begin{equation*}
p\big(\xi_i(\bm{\theta}^\text{all}_{\bullet})-\xi_i(\bm{\theta}^\text{all}_{\diamond})>M\big) \leq \Phi\left(\frac{-M/L-\lambda}{\sigma\|\thetaao_i-\thetastar_i\|_2}\right),
\end{equation*}
where $\bm{\theta}^\text{all}_{\bullet} = \{\bm{\theta}_t^{-i},\thetastar_i\}$, $\bm{\theta}^\text{all}_{\diamond} = \{\bm{\theta}_t^{-i},\thetaao_i\}$, and $\xi_j(\bm{\theta}_t^\text{all}) := p(W_{jt}=1|\B_t;\bm{\theta}_t^\text{all})$ denotes the admission chance of the agent $t$ from the DM $j$, given released decision parameters $\bm{\theta}_t^\text{all}$. $\Phi$ denotes the CDF of $\mathcal{N}(0,1)$ and $\lambda := \big(\mathbf{a}_t(\bm{\theta}_{\diamond}^\text{all})\big)^\top\homomatrix^\top\thetaao_i - \big(\mathbf{a}_t(\bm{\theta}_{\bullet}^\text{all})\big)^\top\homomatrix^\top\thetastar_i \geq 0$.
\end{corollary}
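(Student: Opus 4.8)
The plan is to mirror the two-stage structure of the single-DM proof of \Cref{col:bounded-reduction-s}: first convert the event about the admission chances $\xi_i$ into an event about the predicted outcomes $\hat{y}_{it}$ using Assumption~(5), and then compute that difference in closed form and exploit the Gaussianity of $\B_t$. First I would note that, conditioned on $\B_t$ and on the full collection $\bm{\theta}_t^\text{all}$, the reported covariates $\X_t=\B_t+\homomatrix\mathbf{a}_t$ are deterministic, since the optimal action $\mathbf{a}_t=\homomatrix^\top\sum_j\gamma_j\bm{\theta}_{jt}$ is pinned down by (H1)--(H2); hence $\xi_i(\bm{\theta}_t^\text{all})=\widetilde{\bm{\delta}}_i(\hat{y}_{it}(\bm{\theta}_t^\text{all}))$. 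Because $\widetilde{\bm{\delta}}_i$ is increasing and $L$-Lipschitz, the event $\{\xi_i(\bm{\theta}^\text{all}_{\bullet})-\xi_i(\bm{\theta}^\text{all}_{\diamond})>M\}$ forces $\hat{y}_{it}(\bm{\theta}^\text{all}_{\bullet})>\hat{y}_{it}(\bm{\theta}^\text{all}_{\diamond})$ (monotonicity rules out the reverse sign, as $M>0$) and, via the Lipschitz bound, is contained in $\{\hat{y}_{it}(\bm{\theta}^\text{all}_{\bullet})-\hat{y}_{it}(\bm{\theta}^\text{all}_{\diamond})>M/L\}$. Taking probabilities reduces the claim to bounding the probability of this last event.

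Next I would expand both predictions as $\hat{y}_{it}(\bm{\theta}_t^\text{all})=\B_t^\top\bm{\theta}_{it}+\mathbf{a}_t(\bm{\theta}_t^\text{all})^\top\homomatrix^\top\bm{\theta}_{it}$ and substitute $\bm{\theta}_{it}=\thetastar_i$ for $\bullet$ and $\bm{\theta}_{it}=\thetaao_i$ for $\diamond$, with the rival parameters $\bm{\theta}_t^{-i}$ identical in both. The deterministic parts then collapse to exactly $-\lambda$, giving $\hat{y}_{it}(\bm{\theta}^\text{all}_{\bullet})-\hat{y}_{it}(\bm{\theta}^\text{all}_{\diamond})=\B_t^\top(\thetastar_i-\thetaao_i)-\lambda$. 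Since $\B_t\sim\mathcal{N}(0,\sigma^2 I)$ by Assumption~(4), this difference is $\mathcal{N}\!\big(-\lambda,\ \sigma^2\|\thetaao_i-\thetastar_i\|_2^2\big)$, and standardising together with $1-\Phi(z)=\Phi(-z)$ produces precisely the stated bound $\Phi\!\big((-M/L-\lambda)/(\sigma\|\thetaao_i-\thetastar_i\|_2)\big)$.

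It remains to establish $\lambda\geq 0$, which I expect to be the main obstacle. Writing $P:=\homomatrix\homomatrix^\top$ (which is PSD) and $\mathbf{s}:=\sum_{j\neq i}\gamma_j\bm{\theta}_{jt}$, a short rearrangement yields $\lambda=\mathbf{s}^\top P(\thetaao_i-\thetastar_i)+\gamma_i\big((\thetaao_i)^\top P\thetaao_i-(\thetastar_i)^\top P\thetastar_i\big)$. The first term is nonnegative summand-by-summand by Assumption~(3) together with $\gamma_j\geq 0$. For the second, Assumption~(2) collapses $\thetaao_i$ to $P\thetastar_i/\|P\thetastar_i\|_2$; setting $u:=\thetastar_i/\|\thetastar_i\|_2$ leaves $\thetaao_i$ unchanged, so it suffices to prove $(\thetaao_i)^\top P\thetaao_i\geq u^\top P u$ and then invoke $\|\thetastar_i\|_2\leq 1$ (Assumption~1) to conclude $u^\top P u\geq(\thetastar_i)^\top P\thetastar_i$. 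Diagonalising $P=\sum_k\mu_k e_k e_k^\top$ and expanding $u=\sum_k c_k e_k$ with $\sum_k c_k^2=1$, the inequality $(\thetaao_i)^\top P\thetaao_i\geq u^\top P u$ becomes $\sum_k\mu_k^3 c_k^2\geq\big(\sum_k\mu_k c_k^2\big)\big(\sum_k\mu_k^2 c_k^2\big)$, i.e.\ $\mathbb{E}_c[\mu^3]\geq\mathbb{E}_c[\mu]\,\mathbb{E}_c[\mu^2]$ under the weights $c_k^2$. This is Chebyshev's sum inequality, since $\mu\mapsto\mu$ and $\mu\mapsto\mu^2$ are comonotone and hence have nonnegative covariance. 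The delicate points are handling the $\|\thetastar_i\|_2\leq 1$ versus $=1$ normalisation correctly and verifying this covariance inequality for the PSD quadratic form; once $\lambda\geq 0$ is in hand, the remainder is a direct transcription of the single-DM argument.
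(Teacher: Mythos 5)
Your proposal is correct and, for the main body of the argument, follows the same route as the paper: the same reduction via monotonicity and Lipschitz continuity of $\widetilde{\bm{\delta}}_i$ to the event $\{\hat{y}_{it}(\bm{\theta}^\text{all}_{\bullet})-\hat{y}_{it}(\bm{\theta}^\text{all}_{\diamond})>M/L\}$, the same identity $\hat{y}_{it}(\bm{\theta}^\text{all}_{\bullet})-\hat{y}_{it}(\bm{\theta}^\text{all}_{\diamond})=\B_t^\top(\thetastar_i-\thetaao_i)-\lambda$, and the same Gaussian tail computation. Where you genuinely diverge is the proof that $\lambda\geq 0$. Both you and the paper split $\lambda$ into the rival term $\big(\sum_{j\neq i}\gamma_j\bm{\theta}_{jt}\big)^\top\homomatrix\homomatrix^\top(\thetaao_i-\thetastar_i)$, nonnegative directly from condition (3) and $\gamma_j>0$, and the self term $\gamma_i\big((\thetaao_i)^\top\homomatrix\homomatrix^\top\thetaao_i-(\thetastar_i)^\top\homomatrix\homomatrix^\top\thetastar_i\big)$. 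For the self term the paper reuses its optimisation machinery: by \Cref{col:max-improvement-m} and condition (2), $\thetaao_i$ maximises cPI$_i$ over the unit ball, so $\|\thetastar_i\|_2\leq 1$ gives $(\thetastar_i)^\top\homomatrix\homomatrix^\top\thetastar_i\leq(\thetaao_i)^\top\homomatrix\homomatrix^\top\thetastar_i$, and the elementary bound $2\mathbf{u}^\top\mathbf{v}\leq\|\mathbf{u}\|_2^2+\|\mathbf{v}\|_2^2$ with $\mathbf{u}=\homomatrix^\top\thetastar_i$, $\mathbf{v}=\homomatrix^\top\thetaao_i$ upgrades this to $\|\homomatrix^\top\thetastar_i\|_2^2\leq\|\homomatrix^\top\thetaao_i\|_2^2$. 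You instead prove the same inequality by a self-contained spectral argument: writing $P=\homomatrix\homomatrix^\top$, using condition (2) to identify $\thetaao_i=P\thetastar_i/\|P\thetastar_i\|_2$, and reducing to the weighted moment inequality $\mathbb{E}_c[\mu^3]\geq\mathbb{E}_c[\mu]\,\mathbb{E}_c[\mu^2]$ over the nonnegative eigenvalues of $P$, which follows from Chebyshev's sum inequality (comonotonicity of $\mu\mapsto\mu$ and $\mu\mapsto\mu^2$ on $[0,\infty)$ — the PSD property is what makes this step valid). Both arguments are sound; yours makes explicit that the inequality is a spectral fact about PSD quadratic forms and does not require the earlier corollary, while the paper's is shorter given that \Cref{col:max-improvement-m} is already in hand and avoids diagonalisation. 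One shared caveat, not a gap specific to your write-up: both proofs implicitly assume $\thetastar_i\neq 0$ and $\homomatrix\homomatrix^\top\thetastar_i\neq 0$ so that the normalisations defining $\thetaao_i$ (and your $u$) exist.
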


As demonstrated in the corollary, this extension necessitates additional conditions compared to \Cref{col:bounded-reduction-s} to ensure the protection of agents' chances of being selected, thus effectively highlighting the impact of competitive selection. 
Firstly, as shown in \Cref{sec:prob-formulation}, the agent's best response is influenced by all DMs in which the agent has an interest, i.e., $\gamma_j>0$.
This dynamic creates competition among DMs to incentivise agents effectively. 
Secondly, the compliance status $z_t$ of an agent depends on its selection statuses, $\{w_{\mathit{jt}}\}_{j=1}^n$, which in turn are affected by the selection rules $\bm{\delta}_{\bm{\theta}_{\mathit{jt}}}$ for $j\in[n]$, resulting in another competition in evaluating and selecting agents.

The third condition in \Cref{col:bounded-reduction-m} is of particular interest. 
Recall that under this corollary, any $\thetaao_j$ is simply a normalised version of $\homomatrix\homomatrix^\top\thetastar_j$. Consequently, this condition implies that when agents prefer only DMs whose environments are sufficiently similar to each other (reflected via the $\bm{\thetastar}_j$ and $\bm{\thetastar}_i$), then the benevolent regulator can enforce the regulation as outlined in \Cref{col:max-improvement-m} for all DMs to protect the agents from unjust reductions in admission chances.
We provide a more detailed discussion on this third condition in \apxBoundedReductionMulti. We delay the discussion on an agent's admission chance into other environments $j\neq i$ in the appendix.

\textbf{Causal parameters estimation. } Similar to the previous setting, all DMs must have access to the causal mechanism $\homomatrix\homomatrix^\top\thetastar_i$ to be able to comply with the regulation in \Cref{col:max-improvement-m} and the regulator must know $\thetastar_i$ to let agents know which environments are sufficiently similar (\Cref{col:bounded-reduction-m}). 
Unfortunately, DMs encounter additional challenges estimating causal parameters  $\{\thetastar_i\}_{i=1}^n$ under competitive selection. 
Specifically, they cannot correct the selection bias alone due to the interference with other DMs, which we demonstrate empirically in the following section. 
To address this, we propose a cooperative protocol for the regulator. 
This ensures unbiased estimation of causal parameters for all DMs.

\begin{definition}[Cooperative protocol]\label{def:protocol}
Let $[n]$ be the set of all $n$ DMs. If for any two arbitrary rounds $t$ and $t^\prime$, a non-empty subset of DMs, $S\subseteq [n]$, employs the ranking selection rule (\Cref{def:ranking-selection}) and their respective decision parameters satisfy $\bm{\theta}_{it}=k_i\bm{\theta}_{it^\prime}$ for some $k_i>0$ for all $i\in S$, then we say that $S$ follows the cooperative protocol in these two rounds.
\end{definition}

This cooperative protocol extends the condition in \Cref{theorem:local-exo-s} and suggests that DMs should synchronise the releases of their positively scaled parameters. When all DMs follow the cooperative protocol for multiple pairs of rounds, they have a way to obtain unbiased estimates of the causal parameters $\thetastar_i$ as shown in the next theorem.

\begin{theorem}[Local exogeneity, extended]\label{theorem:local-exo-m}
Under Assumptions (H1) and (H2). If all DMs follow the cooperative protocol (i.e., $\exists S: S=[n]$ in \Cref{def:protocol}) in two rounds $t$ and $t^\prime$, then 
 $$\mathbb{E}\left[Y_{it} \,\vert\, Z_{t}=i\ ;\  \bm{\theta}^{\text{all}}_{t}\right] - \mathbb{E}\left[Y_{it^\prime} \,\vert\, Z_{t^\prime}=i\ ;\ \bm{\theta}^{\text{all}}_{t^\prime}\right] = \big(\mathbb{E}\left[\X_{t} \,\vert\, Z_{t}=i\ ;\  \bm{\theta}^{\text{all}}_{t}\right]-\mathbb{E}\left[\X_{t^\prime} \,\vert\, Z_{t^\prime}=i\ ;\  \bm{\theta}^{\text{all}}_{t^\prime}\right]\big)^\top\bm{\theta}^*_i.$$
\end{theorem}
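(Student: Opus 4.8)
The plan is to reduce the claim to the cancellation of the endogenous noise terms, exactly as in the single-DM \Cref{theorem:local-exo-s}, and then to show that the cooperative protocol forces this cancellation by making the conditional law of the baseline given compliance identical across the two rounds. First I would substitute $Y_{it}=\X_t^\top\thetastar_i+O_{it}$ into each conditional expectation and subtract, obtaining
\begin{align*}
& \mathbb{E}[Y_{it}\mid Z_t=i;\bm{\theta}^{\text{all}}_t]-\mathbb{E}[Y_{it^\prime}\mid Z_{t^\prime}=i;\bm{\theta}^{\text{all}}_{t^\prime}] \\
&\quad = \big(\mathbb{E}[\X_t\mid Z_t=i;\bm{\theta}^{\text{all}}_t]-\mathbb{E}[\X_{t^\prime}\mid Z_{t^\prime}=i;\bm{\theta}^{\text{all}}_{t^\prime}]\big)^\top\thetastar_i \\
&\quad\ + \big(\mathbb{E}[O_{it}\mid Z_t=i;\bm{\theta}^{\text{all}}_t]-\mathbb{E}[O_{it^\prime}\mid Z_{t^\prime}=i;\bm{\theta}^{\text{all}}_{t^\prime}]\big).
\end{align*}
The middle line is precisely the asserted right-hand side, so it remains to prove that the two noise conditional means coincide.

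Second, I would establish a joint rank-preservation lemma. By (H1) every agent in a round exerts the common effort $\mathbf{a}_t=\homomatrix^\top(\sum_j\gamma_j\bm{\theta}_{jt})$, so $\X_t=\B_t+\homomatrix\mathbf{a}_t$ differs from the baseline only by a round-constant shift; hence each projection $\X_t^\top\bm{\theta}_{jt}$ equals $\B_t^\top\bm{\theta}_{jt}$ plus a constant, and under the ranking rule (\Cref{def:ranking-selection}) this constant cancels, reducing DM $j$'s selection probability to $\operatorname{CDF}_{\B^\top\bm{\theta}_{jt}}(\B_t^\top\bm{\theta}_{jt})$, a function of $\B_t$ alone. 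The cooperative protocol (\Cref{def:protocol}) gives $\bm{\theta}_{jt}=k_j\bm{\theta}_{jt^\prime}$ with $k_j>0$ for every $j$, and since a positive rescaling of the projection direction leaves ranks (hence the CDF value) unchanged, the map $\mathbf{b}\mapsto\xi_j(\mathbf{b}):=p(W_{jt}=1\mid\B_t=\mathbf{b})$ is identical in rounds $t$ and $t^\prime$ for every $j$. This is the step I expect to be the crux: it is exactly where the protocol's synchronised positive scaling is used, and unlike the single-DM case it must hold \emph{jointly} over all $n$ selection statuses.

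Third, I would propagate this invariance through the compliance variable, which is the genuinely new multi-DM ingredient. By (H2) we have $Z_t\indep\{\X_t,\B_t\}\mid\{W_{jt}\}_{j=1}^n$ with a round-independent conditional law, so marginalising the selection statuses yields $p(Z_t=i\mid\B_t=\mathbf{b})=\sum_{\mathbf{w}}p(Z_t=i\mid\{w_{jt}\}=\mathbf{w})\,p(\{W_{jt}\}=\mathbf{w}\mid\B_t=\mathbf{b})$, where the first factor is round-independent by (H2) and the second depends on the round only through the round-invariant $\{\xi_j(\mathbf{b})\}_j$ from the previous step. Hence $p(Z_t=i\mid\B_t=\mathbf{b})$ is round-invariant, and combining with the common population law $p(\B=\mathbf{b})$ via Bayes' rule gives $p(\B_t=\mathbf{b}\mid Z_t=i)=p(\B_{t^\prime}=\mathbf{b}\mid Z_{t^\prime}=i)$.

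Finally, I would invoke the causal graph (\Cref{fig:causal-multi-env}): conditioning on $\B_t$ blocks the only path from $O_{it}$ to $Z_t$ (running $O_{it}\leftrightarrow\B_t\to\X_t\to W_{jt}\to Z_t$), so $O_{it}\indep Z_t\mid\B_t$ and therefore $\mathbb{E}[O_{it}\mid\B_t=\mathbf{b},Z_t=i]=\mathbb{E}[O_{it}\mid\B_t=\mathbf{b}]=:\mu_i(\mathbf{b})$, a round-invariant function by homogeneity of the joint $(\B,O_i)$ law. Integrating $\mu_i$ against the round-invariant posterior $p(\B_t=\mathbf{b}\mid Z_t=i)$ established above shows $\mathbb{E}[O_{it}\mid Z_t=i]=\mathbb{E}[O_{it^\prime}\mid Z_{t^\prime}=i]$, cancelling the noise difference and completing the proof.
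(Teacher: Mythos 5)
Your proof is correct and follows essentially the same route as the paper's: the same decomposition of $Y_{it}$ reducing the claim to cancellation of the noise terms, the same key invariance (positive rescaling preserves the ranking-based admission probabilities as functions of $\B_t$, propagated through the compliance kernel via (H2) and Bayes' rule to give round-invariance of $p(\B_t \mid Z_t = i)$), and the same d-separation fact $O_{it} \indep Z_t \mid \B_t$. The only cosmetic difference is that you conclude by showing round-invariance of the conditional \emph{mean} of $O_{it}$ via the tower property, whereas the paper's Lemma establishes round-invariance of the full conditional \emph{distribution} of $O_{it}$ given $Z_t = i$; both suffice and rest on identical ingredients.
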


Recall that in the previous setting, the ranking of agents can be preserved by scaling the selection parameters with a positive scalar. With the cooperative protocol and Assumption (H2), we can now also preserve the enrollment distribution of agents. We can then deploy the same MSLR procedure from the single DM settings to retrieve the causal parameters. Further details are discussed in \Cref{sec:experiments}.

\begin{algorithm}
\caption{Mean-shift Linear Regression (MSLR)}
\label{algo:main}
\textbf{Require:} a subset of $n_s$ decision makers out of all $n$ decision makers, where $1\leq n_s\leq n$. These decision makers use ranking selection (\Cref{def:ranking-selection}).\\
\textbf{Parameters:} number of rounds $T$, block's length $\eta$.

\begin{algorithmic}[1]
\STATE $D_i \gets \{\}$ for $i=1,\ldots,n_s$
\FOR{$t\in\{1,\ldots,T\}$}
    \STATE blockindex $\gets \lfloor t/(\eta+1) \rfloor$
    \IF{blockindex $\% 2 = 0$}
    \STATE $\bm{\theta}_{it}\sim p(\bm{\theta}_{it})$ for $i=1,\ldots,n_s$
    \ELSE
    \STATE $t^\prime \gets t-\eta$
    \FOR{$i\in\{1,\ldots,n_s\}$}
    \STATE $\bm{\theta}_{it} = k_{it}\bm{\theta}_{it^\prime}$ with $k_{it}>0$
    \STATE $\Delta\bar{y}_i \gets (\bar{y}_{it}\mid z_t=i)-(\bar{y}_{it^\prime}\mid z_{t^\prime}=i)$
    \STATE $\Delta\bar{\mathbf{x}}_i \gets (\bar{\mathbf{x}}\mid z_t=i)-(\bar{\mathbf{x}}\mid z_{t^\prime}=i)$
    \STATE $D_i \gets D_i \cup \{\Delta\bar{y}_i, \Delta\bar{\mathbf{x}}_i\}$
    \ENDFOR
    \ENDIF
\ENDFOR
\FOR{$i\in\{1,\ldots,n_s\}$}
    \STATE $\thetastarhat_i \gets$ Regress $\Delta\bar{Y}_i$ onto $\Delta\bar{\mathbf{X}}_i$ with OLS and the data set $D_i$
\ENDFOR
\end{algorithmic}
\end{algorithm}

\section{Experiments}
\label{sec:experiments}

We complement our theoretical results with simulation studies. Starting with the single DM setting, our experiments first compare the optimal decision parameters $\thetaao$ and the causal parameter $\thetastar$ in terms of utility maximisation, and then we demonstrate that our algorithms estimate $\thetastar$ consistently. We then generalise the experiments to multiple DMs. Further experimental details are included in \apxExperiments. The code to reproduce our experiments is publicly available.\footnote{\url{https://github.com/muandet-lab/csl-with-selection}}

\textbf{Experimental setup. } Following \citet{harris2022strategic}, we generate a synthetic college admission dataset. In particular, covariates $\X_t = (X_t^\text{SAT}, X_t^\text{HS GPA})^\top$ represent SAT score and high school GPA of the student arriving at round $t$, while $Y_{it}$ represents the college GPA after enrolling in college $i$. A confounding factor is simulated to indicate the private type of a student's background: \textit{disadvantaged} and \textit{advantaged}. The distribution of the \textit{disadvantaged} students' baseline $\B_t$ has a lower mean than that of their \textit{advantaged} counterparts and the same applies for the distribution of noise $O_{it}$. After $\mathbf{b}_t$ is simulated and all colleges publicise their parameters $\{\bm{\theta}_{it}\}_{i=1}^n$, we compute $\mathbf{x}_t = \mathbf{b}_t + \homomatrix\mathbf{a}_t$ and $\hat{y}_{it}=\mathbf{x}_t^\top\bm{\theta}_{it}$ for $i\in[n]$. Unlike \citet{harris2022strategic}, each DM $i$ now assigns an admission status $w_{it}\in\{0,1\}$ to the student at round $t$ using a variant of the ranking selection rule. Precisely, the student is admitted into college $i$ if their prediction $\hat{y}_{it}$ lies within the top $\rho$-percentile of all applicants where $\rho\in[0,1]$ and we set $\rho=0.5$. Further discussion of this variant of ranking selection is included in \apxExpSetup. 
As ranking selection (\Cref{def:ranking-selection}) requires access to the distribution $p(\X_t^\top\bm{\theta}_{it})$, we estimate it by simulating $1000$ students in each round.\footnote{Having multiple students per round is equivalent to having each student arrive at different rounds subject to the same  $\bm{\theta}^\text{all}$.} Afterwards, the compliance $z_t \in [n]$ is computed to indicate the college in which this student enrols, based on the admission statuses $\{w_{it}\}_{i=1}^n$. Finally, for students enrolled in college $i$ at round $t$, i.e., $z_t = i$, we compute the target college GPA $y_{it} = \mathbf{x}_t^\top\thetastar_i + o_{it}$. The true causal parameters $\bm{\theta}^*_i = (\theta_i^{*, \text{SAT}}, \theta_i^{*, \text{HS GPA}})^\top$ are distributed as normal distribution around $\bm{\theta}^*_i = (0, 0.5)^\top$, which was inferred from a real world dataset by \citet{harris2022strategic}. 

\textbf{Additional details for MSLR. }
% \section{Causal Learning Algorithm}
% \label{sec:algo}
% This section provides additional details about the algorithm that we use in our experiments to infer the causal parameters $\thetastar_i$. 
Because there are infinitely many ways to carry out the releases of $\bm{\theta}_t$ as required by \Cref{theorem:local-exo-s} and there are infinitely many ways for multiple DMs to synchronise their releases of $\bm{\theta}_{it}$ as required by \Cref{def:protocol}, we provide only an instantiation of the MSLR procedure via \Cref{algo:main} that we use in our experiments.
We use the word \textit{coalition} to refer to the subset of $n_s$ DMs who perform this algorithm together. Line 9 refers to the cooperative protocol (\Cref{def:protocol}) and line 10 to 12 refer to the extended theorem on local exogenity (\Cref{theorem:local-exo-m}). The branching in line 4 (and in line 6) checks whether the current round $t$ is of the type $t_\bullet$ or $t_\diamond$ which we discuss next.
Recall that according to \Cref{def:protocol}, DMs $i$ and $j$ are cooperative if they deploy linearly dependent parameter vectors $\{\bm{\theta}_{it_\bullet}, \bm{\theta}_{it_\diamond}\}$ in the same pair of two arbitrary rounds $t_\bullet$ and $t_\diamond$. To easily simulate the cooperative and non-cooperative aspects of DMs in our experiments, we control the interval for deploying dependent vectors with the integer constants $\eta_i\in\mathbb{N}^+$ where $t_\bullet+\eta_i=t_\diamond$. Each batch of such linearly dependent vectors gives us a linear equation as shown in \Cref{theorem:local-exo-m} and we want to have multiple distinct batches with sufficient span in $\mathbb{R}^m$ so that $\thetastar_i$ is solvable. Because $\eta_i$ creates a gap between $t_\bullet$ and $t_\diamond$ of the same batch, distinct batches are generated in an interleaved manner using the following formula:
% Without loss of generality, let $t_\bullet<t_\diamond$ and in particular, DM $i$ waits for $\eta_i$ rounds before deploying $\bm{\theta}_{it_\diamond}$. More formally, we ensure that DM $i$ deploys $\bm{\theta}_{it_\diamond}=k\bm{\theta}_{it_\bullet}$ in round $t_\diamond$ after an arbitrary parameter vector $\bm{\theta}_{it}$ was deployed in round $t$, and that
\begin{equation*}
    t_\bullet = k + \left(\left\lfloor \frac{k-1}{\eta_i}\right\rfloor \times \eta_i\right), \quad
    t_\diamond = t_\bullet + \eta_i,
    % &\ldots
\end{equation*}
where $k \in \{1, 2, \ldots\}$ denotes the $k$-th batch to which $\{\bm{\theta}_{it_\bullet},\bm{\theta}_{it_\diamond}\}$ belong. Finally, we say that a set of DMs deploys the parameter vectors \textit{synchronously} if they deploy the linearly dependent vector at the same frequency (i.e., $\forall i, \eta_i=\eta$ for some constant $\eta$), otherwise, we say their deployments are \textit{asynchronous}.

\begin{table}
\centering
\caption{[Higher is better] Utility $\mathcal{Q}(\bm{\theta}_t)$ ($\pm$ standard error) of a DM for various values of the parameter $\bm{\theta}_t$.}
\begin{tabular}{lccc}
%\cline{2-4}
& $\thetaaohat$  & $\hat{\bm{\theta}}_{\text{OLS}}$ & $\thetastarhat$\\ 
\midrule
\multicolumn{1}{l|}{$\mathcal{Q}(\bm{\theta}_t)$} & ${\bf 2.530}$ {\small$\pm0.006$} & $2.511$ {\small$\pm0.006$} & $2.511$ {\small$\pm0.006$} \\ 
\bottomrule
\end{tabular}
\label{table:utility-single-env}
\end{table}
\begin{table}
\centering
\caption{[Higher is better] Utilities $\mathcal{Q}_1(\bm{\theta}_{1t}, \bm{\theta}_{2t})$ ($\pm$ standard error) of the first DM for various values of $\{\bm{\theta}_{1t}, \bm{\theta}_{2t}\}$. }
\begin{tabular}{lccc}
%\cline{2-4}
& $\thetaaohat_1$ & $\thetaolshat_1$ & $\thetastarhat_1$  \\ \midrule
\multicolumn{1}{l|}{$\thetaaohat_2$} & $\bm{2.529}$  {\small $\pm0.028$} & $2.507$ {\small$\pm0.029$} & $2.506$ {\small$\pm0.029$} \\ 
\multicolumn{1}{l|}{$\thetaolshat_2$} & $\bm{2.561}$ {\small $\pm0.028$}  & $2.546$ {\small$\pm0.029$} & $2.545$ {\small$\pm0.029$} \\ 
\multicolumn{1}{l|}{$\thetastarhat_2$} & $\bm{2.560}$ {\small$\pm0.029$} & $2.546$ {\small$\pm0.029$} & $2.544$ {\small$\pm0.029$} \\ \bottomrule
\end{tabular}
%The first DM achieves the highest utility via $\thetaaohat_1$ regardless of which $\bm{\theta}_{2t}$ is employed by the second DM.
\label{table:utility-multi-env}
\end{table}

\textbf{Impact of selection procedure ($n=1$). } We first demonstrate our estimated $\thetaao$ in fact results in higher utility than other plausible selection parameters such as $\thetastarhat$ and $\thetaolshat$, echoing the theoretical analysis from \Cref{theorem:bounded-optim-s}. We regress $\mathcal{Q}(\bm{\theta}_{t})$ onto $\bm{\theta}_{t}$ to estimate $\thetaao$~(see \Cref{subsec:single-selection}), and utilise our MSLR algorithm to estimate $\thetastar$, whereas $\thetaolshat$ is obtained from performing ordinary regression with $Y_{t}|W_t=1$ and $\X_t|W_t=1$.
% Since we have $\thetaao = \arg \max_{\bm{\theta}_t} \mathcal{Q}(\bm{\theta}_t)$ (\Cref{lemma:bounded-optim-s}), we first verify if our estimate of $\thetaao$ results in a higher utility than other plausible $\bm{\theta}_t$ including $\thetastarhat$ and $\thetaolshat$. 
% To estimate $\thetaao$, we regress $\mathcal{Q}(\bm{\theta}_{t})$ onto $\bm{\theta}_{t}$, as described in \Cref{subsec:single-selection}, and use the Mean-shift Linear Regression algorithm (\Cref{apx:algo}) to estimate $\thetastar$. 
% Finally, we also consider a naive baseline $\thetaols$, which is estimated by regressing $Y_{t}|W_t=1$ onto $\X_t|W_t=1$.
Conforming to Assumption (S2), we use $\|\thetastarhat\|_2$ as the threshold and scale $\thetaaohat$, such that $\|\thetaaohat\|_2=\|\thetastarhat\|_2$ to ensure a fair comparison. On the other hand, if $\thetaolshat$ has a larger magnitude than the threshold, we scale it down accordingly (see \apxBoundingTheta\ for the detailed explanation).
\Cref{table:utility-single-env} reports their utility values $\mathcal{Q}(\bm{\theta}_t)$. 
We can see that $\thetaaohat$ induces the highest utility compared to other plausible options of $\bm{\theta}_t$. 
To demonstrate the impact of selection on estimating $\thetastar$, which is needed for the DM to comply with the regulation (\Cref{col:max-improvement-s}), we compare the MSLR algorithm (cf. \Cref{theorem:local-exo-s}) with that of \citet{harris2022strategic}, i.e., 2SLS. 
\Cref{fig:results1} shows estimation errors as the number of rounds increases.
Unlike the OLS and 2SLS estimates, our estimate of $\thetastar$ is asymptotically unbiased.

\begin{figure*}[t!]
     \centering
     \begin{subfigure}[t]{0.75\textwidth}
        \centering
        \includegraphics[width=\textwidth]{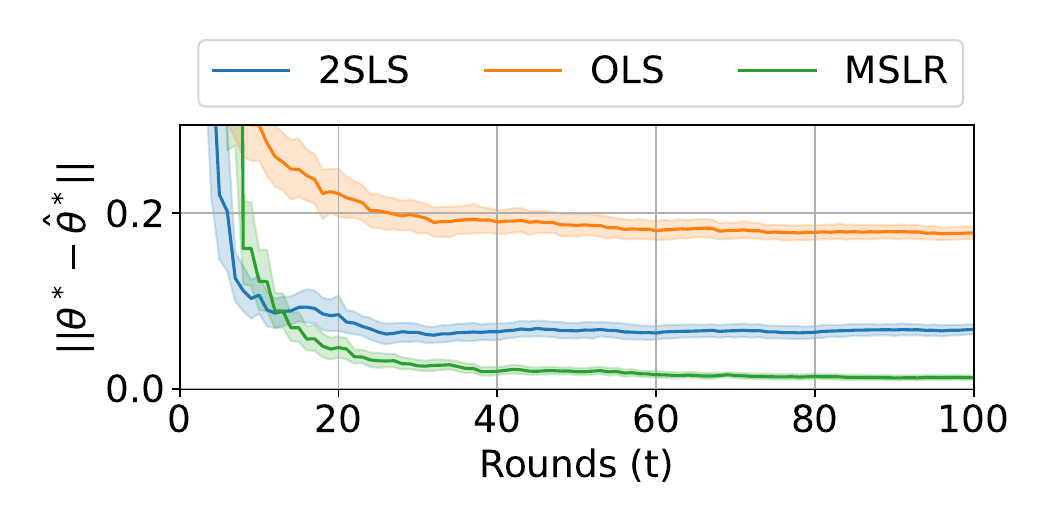}
        \caption{Estimation errors of causal parameter estimate $\thetastarhat$ as a function of rounds.}
        \label{fig:results1}
      \end{subfigure}\\
     \begin{subfigure}[t]{0.75\textwidth}
         \centering
         \includegraphics[width=\textwidth]{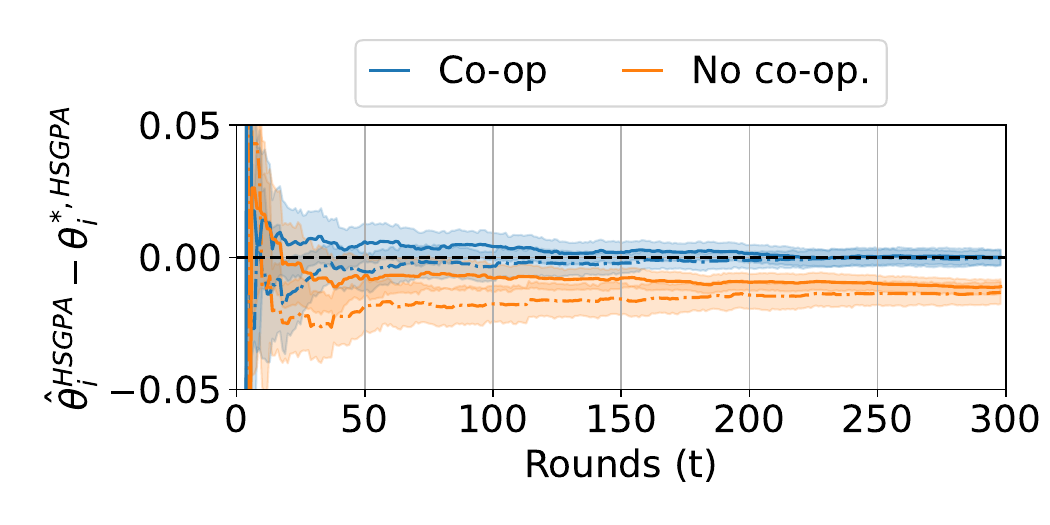}
         \caption{Bias in the estimated causal effect of $X_{t}^{\text{HS GPA}}$ on $Y_{it}$ for 2 DMs, each depicted by a different line style.}
         % Cooperation consistently leads to an unbiased estimate of the effect $\theta^{*, \text{HS GPA}}$ for both DMs.
         \label{fig:results2}
     \end{subfigure} \\
     \begin{subfigure}[t]{0.75\textwidth}
         \centering
         \includegraphics[width=\textwidth]{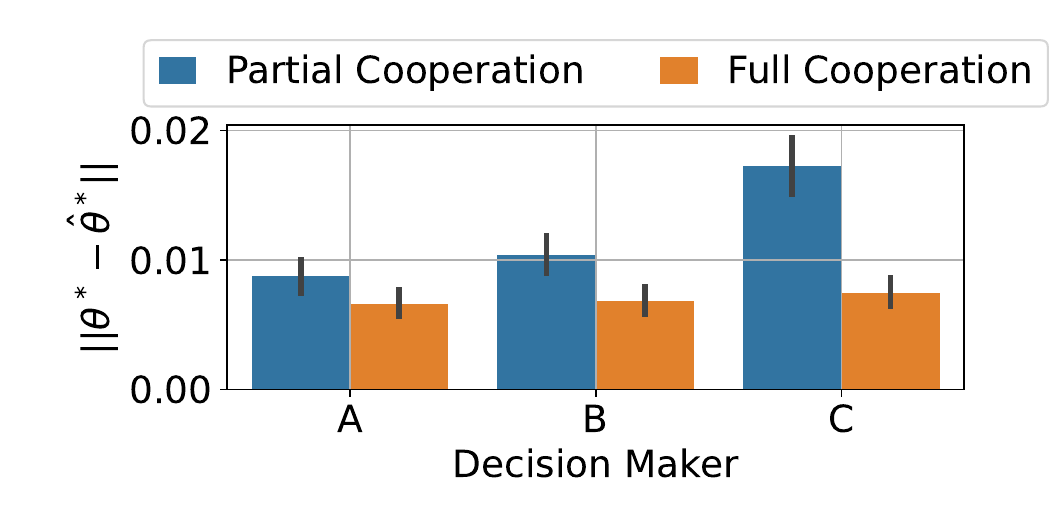}
         \caption{Estimation error of each DM when two of them cooperate ($A+B$) and when all of them cooperate ($A+B+C$).}
         \label{fig:results3}
     \end{subfigure}
     \caption{Estimation of the causal parameters $\bm{\theta}^*$ under various scenarios. Error bars show 95\% confidence interval.} 
      \label{fig:results}
\end{figure*}

% \paragraph{Impact of competitive selection ($n \geq 2$).}

\textbf{Impact of competitive selection ($n\geq2$). } Next, we show that $\thetaaohat_1$ induces the optimal utility $\mathcal{Q}_1(\bm{\theta}_{1t}, \bm{\theta}_{2t})$ for the first DM as a dominating strategy. Analogous to the previous experiment, \Cref{table:utility-multi-env} shows that our estimate $\thetaaohat_1$ induces the highest utility $\mathcal{Q}_1$ for the first DM, regardless of $\bm{\theta}_{2t}$ deployed by the second DM. We normalise the parameter similarly as before and use $\|\thetastarhat_1\|_2$ and $\|\thetastarhat_2\|_2$ as thresholds. 

We now demonstrate the impact of competitive selection on the estimation of causal parameter $\thetastar_i$ for $i\in[n]$, which are needed for DMs to comply with our regulations.
By \Cref{theorem:local-exo-m}, they must follow the cooperative protocol (\Cref{def:protocol}) by deploying linearly dependent parameter vectors $\bm{\theta}_{it} = k_i \bm{\theta}_{it'}$ in the same pair of rounds $t$ and $t'$. 
To this end, we test whether DMs can estimate $\thetastar$ when they deploy linearly dependent vectors (a) synchronously, as required by the protocol (i.e., cooperation), and (b) asynchronously (i.e., no cooperation).
% Full details of the synchronous/asynchronous settings are provided in \Cref{sec:algo}.
\Cref{fig:results2} shows that cooperation enables all DMs to obtain unbiased estimates of $\theta^{*, \text{HS GPA}}$, the ground-truth causal effect of the high-school GPA covariate. 
We provide the results for the other covariate in \apxFurtherExp. 
% \footnote{Specifically for this experiment, we simulate 5000 students per round, as now there are relatively more environments.}
Lastly, we demonstrate that following the cooperative protocol is of mutual benefit to all DMs for obtaining accurate estimates of $\thetastar_i$. To this end, we generate the data with $n=3$, for two scenarios: a group of two DMs ($A$ and $B$) deploys linearly dependent parameter vectors synchronously, while the remaining DM ($C$) deploys its respective linearly dependent vector (i) asynchronously (i.e., leading to partial cooperation between DMs), and (ii) synchronously (i.e., full cooperation). We use the converged estimates (i.e., after $T=100$ rounds) of causal parameters under both scenarios to demonstrate that, in terms of accuracy, not only does the DM $C$ gain substantially by joining the coalition, but it also benefits the current members of the coalition; see \Cref{fig:results3}.

\section{Conclusion}
\label{sec:conclusion}

To conclude, we study the problem of causal strategic learning under competitive selection by multiple decision makers. We show that in this setting, optimal selection rules require a trade-off between choosing the best agents and motivating their improvement. In addition, these rules may unjustly reduce the admission chances of agents due to reliance on non-causal predictions. To address these issues, we propose conditions for a benevolent regulator to impose on decision makers, allowing them to recover true causal parameters from observational data and ensure optimal incentives for agents' improvement without excessively reducing their admission chances, thus safeguarding agents' welfare. 

Our results rest on assumptions like homogeneous strategic behavior and linearity in agent models. Although these assumptions undoubtedly limit the applicability of our methods, they do not undermine the implication of our work. Intuitively, this inherent trade-off emerges because a DM has only one degree of freedom in designing the selection rule that may result in two distinct effects. Consequently, selecting the best candidates (private reward) and incentivising their improvements (social return) can indeed differ; and when they do, the benevolent regulator, e.g., governments, is needed to align the two. Our finding reinforces causal identification as an essential instrument to achieve this. Future studies could delve into non-linear agent models, fully heterogeneous setting, or scenarios in which certain decision makers cooperate strategically.

\section*{Acknowledgments} 
We thank Jake Fawkes and Nathan Kallus for a fruitful discussion and detailed feedback. We also thank David Kaltenpoth, Jilles Vreeken, and Xiao Zhang for insightful questions and feedback on the preliminary version of this work which was presented at the CISPA ML Day.

\bibliographystyle{unsrtnat}
\bibliography{references}  %%% Uncomment this line and comment out the ``thebibliography'' section below to use the external .bib file (using bibtex) .

%%% Uncomment this section and comment out the \bibliography{references} line above to use inline references.
% \begin{thebibliography}{1}

% 	\bibitem{kour2014real}
% 	George Kour and Raid Saabne.
% 	\newblock Real-time segmentation of on-line handwritten arabic script.
% 	\newblock In {\em Frontiers in Handwriting Recognition (ICFHR), 2014 14th
% 			International Conference on}, pages 417--422. IEEE, 2014.

% 	\bibitem{kour2014fast}
% 	George Kour and Raid Saabne.
% 	\newblock Fast classification of handwritten on-line arabic characters.
% 	\newblock In {\em Soft Computing and Pattern Recognition (SoCPaR), 2014 6th
% 			International Conference of}, pages 312--318. IEEE, 2014.

% 	\bibitem{hadash2018estimate}
% 	Guy Hadash, Einat Kermany, Boaz Carmeli, Ofer Lavi, George Kour, and Alon
% 	Jacovi.
% 	\newblock Estimate and replace: A novel approach to integrating deep neural
% 	networks with existing applications.
% 	\newblock {\em arXiv preprint arXiv:1804.09028}, 2018.

% \end{thebibliography}

\clearpage
\newpage
\appendix

%\section{Problem Illustration}
%\Cref{fig:selection-diagram} depicts the CSL setting considered in this work. 
%Unfortunately we cannot include in the main text due to space limit.

\section{Proofs of the Main Results}

This section contains the proofs of our main results. 
For proofs of \Cref{theorem:bounded-optim-s} and \Cref{col:max-improvement-s}, we refer the readers to the corresponding proofs in the multiple decision makers setting; see \Cref{apx:bounded-opt-ext} and \Cref{apx:max-imp-ext}.

\subsection{Agents' Best Response}\label{apx:best-response}

The objective of each student $t$ is to maximise his/her chance via the predicted performance, i.e., 
\begin{align*}
 \max_{\mathbf{a}_t}\ \sum^n_{i=1}\gamma_{it}\widehat{y}_{it} - \frac{1}{2}\|\mathbf{a}_t\|^2_2
=\ &\max_{\mathbf{a}_t}\ \sum^n_{i=1}\gamma_{it}\left(\mathbf{b}_t+\mathcal{E}_t\mathbf{a}_t\right)^\top\bm{\theta}_{it} - \frac{1}{2}\|\mathbf{a}_t\|^2_2
\\
=\ &\max_{\mathbf{a}_t}\ \sum^n_{i=1}\gamma_{it}\mathbf{b}^\top_t\boldsymbol{\theta}_{it} + \sum^n_{i=1}\gamma_{it}\mathbf{a}_t^\top\mathcal{E}_t^\top\bm{\theta}_{it} - \frac{1}{2}\|\mathbf{a}_t\|^2_2
\\
=\ &\max_{\mathbf{a}_t}\ \sum^n_{i=1}\gamma_{it}\mathbf{b}^\top_t\boldsymbol{\theta}_{it} + \mathbf{a}^\top_t\mathcal{E}_t^\top\left(\sum^n_{i=1}\gamma_{it}\bm{\theta}_{it}\right) - \frac{1}{2}\|\mathbf{a}_t\|^2_2,
\end{align*}
where the first equality follows from the fact that $\hat{y}_{it}$ is a linear function of $\mathbf{x}_t$, which in turn is a linear function of $\mathbf{a}_t$. Next, using the expansion in the last line and setting its gradient to zero to solve for $\mathbf{a}_t$ yield the optimal action $\mathbf{a}_t=\mathcal{E}_t^\top\left(\sum^n_{i=1}\gamma_{it}\boldsymbol{\theta}_{it}\right)$.

% \subsection{Bounded Optimum \texorpdfstring{(\Cref{lemma:bounded-optim-s})}{}}

% See \Cref{apx:bounded-opt-ext}.

% \subsection{Maximum Improvement \texorpdfstring{(\Cref{col:max-improvement-s})}{}}

% See \Cref{apx:max-imp-ext}.

\subsection{Bounded Reduction \texorpdfstring{(\Cref{col:bounded-reduction-s})}{}}

For ease of presentation, we first introduce the following lemma to show that $\lambda$, which is mentioned in \Cref{col:bounded-reduction-s}, is non-negative.

\begin{lemma}
Let assumptions (H1), (H2), (S1), and (S2) hold and suppose that the following two conditions hold:
\begin{enumerate}
    \item $\|\bm{\theta}^*\|_2 \leq 1$;
    \item $\vecalpha=(k-\gamma)\homomatrix\homomatrix^\top\bm{\theta}^*$ with $k,\gamma>0$.
\end{enumerate}
Then, we have
$$\lambda := \gamma\Big((\thetaao)^\top\homomatrix\homomatrix^\top\thetaao - (\thetastar)^\top\homomatrix\homomatrix^\top\thetastar\Big) \geq 0.$$
\end{lemma}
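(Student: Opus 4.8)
The plan is to first use condition (2) to simplify the optimal parameter $\thetaao$ supplied by \Cref{theorem:bounded-optim-s}, and then reduce the inequality to two applications of the Cauchy--Schwarz inequality for the positive semi-definite matrix $A := \homomatrix\homomatrix^\top$.

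First I would substitute $\vecalpha = (k-\gamma)\homomatrix\homomatrix^\top\thetastar$ into the expression for $\thetaao$. Since $\vecalpha + \gamma\homomatrix\homomatrix^\top\thetastar = k\,A\thetastar$ with $k > 0$, the positive scalar $k$ cancels upon normalisation and we obtain $\thetaao = A\thetastar / \|A\thetastar\|_2$. Writing $v := \thetastar$ and using $\gamma > 0$, the claim $\lambda \geq 0$ is equivalent to $(\thetaao)^\top A\,\thetaao \geq v^\top A v$. Expanding the left-hand side and using the symmetry of $A$ recasts this as the scalar inequality
\[
\frac{v^\top A^3 v}{v^\top A^2 v} \;\geq\; v^\top A v .
\]
(The degenerate case $A v = 0$, in which $\thetaao$ is undefined, forces $\lambda = 0$ and can be set aside.)

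Then I would prove this inequality in two steps. Treating $\langle x, y\rangle_A := x^\top A y$ as a semi-inner product (legitimate because $A \succeq 0$), Cauchy--Schwarz in the $A$-inner product applied to $v$ and $A v$ gives $(v^\top A^2 v)^2 \leq (v^\top A v)(v^\top A^3 v)$, whence $\frac{v^\top A^3 v}{v^\top A^2 v} \geq \frac{v^\top A^2 v}{v^\top A v}$. A second application, this time of the \emph{ordinary} Euclidean Cauchy--Schwarz to $v$ and $A v$, yields $(v^\top A v)^2 \leq \|v\|_2^2\,(v^\top A^2 v)$; here condition (1), $\|v\|_2 \leq 1$, lets me drop the factor $\|v\|_2^2$ and conclude $\frac{v^\top A^2 v}{v^\top A v} \geq v^\top A v$. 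Chaining the two bounds yields the desired inequality, and multiplying by $\gamma > 0$ gives $\lambda \geq 0$.

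The main point to be careful about is the strict positivity of the denominators $v^\top A v$ and $v^\top A^2 v$ that the divisions require; both are positive precisely when $A v \neq 0$, i.e.\ exactly the non-degeneracy condition under which $\thetaao$ is defined, so no case is lost. I would also emphasise that condition (1) enters only in the second step and is exactly what allows the normalised vector $\thetaao$ to dominate the possibly shorter vector $\thetastar$ in the $A$-quadratic form; were $\|\thetastar\|_2$ permitted to be large, this comparison would fail.
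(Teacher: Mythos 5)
Your proof is correct, and it takes a visibly different route from the paper's. The paper never computes $(\thetaao)^\top\homomatrix\homomatrix^\top\thetaao$ explicitly: it invokes \Cref{col:max-improvement-s} to assert that under condition (2) the vector $\thetaao$ maximises $\text{cPI}$ over the unit ball, uses condition (1) to conclude $\text{cPI}(\thetastar)\leq\text{cPI}(\thetaao)$, i.e.\ $\|\homomatrix^\top\thetastar\|_2^2\leq(\homomatrix^\top\thetaao)^\top(\homomatrix^\top\thetastar)$, and then passes from this inner-product bound to the norm bound $\|\homomatrix^\top\thetastar\|_2^2\leq\|\homomatrix^\top\thetaao\|_2^2$ via the elementary inequality $2\mathbf{u}^\top\mathbf{v}\leq\|\mathbf{u}\|^2+\|\mathbf{v}\|^2$. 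You instead substitute the closed form $\thetaao=A\thetastar/\|A\thetastar\|_2$ (legitimately obtained from \Cref{theorem:bounded-optim-s} under condition (2)) and reduce the claim to the quadratic-form inequality $v^\top A^3 v/v^\top A^2 v\geq v^\top A v$, settled by two Cauchy--Schwarz applications. Interestingly, under the change of variables $\mathbf{u}=\homomatrix^\top\thetastar$, $\mathbf{w}=\homomatrix^\top\thetaao$ the two arguments coincide at a deeper level: your Euclidean Cauchy--Schwarz step $(v^\top A v)^2\leq\|v\|_2^2\,v^\top A^2 v$ is exactly the paper's optimality inequality $\|\mathbf{u}\|^2\leq\mathbf{u}^\top\mathbf{w}$ (re-derived from scratch rather than cited from the corollary), while your Cauchy--Schwarz in the $A$-semi-inner product is exactly Euclidean Cauchy--Schwarz for the pair $(\mathbf{u},\mathbf{w})$, playing the same bridging role as the paper's AM--GM step. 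What your version buys: it is self-contained linear algebra, needing neither \Cref{col:max-improvement-s} nor any variational reasoning, and it explicitly isolates the degenerate case $A\thetastar=0$, which the paper silently ignores even though $\thetaao$ is then ill-defined (your parenthetical claim that this case \emph{forces} $\lambda=0$ is slightly overstated --- one only gets $\lambda=\gamma(\thetaao)^\top A\thetaao\geq 0$ under any convention for $\thetaao$ --- but the conclusion survives). What the paper's version buys: the optimality argument explains \emph{why} the inequality holds (the maximiser of $\text{cPI}$ over a ball containing $\thetastar$ can only do better), and that structure is reused verbatim in the extended lemma for multiple decision makers in \apxBoundedReductionMulti, where the analogous objectives carry extra terms contributed by rival decision makers that the variational argument cancels cleanly.
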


\begin{proof}
The second condition implies that $\thetaao=\arg\max_{\bm{\theta}_t:\|\bm{\theta}_t\|_2\leq1}\text{cPI}(\bm{\theta}_t)$, which follows from \Cref{col:max-improvement-s}.
Moreover, with the first condition, we now have $\text{cPI}(\thetastar) \leq \text{cPI}(\thetaao)$. Subsequently, that leads to
\begin{alignat*}{2}
&\ \text{cPI}(\thetastar) &&\leq \text{cPI}(\thetaao)
\\
&\ \gamma(\thetastar)^\top\homomatrix\homomatrix^\top\thetastar &&\leq \gamma(\thetaao)^\top\homomatrix\homomatrix^\top\thetastar
\\
\Rightarrow &\ (\homomatrix^\top\thetastar)^\top(\homomatrix^\top\thetastar) &&\leq (\homomatrix^\top\thetaao)^\top(\homomatrix^\top\thetastar)
\end{alignat*}
where we cancel out $\gamma$ from both sides as it is non-negative.

Next, by using the inequality $2\mathbf{u}^\top \mathbf{v}\leq\|\mathbf{u}\|^2+\|\mathbf{v}\|^2$ and treating $\homomatrix^\top\thetastar$ and $\homomatrix^\top\thetaao$ as $\mathbf{u}$ and $\mathbf{v}$, we obtain the desired result:
\begin{alignat*}{2}
&\ (\homomatrix^\top\thetastar)^\top(\homomatrix^\top\thetastar) &&\leq (\homomatrix^\top\thetaao)^\top(\homomatrix^\top\thetaao),
\end{alignat*}
which concludes the proof.
\end{proof}

We are now in a position to present the main proof of \Cref{col:bounded-reduction-s}.

\begin{proof}[Main Proof]
It follows from the Lipschitz continuity of $\widetilde{\bm{\delta}}(\hat{y})$ that, for an arbitrary agent with the baseline $\B_t$, we have
\begin{align*}
|\xi(\thetastar)-\xi(\thetaao)| &= |\widetilde{\bm{\delta}}(\hat{y}(\thetastar;\B_t)) - \widetilde{\bm{\delta}}(\hat{y}(\thetaao;\B_t))| 
\\
&\leq L|\hat{y}(\thetastar;\B_t)-\hat{y}(\thetaao;\B_t)|.
\end{align*}
This implies that
\begin{align*}
\xi(\thetastar)-\xi(\thetaao) &\leq |\xi(\thetastar)-\xi(\thetaao)|
\\
&\leq L|\hat{y}(\thetastar;\B_t)-\hat{y}(\thetaao;\B_t)|.
\end{align*}
Since $\widetilde{\bm{\delta}}(\hat{y}_t)$ is an increasing function of $\hat{y}_t$, then if $\hat{y}(\thetastar;\B_t)<\hat{y}(\thetaao;\B_t)$, we obtain the following for any arbitrarily large number $M\in\mathbb{R}^+$:
\begin{align*}
&\widetilde{\bm{\delta}}(\hat{y}(\thetastar;\B_t)) - \widetilde{\bm{\delta}}(\hat{y}(\thetaao;\B_t)) < 0
\\
\Rightarrow\ &\xi(\thetastar)-\xi(\thetaao) < 0
\\
\Rightarrow\ &p(\xi(\thetastar)-\xi(\thetaao)>M) = 0
.
\end{align*}

Next, we consider when there is a reduction in an agent's prediction, i.e., $\hat{y}(\thetastar;\B_t)>\hat{y}(\thetaao;\B_t)$. In this case, we have that
\begin{align*}
\xi(\thetastar)-\xi(\thetaao) &\leq L|\hat{y}(\thetastar;\B_t)-\hat{y}(\thetaao;\B_t)|
\\
&= L(\hat{y}(\thetastar;\B_t)-\hat{y}(\thetaao;\B_t)).
\end{align*}
When $\B_t\sim\mathcal{N}(0,\sigma^2I)$, we obtain
\begin{align*}
&\operatorname{Var}\big(\B_t^\top(\thetaao-\thetastar)\big) = \sigma^2\|\thetaao-\thetastar\|_2^2
\\
\Rightarrow\ &\B_t^\top(\thetaao-\thetastar)\sim\mathcal{N}(0,\ \sigma^2\|\thetaao-\thetastar\|_2^2)
\\
\Rightarrow\ &\frac{\B_t^\top(\thetaao-\thetastar)}{\sigma\|\thetaao-\thetastar\|_2}\sim\mathcal{N}(0,1).
\end{align*}
As a result, the counterfactual change in predicted outcome of an arbitrary agent can be expressed as follows:
\begin{align*}
\hat{y}(\thetaao;\B_t)-\hat{y}(\thetastar;\B_t)
=\ &(\B_t+\homomatrix\mathbf{a}_{\bm{\theta}_t=\thetaao})^\top\thetaao - (\B_t+\homomatrix\mathbf{a}_{\bm{\theta}_t=\thetastar})^\top\thetastar
\\
=\ &\B_t^\top(\thetaao-\thetastar) + \big(\mathbf{a}_{\bm{\theta}_t=\thetaao}^\top\homomatrix^\top\thetaao - \mathbf{a}_{\bm{\theta}_t=\thetastar}^\top\homomatrix^\top\thetastar\big)
\\
=\ &\B_t^\top(\thetaao-\thetastar) + \underbrace{\big(\gamma(\thetaao)^\top\homomatrix\homomatrix^\top\thetaao - \gamma(\thetastar)^\top\homomatrix^\top\thetastar\big)}_{\geq\ 0}
\\
=\ &\B_t^\top(\thetaao-\thetastar) + \lambda .
\end{align*}
Then, we obtain the counterfactual reduction in admission chance as follows: 
\begin{align*}
\xi(\thetastar)-\xi(\thetaao) &\leq L\left|\hat{y}\big(\thetastar\mid\mathbf{b}_t\big)-\hat{y}\big(\thetaao\mid\mathbf{b}_t\big)\right|
\\
&= L\left(\hat{y}\big(\thetastar\mid\mathbf{b}_t\big)-\hat{y}\big(\thetaao\mid\mathbf{b}_t\big)\right)
\\
&= L\left(\B_t^\top(\thetastar-\thetaao) - \lambda\right) .
\end{align*}
Subsequently, for any arbitrarily large $M\in\mathbb{R}^+$, it follows that
\begin{align*}
p\big(\xi(\thetastar)-\xi(\thetaao)>M\big) 
&\leq p\big(L\left(\B_t^\top(\thetastar-\thetaao)-\lambda\right)>M\big)
\\
&= p\big(\B_t^\top(\thetastar-\thetaao)>M/L+\lambda\big)
\\
&= p\left(\frac{\B_t^\top(\thetastar-\thetaao)}{\sigma\|\thetaao-\thetastar\|_2}>\frac{M/L+\lambda}{\sigma\|\thetaao-\thetastar\|_2}\right)
\\
&= 1-\operatorname{CDF}\left(\frac{M/L+\lambda}{\sigma\|\thetaao-\thetastar\|_2}\right)
\\
&= 1-\Phi\left(\frac{M/L+\lambda}{\sigma\|\thetaao-\thetastar\|_2}\right)
\\
&= \Phi\left(\frac{-M/L-\lambda}{\sigma\|\thetaao-\thetastar\|_2}\right).
\end{align*}
This concludes the proof.
\end{proof}

\subsection{Local Exogeneity \texorpdfstring{(\Cref{theorem:local-exo-s})}{}}

For the sake of clarity, we introduce the following lemma that will be used later in the main proof.

\begin{lemma}\label{lemma:no-shift-s}
Suppose that there exists a pair of rounds $t$ and $t^\prime$ such that $\bm{\theta}_t=k\bm{\theta}_{t^\prime}$ for some $k>0$. Then, there is no shift in the conditional distribution of selected agents within those two rounds. That is,
\begin{align*}
p(\B_{t}=\mathbf{b}_\diamond\mid W_{t}=1) &= p(\B_{t^\prime}=\mathbf{b}_\diamond\mid W_{t^\prime}=1), \forall \mathbf{b}_\diamond,
\\
p(O_{t}=o_\diamond\mid W_{t}=1) &= p(O_{t^\prime}=o_\diamond\mid W_{t^\prime}=1), \forall o_\diamond .
\end{align*}
\end{lemma}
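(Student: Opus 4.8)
The plan is to show that, in the ranking-selection regime underlying \Cref{theorem:local-exo-s}, positively rescaling $\bm{\theta}_t$ leaves every agent's admission probability \emph{as a function of their baseline} unchanged, and then to push this invariance through Bayes' rule to obtain both posterior equalities. The whole argument rests on the fact that homogeneity collapses the strategic improvement to a common, deterministic shift.

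First I would use (H1) together with the agents' best response (\apxBestResponse) to write the reported covariates as $\X_t=\B_t+\gamma\homomatrix\homomatrix^\top\bm{\theta}_t$, so that the improvement $\gamma\homomatrix\homomatrix^\top\bm{\theta}_t$ is a \emph{common deterministic shift}, identical for every agent in round $t$. Substituting into the ranking rule (\Cref{def:ranking-selection}) gives $\X_t^\top\bm{\theta}_t=\B_t^\top\bm{\theta}_t+c_t$ with $c_t:=\gamma\bm{\theta}_t^\top\homomatrix\homomatrix^\top\bm{\theta}_t$ a constant, so this additive constant cancels inside the CDF and the admission chance reduces to $\pi(\mathbf{b};\bm{\theta}_t):=p(W_t=1\mid\B_t=\mathbf{b})=p\!\left(\B_t^\top\bm{\theta}_t\le\mathbf{b}^\top\bm{\theta}_t\right)$, i.e. it depends on the agent only through the rank of $\mathbf{b}^\top\bm{\theta}_t$. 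Invoking $\bm{\theta}_t=k\bm{\theta}_{t^\prime}$ with $k>0$, both sides of the inequality scale by the same positive factor, so $\pi(\mathbf{b};\bm{\theta}_t)=\pi(\mathbf{b};\bm{\theta}_{t^\prime})$ for every $\mathbf{b}$. This is exactly the step that needs ranking selection: a generic threshold rule $\widetilde{\bm{\delta}}(\mathbf{x}^\top\bm{\theta})$ is not scale-invariant.

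For the baseline equation I would apply Bayes' rule, $p(\B_t=\mathbf{b}\mid W_t=1)=\pi(\mathbf{b};\bm{\theta}_t)\,p(\B_t=\mathbf{b})/p(W_t=1)$. Since agents arrive i.i.d., the prior $p(\B_t=\mathbf{b})$ agrees across rounds $t$ and $t^\prime$; the likelihood $\pi(\mathbf{b};\cdot)$ was just shown to coincide; and the normalisers agree because $p(W_t=1)=\int\pi(\mathbf{b};\bm{\theta}_t)\,p(\B_t=\mathbf{b})\,d\mathbf{b}=\int\pi(\mathbf{b};\bm{\theta}_{t^\prime})\,p(\B_{t^\prime}=\mathbf{b})\,d\mathbf{b}=p(W_{t^\prime}=1)$. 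Hence $p(\B_t=\mathbf{b}_\diamond\mid W_t=1)=p(\B_{t^\prime}=\mathbf{b}_\diamond\mid W_{t^\prime}=1)$, the first claim. For the noise equation I would read off from the causal graph (\Cref{fig:causal-single-env}) that $W_t$ is a child of $\X_t$ only and that the sole path $O_t\leftrightarrow\B_t\to\X_t\to W_t$ is blocked by $\B_t$, yielding $O_t\indep W_t\mid\B_t$. Marginalising, $p(O_t=o_\diamond\mid W_t=1)=\int p(O_t=o_\diamond\mid\B_t=\mathbf{b})\,p(\B_t=\mathbf{b}\mid W_t=1)\,d\mathbf{b}$; since the joint law of $(\B_t,O_t)$ is round-independent, the conditional $p(O_t=o_\diamond\mid\B_t=\mathbf{b})$ does not vary with the round, while the baseline posterior was just equated across $t,t^\prime$, so the second claim follows.

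The main obstacle — and the only place needing genuine care — is justifying the conditional independence $O_t\indep W_t\mid\B_t$ and the claim that the \emph{common} shift is what makes the ranking collapse to a monotone function of $\B_t^\top\bm{\theta}_t$ alone; both hinge on homogeneity (H1)--(H2), which removes any agent-specific coupling between the strategic response and the baseline. Were the shift heterogeneous, $\X_t^\top\bm{\theta}_t$ would no longer be a monotone transform of $\B_t^\top\bm{\theta}_t$, and neither the scale-invariance of $\pi$ nor the clean cancellation of priors and normalisers in the Bayes step would survive.
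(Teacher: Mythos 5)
Your proposal is correct and takes essentially the same route as the paper's own proof (which is stated for the multi-DM case in \Cref{lemma:no-shift-m} and translated back): you reduce the ranking-selection probability to a function of the baseline by cancelling the common strategic shift guaranteed by (H1), observe that $p\big(\B_t^\top\bm{\theta}_t\le\mathbf{b}^\top\bm{\theta}_t\big)$ is invariant under positive rescaling of $\bm{\theta}_t$, and then push this through Bayes' rule for $\B_t$ and through the conditional independence $O_t\indep W_t\mid\B_t$ for the noise. The only cosmetic difference is that the paper applies Bayes' rule to $O_t$ directly and integrates against $p(\mathbf{b}\mid O_t)$, whereas you integrate against the already-equated posterior of $\B_t$; both hinge on the same conditional independence.
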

\begin{proof}
We refer the readers to the corresponding proof in the multiple decision makers setting (\Cref{apx:local-exo-m}), as it can be trivially translated backwards. 
\end{proof}

We are now in a position to present the main proof of \Cref{theorem:local-exo-s}.

\begin{proof}[Main Proof]
Using the result in \Cref{lemma:no-shift-s}, when there exist two rounds $t$ and $t^\prime$ such that $\bm{\theta}_t=k\bm{\theta}_{t^\prime}$ for some $k>0$, we obtain 
\begin{align*}
\mathbb{E}\left[Y_t \,\vert\, W_t=1;\  \bm{\theta}_t\right] - \mathbb{E}\left[Y_{t^\prime} \,\vert\, W_{t^\prime}=1;\ \bm{\theta}_{t^\prime}\right]
=\ &\Big(\mathbb{E}\left[\X_{t} \,\vert\, W_t=1;\  \bm{\theta}_t\right]-\mathbb{E}\left[\X_{t^\prime} \,\vert\, W_{t^\prime}=1;\  \bm{\theta}_{t^\prime}\right]\Big)^\top\bm{\theta}^*
\\
&\hspace{5mm} + \underbrace{\Big(\mathbb{E}\left[O_t\mid W_t=1;\ \bm{\theta}_t\right] - \mathbb{E}\left[O_{t^\prime}\mid W_{t^\prime}=1;\ \bm{\theta}_{t^\prime}\right]\Big)}_{=0}.
\end{align*}%
This concludes the proof.
\end{proof}

\subsection{Dominant Strategy (Proposition \ref{prop:dom-strat})}
\label{apx:dom-strat}
We rewrite $\mathcal{Q}_i(\bm{\theta}_t^\text{all})$ respectively as an addition of two functions of $\bm{\theta}_{it}$ and $\bm{\theta}_t^{-i}$:
\begin{align*}
\mathcal{Q}_i(\{\bm{\theta}_{it},\bm{\theta}_t^{-i}\}) &= \text{cBP}_i(\{\bm{\theta}_{it},\bm{\theta}_t^{-i}\}) + \text{cPI}_i(\{\bm{\theta}_{it},\bm{\theta}_t^{-i}\}) \\
&= g_i(\bm{\theta}_{it}) + h_i(\bm{\theta}_t^{-i}) + c_i + \Big(\sum^n_{j=1}\gamma_j\bm{\theta}_{jt}\Big)^\top\homomatrix\homomatrix^\top\bm{\theta}_i^* \\
&= \Bigg(g_i(\bm{\theta}_{it}) + \gamma_i\bm{\theta}_{it}^\top\homomatrix\homomatrix^\top\bm{\theta}_i^*\Bigg)  
+ \Bigg(h_i(\bm{\theta}_t^{-i}) + \Big(\sum_{j\neq i}\gamma_j\bm{\theta}_{jt}\Big)^\top\homomatrix\homomatrix^\top\bm{\theta}_i^* + c_i\Bigg) \\
&= \phi_i(\bm{\theta}_{it}) + \psi_i(\bm{\theta}_t^{-i})
\end{align*}
where we group the terms on the right-hand side together and further denote them as $\phi_i$ and $\psi_i$. By Assumption (M1), the functional forms of $g_i$, $h_i$, and the constant $c_i$ are fixed regardless of $\bm{\theta}_t^{-i}$. Thus, we obtain the same result where the functional forms of $\phi_i$ and $\psi_i$ are independent of $\bm{\theta}_t^{-i}$. 

Let $\bm{\theta}_{it}=\bm{\theta}_i^\dagger$ be a maximiser for $\mathcal{Q}_i(\{\bm{\theta}_{it},\bm{\theta}_{\diamond}^{-i}\})$ where the parameter $\bm{\theta}_t^{-i}$ takes on the value $\bm{\theta}_\diamond^{-i}$. Then,
\begin{eqnarray*}
&\mathcal{Q}_i(\{\bm{\theta}_i^\dagger,\bm{\theta}_{\diamond}^{-i}\}) - \mathcal{Q}_i(\{\bm{\theta}_{it},\bm{\theta}_{\diamond}^{-i}\}) &\geq 0, \forall \bm{\theta}_{it} \\
\Rightarrow &\phi_i(\bm{\theta}_i^\dagger)+\psi_i(\bm{\theta}_\diamond^{-i}) - \phi_i(\bm{\theta}_{it})-\psi_i(\bm{\theta}_\diamond^{-i}) &\geq 0, \forall \bm{\theta}_{it} \\
\Rightarrow &\phi_i(\bm{\theta}_i^\dagger) - \phi_i(\bm{\theta}_{it}) &\geq 0, \forall \bm{\theta}_{it}
.
\end{eqnarray*}

Then, for all $\bm{\theta}_{it}$ and $\bm{\theta}_t^{-i}$, we have
\begin{align*}
&\Big(\phi_i(\bm{\theta}_i^\dagger) + \psi_i(\bm{\theta}_t^{-i})\Big) - \Big(\phi_i(\bm{\theta}_{it}) + \psi_i(\bm{\theta}_t^{-i})\Big) \geq 0,
\end{align*}
which implies that
\begin{align*}
\mathcal{Q}_i(\{\bm{\theta}_i^\dagger,\bm{\theta}_t^{-i}\}) - \mathcal{Q}_i(\{\bm{\theta}_{it},\bm{\theta}_t^{-i}\}) \geq 0 .
\end{align*}
Therefore, $\bm{\theta}_{it}=\bm{\theta}_i^\dagger$ is also the maximiser for all functions in the family of $\mathcal{Q}_i(\{\bm{\theta}_{it}, \bm{\theta}_t^{-i}\})$, that is parameterised by $\bm{\theta}_t^{-i}$.

\subsection{Bounded Optimum, extended \texorpdfstring{(\Cref{theorem:bounded-optim-m})}{}}
\label{apx:bounded-opt-ext}

For ease of presentation, we introduce the following lemma that will later be used by the main proof.

\begin{lemma}\label{lemma:linear-optimum}
The maximiser of the linear function $\vecalpha^\top\bm{\theta}+\beta$, subject to the constraint $\|\boldsymbol{\theta}\|_2\leq 1$, has the form $\vecalpha/\|\vecalpha\|_2$.
\end{lemma}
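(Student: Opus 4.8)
The plan is to reduce the problem to a direct application of the Cauchy–Schwarz inequality. First I would observe that the additive constant $\beta$ plays no role in locating the maximiser, since it shifts the objective uniformly over the feasible set $\{\bm{\theta} : \|\bm{\theta}\|_2 \leq 1\}$; hence it suffices to maximise the linear form $\vecalpha^\top\bm{\theta}$ over the closed unit ball. I would also dispose of the degenerate case $\vecalpha = \mathbf{0}$ immediately, since then the objective is constant and the claimed form $\vecalpha/\|\vecalpha\|_2$ is undefined; for the remainder assume $\vecalpha \neq \mathbf{0}$.

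The core step is to establish the upper bound. By Cauchy–Schwarz, for every feasible $\bm{\theta}$ we have
\begin{equation*}
\vecalpha^\top\bm{\theta} \leq \|\vecalpha\|_2\,\|\bm{\theta}\|_2 \leq \|\vecalpha\|_2,
\end{equation*}
where the second inequality uses the constraint $\|\bm{\theta}\|_2 \leq 1$. This shows $\|\vecalpha\|_2$ is an upper bound on the objective over the feasible set. Next I would exhibit the candidate $\bm{\theta} = \vecalpha/\|\vecalpha\|_2$, verify it is feasible (its norm equals $1 \leq 1$), and compute that it attains the bound: $\vecalpha^\top(\vecalpha/\|\vecalpha\|_2) = \|\vecalpha\|_2^2/\|\vecalpha\|_2 = \|\vecalpha\|_2$. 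Since a feasible point meets the upper bound, it is a maximiser, and adding back $\beta$ gives the stated optimal value $\|\vecalpha\|_2 + \beta$ without changing the argument.

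To conclude the maximiser has exactly the claimed form, I would pin down the equality conditions. Equality in Cauchy–Schwarz forces $\bm{\theta}$ to be a nonnegative scalar multiple of $\vecalpha$, and equality in the second inequality forces $\|\bm{\theta}\|_2 = 1$; together these single out $\bm{\theta} = \vecalpha/\|\vecalpha\|_2$ uniquely, so the maximiser is unique whenever $\vecalpha \neq \mathbf{0}$. I do not anticipate any real obstacle here: the only points requiring care are the trivial degenerate case $\vecalpha = \mathbf{0}$ and the tightness of the equality conditions (rather than merely exhibiting one optimiser), both of which are routine. An alternative route via Lagrange multipliers or the KKT conditions on the unit-ball constraint would yield the same answer, but the Cauchy–Schwarz argument is the cleanest and avoids checking constraint qualifications.
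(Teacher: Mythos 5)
Your proposal is correct and takes essentially the same approach as the paper: the paper's one-line geometric argument, $\max_{\bm{\theta}}\vecalpha^\top\bm{\theta} = \max\,\|\vecalpha\|\|\bm{\theta}\|\cos(\vecalpha,\bm{\theta})$, is precisely the Cauchy--Schwarz bound you spell out, attained by the unit vector aligned with $\vecalpha$. Your write-up is somewhat more careful (the degenerate case $\vecalpha=\mathbf{0}$ and the uniqueness of the maximiser), but the underlying idea is identical.
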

\begin{proof}
The result follows either by applying the KKT conditions with the equivalent constraint $\|\bm{\theta}\|^2_2\leq1$, or by using geometry where $\max_{\bm{\theta}}\vecalpha^\top\bm{\theta} = \max_{(\|\bm{\theta}\|,\cos(\vecalpha,\bm{\theta}))}\|\vecalpha\|\|\bm{\theta}\|\cos(\vecalpha,\bm{\theta})$.
\end{proof}

We are now in a position to present the main proof of \Cref{theorem:bounded-optim-m}.

\begin{proof}[Main Proof]
Consider an arbitrary decision maker $i$ for whom Assumptions (M1), (M2), and (M3) hold. 
By \Cref{prop:dom-strat}, all objective functions in the family $\mathcal{Q}_i(\{\bm{\theta}_{it},\bm{\theta}_t^{-i}\})$ have a common maximiser regardless of $\bm{\theta}_t^{-i}$. Hence, it is sufficient to analyse an objective function for an arbitrary value of $\bm{\theta}_t^{-i}$. To this end, we have
\begin{align*}
\mathcal{Q}_i(\{\bm{\theta}_{it},\bm{\theta}_t^{-i}\}) &= \text{cBP}_i(\{\bm{\theta}_{it},\bm{\theta}_t^{-i}\}) + \text{cPI}_i(\{\bm{\theta}_{it},\bm{\theta}_t^{-i}\}) \\
&= g_i(\bm{\theta}_{it}) + h_i(\bm{\theta}_t^{-i}) + c_i + \Big(\sum^n_{j=1}\gamma_j\bm{\theta}_{jt}\Big)^\top\homomatrix\homomatrix^\top\bm{\theta}_i^* \\
&= \Bigg(g_i(\bm{\theta}_{it}) + \gamma_i\bm{\theta}_{it}^\top\homomatrix\homomatrix^\top\bm{\theta}_i^*\Bigg) 
+ \Bigg(\underbrace{h_i(\bm{\theta}_t^{-i}) + \Big(\sum_{j\neq i}\gamma_j\bm{\theta}_{jt}\Big)^\top\homomatrix\homomatrix^\top\bm{\theta}_i^* + c_i}_{\psi_i(\bm{\theta}_t^{-i})}\Bigg) \\
&= \Big(\vecalpha_i^\top\bm{\theta}_{it}+\beta_i + \gamma_i\bm{\theta}_{it}^\top\homomatrix\homomatrix^\top\bm{\theta}_i^*\Big) + \psi_i(\bm{\theta}_t^{-i}) \\
&= \big(\vecalpha_i+\gamma_i\homomatrix\homomatrix^\top\bm{\theta}_i^*\big)^\top\bm{\theta}_{it} + \Big(\beta_i + \psi_i(\bm{\theta}_t^{-i})\Big).
\end{align*}
Then, by \Cref{lemma:linear-optimum}, 
we obtain the maximiser $\bm{\theta}_{it}=\bm{\theta}_i^\text{AO}$ where
$$\bm{\theta}_i^\text{AO} = \frac{\vecalpha_i+\gamma_i\homomatrix\homomatrix^\top\bm{\theta}_i^*}{\|\vecalpha_i+\gamma_i\homomatrix\homomatrix^\top\bm{\theta}_i^*\|_2}.$$
This concludes the proof.
\end{proof}

\subsection{Maximum Improvement, extended}
\label{apx:max-imp-ext}

\begin{proof}
Consider an arbitrary decision maker $i$ for whom Assumptions (M1), (M2), and (M3) hold. 
Recall that we can decompose its parameterised objective into cBP$_i$ and cPI$_i$ as follows:
\begin{align*}
&\mathcal{Q}_i(\{\bm{\theta}_{it},\bm{\theta}_t^{-i}\}) = \text{cBP}_i(\{\bm{\theta}_{it},\bm{\theta}_t^{-i}\}) + \text{cPI}_i(\{\bm{\theta}_{it},\bm{\theta}_t^{-i}\}),
\end{align*}
where each of those two terms is a linear function of $\bm{\theta}_{it}$, and regardless of the value $\bm{\theta}_t^{-t}$, they take the following forms:
\begin{align*}
\text{cBP}_i(\{\bm{\theta}_{it},\bm{\theta}_t^{-i}\})\ &=\ g_i(\bm{\theta}_{it}) + h_i(\bm{\theta}_t^{-i}) + c_i\ =\ \vecalpha_i^\top\bm{\theta}_{it} + \beta_i + h_i(\bm{\theta}_t^{-i}) + c_i.
\\
\text{cPI}_i(\{\bm{\theta}_{it},\bm{\theta}_t^{-i}\})\ &=\ \Big(\sum^n_{j=1}\gamma_j\bm{\theta}_{jt}\Big)^\top\homomatrix\homomatrix^\top\bm{\theta}_i^*\ =\ \gamma_i\bm{\theta}_{it}^\top\homomatrix\homomatrix^\top\bm{\theta}_i^* + \Big(\sum_{j\neq i}\gamma_j\bm{\theta}_{jt}\Big)^\top\homomatrix\homomatrix^\top\bm{\theta}_i^*.
\end{align*}%

Furthermore, because the functional forms of $g_i$ , $h_i$, and the constant $c_i$ are not affected by $\bm{\theta}_{it}$ and $\bm{\theta}_t^{-i}$, we can find the maximisers (when $\gamma_i > 0$) for these linear functions using the similar technique in \Cref{prop:dom-strat} and \Cref{theorem:bounded-optim-m}. That is,
\begin{align*}
\frac{\vecalpha_i}{\|\vecalpha_i\|_2} &= {\arg\max}_{\bm{\theta}_{it}:\|\bm{\theta}_{it}\|_2\leq1} \text{cBP}_i(\{\bm{\theta}_{it},\bm{\theta}_t^{-i}\}), \forall \bm{\theta}_t^{-i},
\\
\frac{\homomatrix\homomatrix^\top\thetastar_i}{\|\homomatrix\homomatrix^\top\thetastar_i\|_2} &= {\arg\max}_{\bm{\theta}_{it}:\|\bm{\theta}_{it}\|_2\leq1} \text{cPI}_i(\{\bm{\theta}_{it},\bm{\theta}_t^{-i}\}), \forall \bm{\theta}_t^{-i} .
\end{align*}
When $\vecalpha_i = (k_i-\gamma_i)\homomatrix\homomatrix^\top\thetastar_i$ for some $k_i>0$, $\thetaao_i$ also maximises the improvements, cPI$_i(\{\bm{\theta}_{it},\bm{\theta}_t^{-i}\})$ for all $\bm{\theta}_t^{-i}$, i.e.,
\begin{align*}
\thetaao_i = \frac{\vecalpha_i+\gamma_i\homomatrix\homomatrix^\top\thetastar_i}{\|\vecalpha_i+\gamma_i\homomatrix\homomatrix^\top\thetastar_i\|_2} = \frac{k_i\homomatrix\homomatrix^\top\thetastar_i}{\|k_i\homomatrix\homomatrix^\top\thetastar_i\|_2}
= sgn(k_i)\frac{\homomatrix\homomatrix^\top\thetastar_i}{\|\homomatrix\homomatrix^\top\thetastar_i\|_2}
= \frac{\homomatrix\homomatrix^\top\thetastar_i}{\|\homomatrix\homomatrix^\top\thetastar_i\|_2}
\end{align*}
where $sgn(\cdot)$ refers to the sign function.
\end{proof}

\subsection{Local Exogeneity, extended \texorpdfstring{(\Cref{theorem:local-exo-m})}{}}
\label{apx:local-exo-m}

For clarity, we introduce the following lemma that will be used later in the main proof.

\begin{lemma}\label{lemma:no-shift-m}
Suppose that all decision makers follow the cooperative protocol (\Cref{def:protocol}) in a pair of two rounds $t_1$ and $t_2$. Then, there is no shift in the conditional distribution of enrolled agents within those two rounds. 
Specifically, for any arbitrary decision maker $i$, we have that
\begin{align*}
p(\B_{t}=\mathbf{b}_\diamond\mid Z_{t}=i) &= p(\B_{t^\prime}=\mathbf{b}_\diamond\mid Z_{t^\prime}=i), \forall \mathbf{b}_\diamond\in\mathbb{R}^m,
\\
p(O_{it}=o_\diamond\mid Z_{t}=i) &= p(O_{it^\prime}=o_\diamond\mid Z_{t^\prime}=i), \forall o_\diamond\in\mathbb{R}.
\end{align*}
\end{lemma}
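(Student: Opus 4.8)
The plan is to reduce both claimed equalities to a single statement: that the joint law of $\big(\B_t, \{W_{jt}\}_{j=1}^n\big)$ coincides with that of $\big(\B_{t'}, \{W_{jt'}\}_{j=1}^n\big)$ (and likewise with $O_{it}$ adjoined), after which both conditional distributions follow by folding in the round-invariant compliance mechanism granted by (H2). First I would record the structural consequence of (H1) and (H2): since $\mathcal{E}_t=\homomatrix$ and $\gamma_{it}=\gamma_i$, every agent in round $t$ shares the same best response $\mathbf{a}_t=\homomatrix^\top\sum_{j}\gamma_j\bm{\theta}_{jt}$, so the covariate decomposes as $\X_t=\B_t+\mathbf{s}_t$ with a \emph{deterministic} shift $\mathbf{s}_t=\homomatrix\homomatrix^\top\sum_j\gamma_j\bm{\theta}_{jt}$ that does not vary across agents. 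I also use that baselines (and the noise) are drawn from the same law each round, i.e.\ $\B_t\overset{d}{=}\B_{t'}$ and $(\B_t,O_{it})\overset{d}{=}(\B_{t'},O_{it'})$.

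The crux---and the step I expect to be the main obstacle---is showing that the per-agent admission probabilities are invariant across the two rounds. Under ranking selection (\Cref{def:ranking-selection}), the probability that DM $j$ admits an agent with baseline $\mathbf{b}$ is $\operatorname{CDF}_{\X_t^\top\bm{\theta}_{jt}}\big((\mathbf{b}+\mathbf{s}_t)^\top\bm{\theta}_{jt}\big)$. Because the population projection $\X_t^\top\bm{\theta}_{jt}$ and the agent's own projection are shifted by the \emph{same} constant $\mathbf{s}_t^\top\bm{\theta}_{jt}$, the shift cancels and this equals $\operatorname{CDF}_{\B_t^\top\bm{\theta}_{jt}}(\mathbf{b}^\top\bm{\theta}_{jt})$; this cancellation is precisely what the homogeneity assumptions buy us. Next I would invoke the protocol: with $\bm{\theta}_{jt}=k_j\bm{\theta}_{jt'}$ and $k_j>0$, positive scaling preserves the order of the projections, so $\operatorname{CDF}_{\B_t^\top\bm{\theta}_{jt}}(\mathbf{b}^\top\bm{\theta}_{jt})=\operatorname{CDF}_{\B_t^\top\bm{\theta}_{jt'}}(\mathbf{b}^\top\bm{\theta}_{jt'})$, and since $\B_t\overset{d}{=}\B_{t'}$ this equals the round-$t'$ admission probability. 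As, given $\B_t=\mathbf{b}$, the statuses $\{W_{jt}\}_j$ are independent Bernoullis with these probabilities, the conditional law $p(\{W_{jt}\}=\mathbf{w}\mid\B_t=\mathbf{b})$ is round-invariant, and combined with $\B_t\overset{d}{=}\B_{t'}$ this yields the joint invariance $p(\B_t=\mathbf{b},\{W_{jt}\}=\mathbf{w})=p(\B_{t'}=\mathbf{b},\{W_{jt'}\}=\mathbf{w})$.

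To finish the first equality I would write $p(\B_t=\mathbf{b}\mid Z_t=i)=p(\B_t=\mathbf{b},Z_t=i)/p(Z_t=i)$ and expand the numerator over $\mathbf{w}$; by (H2), $p(Z_t=i\mid\{W_{jt}\}=\mathbf{w},\B_t=\mathbf{b})$ equals a fixed function $\pi_i(\mathbf{w})$ independent of the round and of $\B_t$, so the numerator is a sum of round-invariant terms, hence invariant, as is the denominator after marginalising. For the noise equality I would run the same argument with the joint $(\B_t,O_{it})$ in place of $\B_t$: its law $P_{\B,O}$ is fixed across rounds, the selection statuses depend on $(\B_t,O_{it})$ only through $\B_t$ (so $\{W_{jt}\}\indep O_{it}\mid\B_t$, per the causal graph), and $Z_t$ depends on $(\B_t,O_{it})$ only through $\{W_{jt}\}$; combining these with the already-established invariance of $p(\{W_{jt}\}=\mathbf{w}\mid\B_t=\mathbf{b})$ shows $p(O_{it}=o,Z_t=i)$ is round-invariant, and dividing by $p(Z_t=i)$ gives the claim. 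The only point needing care is that the correlation between $O_{it}$ and $\B_t$ enters solely through the fixed joint $P_{\B,O}$ and never reaches the selection channel, which is why the shift cancellation on the $\B$-side already controls the $O$-conditional.
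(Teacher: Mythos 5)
Your proposal is correct and follows essentially the same route as the paper's proof: both cancel the common strategic shift to express each DM's admission probability as a function of the baseline alone, use positivity of the scaling factors $k_j$ to show these probabilities are identical across the two rounds, invoke the compliance homogeneity in (H2) so that $p(Z_t=i\mid\{W_{jt}\}_j)$ is a round-invariant function of the statuses, and finish with Bayes' rule (handling $O_{it}$ by noting it influences selection only through the fixed joint law with $\B_t$). The only difference is organisational—you phrase the intermediate step as invariance of the joint law of $(\B_t,\{W_{jt}\}_j)$, whereas the paper establishes invariance of $p(Z_t=i\mid\B_t=\mathbf{b}_\diamond)$ directly—but the underlying argument is the same.
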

\begin{proof}
At an arbitrary round $t$, any student arriving at this round will have the same best response $\mathbf{a}_t$ because of the assumption of common effort conversion $\mathcal{E}_t=\homomatrix$. Let $(\mathbf{b}_\diamond, \mathbf{x}_\diamond)$ be any realisation of the pair of random variables $(\B_t, \X_t\,|\,\B_t=\mathbf{b}_\diamond)$, we rewrite each selection function $\bm{\delta}_{\bm{\theta}_{\mathit{jt}}}$ from being a function of $\mathbf{x}_\diamond$ into a function of $\mathbf{b}_\diamond$. That is, for all $j \in [n]$,
\begin{eqnarray*}
 \bm{\delta}_{\bm{\theta}_{\mathit{jt}}}(\mathbf{x}_\diamond) &=& p(\X_t^\top\bm{\theta}_{\mathit{jt}}\leq\mathbf{x}_\diamond^\top\bm{\theta}_{\mathit{jt}})
\\
&=& p\big((\B_t+\homomatrix\mathbf{a}_t)^\top\bm{\theta}_{\mathit{jt}} \leq (\mathbf{b}_\diamond+\homomatrix\mathbf{a}_t)^\top\bm{\theta}_{\mathit{jt}}\big)
\\
&=& p\big((\B_t-\mathbf{b}_\diamond)^\top\bm{\theta}_{\mathit{jt}} \leq 0\big)
\\
&=& p(W_{\mathit{jt}}=1\mid\B_t=\mathbf{b}_\diamond;\ \bm{\theta}_{\mathit{jt}}).
\end{eqnarray*}

% \Krik{This compound sentence is not easy to understand.}
When $\bm{\theta}_{jt^\prime}=k_j\bm{\theta}_{jt}$ with $k_j>0$ for all $j\in[n]$, the conditional distribution of admission remains the same in those two rounds, as shown below. For all $j\in[n]$,
\begin{align*}
p(W_{jt^\prime}=1\mid\B_{t^\prime}=\mathbf{b}_\diamond;\ \bm{\theta}_{jt^\prime})
&=\ p\big((\B_{t^\prime}-\mathbf{b}_\diamond)^\top\bm{\theta}_{jt^\prime} \leq 0\big)
\\
&=\ p\big((\B_{t^\prime}-\mathbf{b}_\diamond)^\top(k_j\bm{\theta}_{jt}) \leq 0\big)
\\
&=\ p\big((\B_{t}-\mathbf{b}_\diamond)^\top\bm{\theta}_{jt} \leq 0\big)
\\
&=\ p(W_{jt}=1\mid\B_{t}=\mathbf{b}_\diamond;\ \bm{\theta}_{jt})
\end{align*}
where we can replace $\B_{t^\prime}$ with $\B_t$ because they have the same marginal distribution.

Since all agents have the same compliance behaviour where $Z_t\indep\{\X_t,\B_t\}\mid\{W_{jt}\}_{j=1}^n$, we obtain the following for any arbitrary decision maker $i$:
$p(Z_{t^\prime}=i\mid\{W_{jt^\prime}=w_j^\diamond\}_{j=1}^n)
 = p(Z_{t}=i\mid\{W_{jt}=w_j^\diamond\}_{j=1}^n)$
where $w_j^\diamond$ denotes a realisation of the binary variable $W_{jt}$.

Putting together the results we have shown in this proof about the admission and compliance probabilities, we obtain the following:
% \Krik{Be specific. What do you mean exactly by ``everything''? Everything in the universe?}
\begin{align*}
p(Z_{t^\prime}=i\mid\B_{t^\prime}=\mathbf{b}_\diamond) 
&= \sum_{\{w_j^\diamond\}_{j=1}^n\in\mathcal{W}^\text{all}}\Bigg[p(Z_{t^\prime}=i\mid\{W_{jt^\prime}=w_j^\diamond\}_{j=1}^n) \prod_{j=1}^n p(W_{jt^\prime}=w_j^\diamond\mid\B_{t^\prime}=\mathbf{b}_\diamond;\ \bm{\theta}_{jt^\prime})\Bigg]
\\
&= \sum_{\{w_j^\diamond\}_{j=1}^n\in\mathcal{W}^\text{all}}\Bigg[p(Z_{t}=i\mid\{W_{jt}=w_j^\diamond\}_{j=1}^n) \prod_{j=1}^n p(W_{jt}=w_j^\diamond\mid\B_{t}=\mathbf{b}_\diamond;\ \bm{\theta}_{jt})\Bigg]
\\
&= p(Z_{t}=i\mid\B_{t}=\mathbf{b}_\diamond),
\end{align*}
where $\mathcal{W}^\text{all}$ denotes the space containing all possible $n$-dimensional binary vectors.
We then use the Bayes' theorem to derive the desired result as follows:
\begin{align*}
p(\B_{t^\prime}=\mathbf{b}_\diamond\mid Z_{t^\prime}=i) &\propto p(Z_{t^\prime}=i\mid\B_{t^\prime}=\mathbf{b}_\diamond) p(\B_{t^\prime}=\mathbf{b}_\diamond)
\\
\Rightarrow p(\B_{t^\prime}=\mathbf{b}_\diamond\mid Z_{t^\prime}=i) &= p(\B_{t}=\mathbf{b}_\diamond\mid Z_{t}=i).
\end{align*}
This result has been shown for an arbitrary decision maker $i$ and thus, it also holds for all decision makers. Then, we can also obtain the result for $p(O_{it}=o_\diamond\mid Z_t=i)$ in a similar way: For all $o_\diamond$, 
\begin{align*}
p(O_{it^\prime}=o_\diamond\mid Z_{t^\prime}=i)
&\propto \int_{\mathbf{b}_{t^\prime}} p(Z_{t^\prime}=i\mid\mathbf{b}_{t^\prime})\ p(\mathbf{b}_{t^\prime}\mid O_{it^\prime}=o_\diamond)\ d\mathbf{b}_{t^\prime} p(O_{it^\prime}=o_\diamond)
\\
&\propto \int_{\mathbf{b}_{t}} p(Z_{t}=i\mid\mathbf{b}_{t})\ p(\mathbf{b}_{t}\mid O_{it}=o_\diamond)\ d\mathbf{b}_{t} p(O_{it}=o_\diamond)
\\
&\propto p(O_{it}=o_\diamond\mid Z_{t}=i)
.
\end{align*}
Then we obtain $p(O_{it^\prime}=o_\diamond\mid Z_{t^\prime}=i) = p(O_{it}=o_\diamond\mid Z_{t}=i)$, which concludes the proof.
% \Krik{I don't get why we need the last equality.}
\end{proof}

We are now in a position to present the main proof of \Cref{theorem:local-exo-m}.

\begin{proof}[Main Proof]
When the pair of rounds $t$ and $t^\prime$ satisfies the cooperative protocol (\Cref{def:protocol}), it follows from  \Cref{lemma:no-shift-m} that
\begin{align*}
\mathbb{E}\left[Y_{it^\prime} \,\vert\, Z_{t^\prime}=i\ ;\  \bm{\theta}^{\text{all}}_{t^\prime}\right] - \mathbb{E}\left[Y_{it} \,\vert\, Z_{t}=i\ ;\ \bm{\theta}^{\text{all}}_{t}\right] 
=\ &\Big(\mathbb{E}\left[\X_{t^\prime} \,\vert\, Z_{t^\prime}=i\ ;\  \bm{\theta}^{\text{all}}_{t^\prime}\right]-\mathbb{E}\left[\X_{t} \,\vert\, Z_{t}=i\ ;\  \bm{\theta}^{\text{all}}_{t}\right]\Big)^\top\bm{\theta}^*_i
\\
&\quad + \underbrace{\Big(\mathbb{E}\left[O_{it^\prime}\mid Z_{t^\prime}=i;\ \bm{\theta}_{t^\prime}^\text{all}\right] - \mathbb{E}\left[O_{it}\mid Z_{t}=i;\ \bm{\theta}_t^\text{all}\right]\Big),}_{=0}
\end{align*}%
which concludes the proof.
\end{proof}

\section{Instrumental Variable for Causal Estimation under Competitive Selection}
\label{apx:invalid-iv}
With the selection procedure, one can only observe the outcomes of selected agents, thus obtaining the biased data set $D=\{\bm{\theta}_t,\mathbf{x}_t,\{y_t\mid w_t=1\}\}^T_{t=1}$. 
Since conditioning on the collider $W_t$ renders $\bm{\theta}_t$ and $\B_t$ dependent, this leads to $\bm{\theta}_t\dep Y_t\mid W_t$ via the noise $O_t$. 
% \Krik{I modified this sentence to make it more specific. Be specific.}
As a result, we cannot use $\bm{\theta}_t$ as an instrumental variable in our setting, unlike \citet{harris2022strategic}), because the unconfoundedness assumption required for $\bm{\theta}_t$ to be a valid IV is violated.

One could also think of grouping the two variables $(\X_t, W_t)$ together as a treatment variable to avoid explicitly conditioning on the collider $W_t$. However, this does not work out trivially since any additive noise model of the form $Y_t:=f(\X_t,W_t)+N$ implies that we have biased observation of the noise where $\mathbb{E}_D\left[N\mid\bm{\theta}_t\right]\neq\mathbb{E}_D\left[N\right]$. 
This is because both the noise $N$ and the supposed instrumental variable $\bm{\theta}_t$ must be correlated with the treatment $(\X_t, W_t)$ by design but the data set $D$ only contains the case where $W_t=1$. 
Thus, $\bm{\theta}_t$ still cannot serve as a valid instrumental variable.

\section{Connection to the Maximin Objective}\label{apx:maximin}
In this subsection, we show that when all objective functions, within the family $\mathcal{Q}_i(\{\bm{\theta}_{it},\bm{\theta}_t^{-i}\})$ of an arbitrary decision maker $i$, have a common maximiser $\bm{\theta}_{it}=\bm{\theta}_i^\dagger$, it is also a solution to the maximin strategy of the decision maker $i$.

\begin{definition}[Maximin Expected Utility] \label{def:maximin-exp}
Let $i$ be an arbitrary decision maker and $\mathcal{Q}_i(\{\bm{\theta}_{it},\bm{\theta}_t^{-i}\})$ be the family of their utility functions that are parameterised by $\bm{\theta}_t^{-i}$ which are the parameters controlled by their rival decision makers. 
We denote $P_\text{rivals}$ as the conditional distribution $P_{\bm{\theta}_t^{-i}\mid Z_t=i,\ \bm{\theta}_{it}}$ and  $\widetilde{\mathcal{Q}}_i(\bm{\theta}_{it},P_\text{rivals})$ as the expected utility of this decision maker where
$$\widetilde{\mathcal{Q}}_i(\bm{\theta}_{it},P_\text{rivals}) = \mathbb{E}_{P_\text{rivals}}\left[\mathcal{Q}_i(\{\bm{\theta}_{it},\bm{\theta}_t^{-i}\})\mid Z_t=i,\ \bm{\theta}_{it}\right].$$
Then, the maximin expected utility objective is defined as
$$\max_{\bm{\theta}_{\mathit{it}}}\min_{P_\text{rivals}}\;\widetilde{\mathcal{Q}}_i(\bm{\theta}_{it},P_\text{rivals}).$$
\end{definition}

\begin{lemma}[]
Let $i$ be an arbitrary decision maker for whom Assumption (M1) holds and suppose that all the objective functions within the family $\mathcal{Q}_i(\{\bm{\theta}_{it},\bm{\theta}_t^{-i}\})$ have a common maximiser $\bm{\theta}_{it}=\bm{\theta}_i^\dagger$. 
When there is no constraint on the behaviour of $P_\text{rivals}$ and a solution to the maximin objective (Def. \ref{def:maximin-exp}) exists, then this solution contains $\bm{\theta}_{it}=\bm{\theta}_i^\dagger$, i.e.,
$$\bm{\theta}_i^\dagger\in\arg\max_{\bm{\theta}_{it}}\min_{P_\text{rivals}}\;\widetilde{\mathcal{Q}}_i(\bm{\theta}_{it},P_\text{rivals}).$$
\end{lemma}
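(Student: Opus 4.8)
The plan is to reduce the maximin problem to an elementary statement about infima and then invoke the common-maximiser property established in \Cref{prop:dom-strat}. Write $V(\bm{\theta}_{it}) := \min_{P_\text{rivals}} \widetilde{\mathcal{Q}}_i(\bm{\theta}_{it}, P_\text{rivals})$ for the inner value; the goal is to show that $\bm{\theta}_i^\dagger \in \arg\max_{\bm{\theta}_{it}} V(\bm{\theta}_{it})$, which is exactly the claim. The first step is to observe that, since no constraint is imposed on $P_\text{rivals}$, the adversary is free to choose any distribution over the rivals' parameters, including point masses. Minimising an expectation over all such distributions collapses to the pointwise infimum of the integrand, so that
$$V(\bm{\theta}_{it}) = \min_{P_\text{rivals}} \mathbb{E}_{P_\text{rivals}}\big[\mathcal{Q}_i(\{\bm{\theta}_{it}, \bm{\theta}_t^{-i}\})\big] = \inf_{\bm{\theta}_t^{-i}} \mathcal{Q}_i(\{\bm{\theta}_{it}, \bm{\theta}_t^{-i}\}).$$

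Next I would feed in the dominant-strategy property. By hypothesis (equivalently, by \Cref{prop:dom-strat}), $\bm{\theta}_{it} = \bm{\theta}_i^\dagger$ is a maximiser of $\mathcal{Q}_i(\{\bm{\theta}_{it}, \bm{\theta}_t^{-i}\})$ for \emph{every} fixed value of $\bm{\theta}_t^{-i}$; that is, $\mathcal{Q}_i(\{\bm{\theta}_i^\dagger, \bm{\theta}_t^{-i}\}) \geq \mathcal{Q}_i(\{\bm{\theta}_{it}, \bm{\theta}_t^{-i}\})$ holds for all $\bm{\theta}_{it}$ and all $\bm{\theta}_t^{-i}$. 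Since the infimum operator is monotone — a pointwise-dominated function has a dominated infimum — taking $\inf_{\bm{\theta}_t^{-i}}$ of both sides gives $V(\bm{\theta}_{it}) \leq V(\bm{\theta}_i^\dagger)$ for every $\bm{\theta}_{it}$. Hence $\bm{\theta}_i^\dagger$ maximises $V$, which completes the argument.

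The main obstacle is the rigour of the first step, namely the identity $\min_{P_\text{rivals}} \mathbb{E}_{P_\text{rivals}}[\,\cdot\,] = \inf(\,\cdot\,)$. I would justify it in two directions: the ``$\geq$'' direction is immediate because $\mathbb{E}_{P}[f] \geq \inf f$ for any distribution $P$; the ``$\leq$'' direction uses that point masses are admissible (or, if the infimum is not attained, a sequence of distributions concentrating near an infimising value), which is precisely what the unconstrained assumption on $P_\text{rivals}$ buys us. The existence clause in the statement lets me treat the outer $\max$ and, where convenient, the inner $\min$ as attained rather than as suprema/infima, so that the Dirac reduction and the conclusion $\bm{\theta}_i^\dagger \in \arg\max V$ are both well defined. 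Measurability of the map $\bm{\theta}_t^{-i} \mapsto \mathcal{Q}_i(\{\bm{\theta}_{it}, \bm{\theta}_t^{-i}\})$ is inherited from the linear-plus-additive structure of $\mathcal{Q}_i$ used throughout, so no further regularity is needed.
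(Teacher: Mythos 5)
Your proof is correct, and it takes a leaner route than the paper's. The paper's proof works through the additive structure of Assumption (M1): it decomposes $\mathcal{Q}_i(\{\bm{\theta}_{it},\bm{\theta}_t^{-i}\})=\phi_i(\bm{\theta}_{it})+\psi_i(\bm{\theta}_t^{-i})$, pulls $\phi_i(\bm{\theta}_{it})$ out of the expectation, establishes $\min_{P_\text{rivals}}\mathbb{E}_{P_\text{rivals}}[\psi_i(\bm{\theta}_t^{-i})\mid Z_t=i,\bm{\theta}_{it}]=\min_{\bm{\theta}_t^{-i}}\psi_i(\bm{\theta}_t^{-i})$ explicitly in the discrete case, reassembles this into $\min_{P_\text{rivals}}\widetilde{\mathcal{Q}}_i(\bm{\theta}_{it},P_\text{rivals})=\min_{\bm{\theta}_t^{-i}}\mathcal{Q}_i(\{\bm{\theta}_{it},\bm{\theta}_t^{-i}\})$, and concludes with a replacement argument on the resulting maximin problem over $\mathcal{Q}_i$. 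You bypass the decomposition entirely: you collapse the minimisation over distributions to a pointwise infimum directly on $\mathcal{Q}_i$ (Dirac masses, or concentrating sequences when the infimum is unattained), and then finish with pointwise dominance plus monotonicity of the infimum. Your route buys two things. First, it never invokes (M1) in the body of the argument --- only the common-maximiser hypothesis, which the lemma assumes anyway --- making transparent that the additive structure is needed only to supply that hypothesis (via \Cref{prop:dom-strat}), not to exploit it afterwards. Second, it is not restricted to discrete rival parameters: the paper's proof explicitly confines itself to ``the discrete case of $\bm{\theta}_t^{-i}$'', while your two-direction justification of $\min_{P}\mathbb{E}_{P}[\,\cdot\,]=\inf(\,\cdot\,)$ covers general rival-parameter spaces. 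What the paper's decomposition provides in exchange is the slightly stronger structural fact that the inner minimum $\min_{\bm{\theta}_t^{-i}}\psi_i(\bm{\theta}_t^{-i})$ is a constant independent of $\bm{\theta}_{it}$, so the two maximin problems have identical inner values and not merely a shared maximiser; your dominance-plus-monotonicity argument neither needs nor yields this. Both proofs share the same two load-bearing steps --- unconstrained minimisation over distributions equals the pointwise infimum, followed by the dominant-strategy property --- so the difference is one of economy and generality rather than of conception.
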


The condition of $P_\text{rivals}$ in this lemma implies the worst-case scenario where all rivals of the decision maker $i$ cooperate to minimise its objective. We show the proof for the discrete case of $\bm{\theta}_t^{-i}$ below.
\begin{proof}
Under Assumption (M1), recall from \Cref{apx:dom-strat} 
% \Krik{Which subsection? Be specific.} 
that each objective function $\mathcal{Q}_i(\{\bm{\theta}_{it},\bm{\theta}_t^{-i}\})$ can be decomposed into $\phi_i(\bm{\theta}_{it})$ and $\psi_i(\bm{\theta}_t^{-i})$, then
\begin{align*}
\widetilde{\mathcal{Q}}_i(\bm{\theta}_{it},P_\text{rivals}) =\ &\mathbb{E}_{P_\text{rivals}}\left[\mathcal{Q}_i(\{\bm{\theta}_{it},\bm{\theta}_t^{-i}\})\mid Z_t=i,\ \bm{\theta}_{it}\right] \\
=\ &\mathbb{E}_{P_\text{rivals}}\left[\phi_i(\bm{\theta}_{it})+\psi_i(\bm{\theta}_t^{-i})\mid Z_t=i,\ \bm{\theta}_{it}\right] \\
=\ &\phi_i(\bm{\theta}_{it})+\mathbb{E}_{P_\text{rivals}}\left[\psi_i(\bm{\theta}_t^{-i})\mid Z_t=i,\ \bm{\theta}_{it}\right]
.
\end{align*}
For any value of $\bm{\theta}_{it}$, we have
\begin{align*}
\min_{P_\text{rivals}}\widetilde{\mathcal{Q}}_i(\bm{\theta}_{it},P_\text{rivals})
&=\min_{P_\text{rivals}}\Big\{\phi_i(\bm{\theta}_{it})+\mathbb{E}_{P_\text{rivals}}\left[\psi_i(\bm{\theta}_t^{-i})\mid Z_t=i,\ \bm{\theta}_{it}\right]\Big\} \\
&=
\phi_i(\bm{\theta}_{it})+\min_{P_\text{rivals}}\mathbb{E}_{P_\text{rivals}}\left[\psi_i(\bm{\theta}_t^{-i})\mid Z_t=i,\ \bm{\theta}_{it}\right].
\end{align*}
In the discrete case and when there is no constraint on $P_\text{rivals}$, it can be seen easily that
\begin{align*}
\min_{P_\text{rivals}}\mathbb{E}_{P_\text{rivals}}\left[\psi_i(\bm{\theta}_t^{-i})\mid Z_t=i,\ \bm{\theta}_{it}\right]
&= 
\min_{P_\text{rivals}}\sum_{\bm{\theta}_t^{-i}}\psi_i(\bm{\theta}_t^{-i})\ p(\bm{\theta}_t^{-i}\mid Z_t=i,\ \bm{\theta}_{it}) \\
&=\min_{\bm{\theta}_t^{-i}}\psi_i(\bm{\theta}_t^{-i})
.
\end{align*}
Putting all results so far in this proof together yields
% \Krik{What ``all''? Be specific}
\begin{align*}
\min_{P_\text{rivals}}\widetilde{\mathcal{Q}}_i(\bm{\theta}_{it},P_\text{rivals})
&= \phi_i(\bm{\theta}_{it})+\min_{P_\text{rivals}}\mathbb{E}_{P_\text{rivals}}\left[\psi_i(\bm{\theta}_t^{-i})\mid Z_t=i,\ \bm{\theta}_{it}\right] \\
&= \phi_i(\bm{\theta}_{it})+\min_{\bm{\theta}_t^{-i}}\psi_i(\bm{\theta}_t^{-i}) \\
&= \min_{\bm{\theta}_t^{-i}}\mathcal{Q}_i(\{\bm{\theta}_{it}, \bm{\theta}_t^{-i}\}).
\end{align*}

When a maximin solution exists, we obtain the equivalence in objectives because the inner minimisation parts on both sides are equivalent for all values of $\bm{\theta}_{it}$, i.e.,
\begin{align*}
\arg\max_{\bm{\theta}_{it}}\min_{P_\text{rivals}}\widetilde{\mathcal{Q}}_i(\bm{\theta}_{it},P_\text{rivals})
= \arg\max_{\bm{\theta}_{it}}\min_{\bm{\theta}_t^{-i}}\mathcal{Q}_i(\{\bm{\theta}_{it}, \bm{\theta}_t^{-i}\}).
\end{align*}%
It can be seen that a solution to the r.h.s contains $\bm{\theta}_{it}=\bm{\theta}_i^\dagger$ because for any pair of $\{\bm{\theta}_{it},\bm{\theta}_t^{-i}\}$, we can always obtain better value for $\mathcal{Q}_i$ by replacing $\bm{\theta}_{it}$ with $\bm{\theta}_i^\dagger$. Finally, because the inner minimisation parts are equivalent, a solution to the l.h.s must also contain $\bm{\theta}_{it}=\bm{\theta}_i^\dagger$.
\end{proof}

\section{Bounded Reduction, extended \texorpdfstring{(\Cref{col:bounded-reduction-m})}{}}
\label{apx:bounded-reduction-m}

Before showing the proof for the corollary, we first demonstrate cases where the 2nd and the 3rd conditions in \Cref{col:bounded-reduction-m}
% \Krik{What are the 2nd and the 3rd conditions? Be specific.} 
can hold simultaneously. Consider cases where $\thetastar_j=r_j\thetastar_i$ for some $r_j\in\mathbb{R}^+$ for all $j\in[n]$ then the 2nd condition leads to the 3rd condition. 
% \Krik{Please always be very specific whenever you mention something that could potentially be interpreted in different ways. Keep in mind the information asymmetry: \emph{The readers do not know as much as you do}.}

\begin{proof}[Proof sketch]
When $\vecalpha_j=(k_j-\gamma_j)\homomatrix\homomatrix^\top\thetastar_j$ with $k_j,\gamma_j>0$, for all $j\in[n]$, we obtain  $\thetaao_j\propto\homomatrix\homomatrix^\top\thetastar_j$ (\Cref{col:max-improvement-m}). Using $\thetastar_j=r_j\thetastar_i$, we get $\thetaao_j\propto\thetaao_i$ which implies $\thetaao_j=\thetaao_i$.

Finally, using the techniques in the proof of \Cref{col:bounded-reduction-s}, we arrive at
$$(\bm{\theta}_{jt})^\top\homomatrix\homomatrix^\top(\thetaao_i-\thetastar_i)^\top \geq0 \quad \forall \bm{\theta}_{jt}\in\{\thetaao_j,\thetastar_j\},$$
which concludes the proof sketch.
\end{proof}

We then show an example where there does not exist any $r_j$ such that $\thetastar_j=r_j\thetastar_i$ but both conditions (2nd and 3rd) still hold.
\begin{example}
Let there be two decision makers with $\thetastar_1=[0.4,0.9]^\top$, $\thetastar_2=[0.6,0.7]^\top$, and let $\homomatrix=\begin{bmatrix} 2 & 0 \\ 0 & 1 \end{bmatrix}$.

When $\vecalpha_j=(k_j-\gamma_j)\homomatrix\homomatrix^\top\thetastar_j$ with $k_j,\gamma_j>0 \quad \forall j\in[n]$, $\thetaao_j=\frac{\homomatrix\homomatrix^\top\thetastar_j}{\|\homomatrix\homomatrix^\top\thetastar_j\|_2}$ (see \Cref{col:max-improvement-m}). 

Let $\Lambda:=\homomatrix\homomatrix^\top(\thetaao_1-\thetastar_1)=\homomatrix\homomatrix^\top\left(\frac{\homomatrix\homomatrix^\top\thetastar_1}{\|\homomatrix\homomatrix^\top\thetastar_1\|_2}-\thetastar_1\right)$. 

Then, plug in the values and see that the 3rd condition holds:
\begin{align*}
\left\{\begin{aligned}
(\thetastar_1)^\top\homomatrix\homomatrix^\top(\thetaao_1-\thetastar_1) &= (\thetastar_1)^\top\Lambda > 0,
\\
(\thetastar_2)^\top\homomatrix\homomatrix^\top(\thetaao_1-\thetastar_1) &= (\thetastar_2)^\top\Lambda > 0,
\\
(\thetaao_1)^\top\homomatrix\homomatrix^\top(\thetaao_1-\thetastar_1) &= \left(\frac{\homomatrix\homomatrix^\top\thetastar_1}{\|\homomatrix\homomatrix^\top\thetastar_1\|_2}\right)^\top\Lambda > 0,
\\
(\thetaao_2)^\top\homomatrix\homomatrix^\top(\thetaao_1-\thetastar_1) &= \left(\frac{\homomatrix\homomatrix^\top\thetastar_2}{\|\homomatrix\homomatrix^\top\thetastar_2\|_2}\right)^\top\Lambda > 0.
\end{aligned}\right.
\end{align*}
\end{example}

To prove \Cref{col:bounded-reduction-m}, we first introduce the following lemma that leads to intermediate results. These results will also be useful later when we show \Cref{apx-col:improved-chance}.

\begin{lemma}
Under the two main assumptions, (H1) and (H2), suppose that Assumptions (M1), (M2), and (M3) hold for all DMs and each DM considers only two choices {$\bm{\theta}_{jt}\in\{\bm{\theta}_j^*,\bm{\theta}_j^{\text{AO}}\}$} for {$j\in[n]$}. Further, let $i$ denote an arbitrary DM and suppose
\begin{enumerate}
    \item $\|\thetastar_i\|_2 \leq 1$;
    \item $\vecalpha_j=(k_j-\gamma_j)\homomatrix\homomatrix^\top\thetastar_j$ with $k_j,\gamma_j>0 \quad \forall j=1,\ldots,n$;
    \item $(\bm{\theta}_{jt})^\top\homomatrix\homomatrix^\top(\thetaao_i-\thetastar_i) \geq 0$ for all decision makers $j\neq i$.
\end{enumerate}
Then, we obtain the following two inequalities for our decision maker $i$ and any decision maker $j$ (where $j\neq i$):
\begin{enumerate}
    \item $(\bm{\theta}_{jt})^\top\homomatrix\homomatrix^\top(\thetaao_i-\thetastar_i) \geq 0$;
    \item $\lambda := \big(\mathbf{a}_t(\bm{\theta}_{\diamond}^\text{all})\big)^\top\homomatrix^\top\thetaao_i - \big(\mathbf{a}_t(\bm{\theta}_{\bullet}^\text{all})\big)^\top\homomatrix^\top\thetastar_i \geq 0$.
\end{enumerate}
\end{lemma}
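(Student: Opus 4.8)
The plan is to leverage \Cref{col:max-improvement-m} to pin down the optimal parameters and then mirror the bootstrap argument from the single decision-maker lemma preceding the proof of \Cref{col:bounded-reduction-s}. First I would observe that condition~2 together with \Cref{col:max-improvement-m} forces $\thetaao_j = \homomatrix\homomatrix^\top\thetastar_j / \|\homomatrix\homomatrix^\top\thetastar_j\|_2$ for every $j$, so that in particular $\thetaao_i$ is the maximiser of the linear map $\bm{\theta}\mapsto \bm{\theta}^\top\homomatrix\homomatrix^\top\thetastar_i$ over the unit ball $\{\bm{\theta}:\|\bm{\theta}\|_2\leq 1\}$. The first claimed inequality, for any $j\neq i$, is then nothing more than condition~3 carried over verbatim, so no work is needed there; the substantive content is the self-index case and the non-negativity of $\lambda$.

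For the self-index quantities I would argue exactly as in the single decision-maker lemma. Since $\|\thetastar_i\|_2\leq 1$ (condition~1), $\thetastar_i$ is feasible for the maximisation, whence $(\thetaao_i)^\top\homomatrix\homomatrix^\top\thetastar_i \geq (\thetastar_i)^\top\homomatrix\homomatrix^\top\thetastar_i$. Writing $\mathbf{u}=\homomatrix^\top\thetaao_i$ and $\mathbf{w}=\homomatrix^\top\thetastar_i$, this reads $\mathbf{u}^\top\mathbf{w}\geq \|\mathbf{w}\|_2^2$; combining it with the elementary bound $2\mathbf{u}^\top\mathbf{w}\leq\|\mathbf{u}\|_2^2+\|\mathbf{w}\|_2^2$ gives $\mathbf{u}^\top\mathbf{w}\leq\|\mathbf{u}\|_2^2$. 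The resulting chain $\|\mathbf{w}\|_2^2\leq\mathbf{u}^\top\mathbf{w}\leq\|\mathbf{u}\|_2^2$ yields at once the self-index case $j=i$ of the first inequality, since $(\thetastar_i)^\top\homomatrix\homomatrix^\top(\thetaao_i-\thetastar_i)=\mathbf{u}^\top\mathbf{w}-\|\mathbf{w}\|_2^2\geq 0$ and $(\thetaao_i)^\top\homomatrix\homomatrix^\top(\thetaao_i-\thetastar_i)=\|\mathbf{u}\|_2^2-\mathbf{u}^\top\mathbf{w}\geq 0$, as well as the non-negativity of the self-term $(\thetaao_i)^\top\homomatrix\homomatrix^\top\thetaao_i-(\thetastar_i)^\top\homomatrix\homomatrix^\top\thetastar_i=\|\mathbf{u}\|_2^2-\|\mathbf{w}\|_2^2\geq 0$ that will feed into $\lambda$.

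For the second inequality I would expand $\lambda$ using the agents' best response $\mathbf{a}_t(\bm{\theta}_t^\text{all}) = \homomatrix^\top\big(\sum_{j=1}^n\gamma_j\bm{\theta}_{jt}\big)$ from \apxBestResponse, evaluated at $\bm{\theta}^\text{all}_\diamond=\{\bm{\theta}_t^{-i},\thetaao_i\}$ and $\bm{\theta}^\text{all}_\bullet=\{\bm{\theta}_t^{-i},\thetastar_i\}$. Abbreviating $\mathbf{s}=\sum_{j\neq i}\gamma_j\bm{\theta}_{jt}$, a direct computation gives
\[
\lambda = \mathbf{s}^\top\homomatrix\homomatrix^\top(\thetaao_i-\thetastar_i) + \gamma_i\big[(\thetaao_i)^\top\homomatrix\homomatrix^\top\thetaao_i - (\thetastar_i)^\top\homomatrix\homomatrix^\top\thetastar_i\big].
\]
The first summand equals $\sum_{j\neq i}\gamma_j\,\bm{\theta}_{jt}^\top\homomatrix\homomatrix^\top(\thetaao_i-\thetastar_i)$, which is non-negative term-by-term by condition~3 and $\gamma_j>0$; the second summand is non-negative by the self-term bound just established together with $\gamma_i>0$. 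Hence $\lambda\geq 0$.

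The main obstacle is the self-index bound: it is not immediate that $\thetaao_i$ increases the quadratic form $\bm{\theta}^\top\homomatrix\homomatrix^\top\bm{\theta}$ relative to $\thetastar_i$, because $\thetaao_i$ is defined as a maximiser of the \emph{linear} functional cPI$_i$ rather than of that quadratic form. The crucial trick, reused from the single decision-maker lemma, is to first extract the linear optimality inequality (which is where $\|\thetastar_i\|_2\leq 1$ is used) and then upgrade it to the quadratic inequality via $2\mathbf{u}^\top\mathbf{w}\leq\|\mathbf{u}\|_2^2+\|\mathbf{w}\|_2^2$. Everything else is bookkeeping around the best-response substitution and the linearity of $\mathbf{s}^\top\homomatrix\homomatrix^\top(\thetaao_i-\thetastar_i)$ in the rivals' deployed parameters.
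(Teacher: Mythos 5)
Your proof is correct and follows essentially the same route as the paper's: the linear-optimality inequality obtained from \Cref{col:max-improvement-m} together with the feasibility of $\thetastar_i$ (condition 1), upgraded to the quadratic comparison via $2\mathbf{u}^\top\mathbf{w}\leq\|\mathbf{u}\|_2^2+\|\mathbf{w}\|_2^2$, and then the identical decomposition of $\lambda$ into a rivals' term (non-negative term-by-term by condition 3 and $\gamma_j>0$) plus the self term $\gamma_i\big((\thetaao_i)^\top\homomatrix\homomatrix^\top\thetaao_i-(\thetastar_i)^\top\homomatrix\homomatrix^\top\thetastar_i\big)$. The only cosmetic difference is that you read the $j\neq i$ case of the first inequality as condition 3 restated verbatim, whereas the paper separately spells out both choices $\bm{\theta}_{jt}\in\{\thetastar_j,\thetaao_j\}$ using $\thetaao_j\propto\homomatrix\homomatrix^\top\thetastar_j$; the substance is the same.
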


\begin{proof}
The 2nd condition of this lemma (i.e., on $\vecalpha_j$) implies that $\thetaao_j=\arg\max_{\bm{\theta}_{jt}:\|\bm{\theta}_{jt}\|_2\leq1}\text{cPI}_j(\{\bm{\theta}_{jt},\bm{\theta}_t^{-j}\}) \quad \forall \bm{\theta}_t^{-j}$, from \Cref{col:max-improvement-m}.

Moreover, since this condition also holds for the decision maker $i$, with the 1st condition, i.e., $\|\thetastar_i\|_2 \leq 1$, we now have $\text{cPI}_i(\{\thetastar_i,\bm{\theta}_t^{-i}\}) \leq \text{cPI}_i(\{\thetaao_i,\bm{\theta}_t^{-i}\}) \quad \forall\bm{\theta}_t^{-i}$ and results in the following:
\begin{alignat*}{2}
&\ \text{cPI}_i(\{\thetastar_i,\bm{\theta}_t^{-i}\}) &&\leq \text{cPI}_i(\{\thetaao_i,\bm{\theta}_t^{-i}\})
\\
&\ \Big(\ldots + \gamma_i\thetastar_i\Big)^\top\homomatrix\homomatrix^\top\thetastar_i &&\leq \Big(\ldots+\gamma_i\thetaao_i\Big)^\top\homomatrix\homomatrix^\top\thetastar_i
\\
\Rightarrow &\ \gamma_i(\thetastar_i)^\top\homomatrix\homomatrix^\top\thetastar_i &&\leq \gamma_i(\thetaao_i)^\top\homomatrix\homomatrix^\top\thetastar_i
\\
\Rightarrow &\ (\homomatrix^\top\thetastar_i)^\top(\homomatrix^\top\thetastar_i) &&\leq (\homomatrix^\top\thetaao_i)^\top(\homomatrix^\top\thetastar_i)
,
\end{alignat*}
where we cancel out the common terms from both side via subtraction in the second inequality and then we cancel out $\gamma_i$ as it is non-negative.

Next, by using the inequality $2\mathbf{u}^\top \mathbf{v}\leq\|\mathbf{u}\|^2+\|\mathbf{v}\|^2$ and treating $\homomatrix^\top\thetastar_i$ and $\homomatrix^\top\thetaao_i$ as $\mathbf{u}$ and $\mathbf{v}$, we obtain
\begin{alignat*}{2}
&\ (\homomatrix^\top\thetastar_i)^\top(\homomatrix^\top\thetastar_i) &&\leq (\homomatrix^\top\thetaao_i)^\top(\homomatrix^\top\thetaao_i).
\end{alignat*}

In addition, the above derivation also gives us the following:
\begin{align*}
\left\{
\begin{aligned}
\thetastar_i\homomatrix\homomatrix^\top(\thetaao_i-\thetastar_i)&\geq0,
\\
\thetaao_i\homomatrix\homomatrix^\top(\thetaao_i-\thetastar_i)&\geq0.
\end{aligned}\right.
\end{align*}

For $j\in[n]$, the 2nd condition of this lemma (i.e., on $\vecalpha_j$) leads to $\thetaao_j$ being a normalised vector of $\homomatrix\homomatrix\thetastar_j$ (see \Cref{col:max-improvement-m}). Then, using the 3rd condition of this lemma (i.e., on $\thetastar_j$ and $\thetastar_i$), we then obtain the first result:
\begin{align*}
&\left\{\begin{aligned}
(\thetastar_j)^\top\homomatrix\homomatrix^\top(\thetaao_i-\thetastar_i) &\geq 0 
\\
(\thetaao_j)^\top\homomatrix\homomatrix^\top(\thetaao_i-\thetastar_i) 
&\propto \underbrace{(\homomatrix\homomatrix^\top\thetastar_j)^\top\homomatrix\homomatrix^\top(\thetaao_i-\thetastar_i)}_{\geq 0}
\end{aligned}\right.
\\
&\Rightarrow (\bm{\theta}_{jt})^\top\homomatrix\homomatrix^\top(\thetaao_i-\thetastar_i) \geq 0
.
\end{align*}

This also leads to the 2nd result:
\begin{align*}
\lambda &:= \big(\mathbf{a}_t(\bm{\theta}_{\diamond}^\text{all})\big)^\top\homomatrix^\top\thetaao_i - \big(\mathbf{a}_t(\bm{\theta}_{\bullet}^\text{all})\big)^\top\homomatrix^\top\thetastar_i
\\
&= \big(\sum_{j\neq i}\gamma_j\bm{\theta}_{jt}+\gamma_i\thetaao_i\big)^\top\homomatrix^\top\thetaao_i 
 - \big(\sum_{j\neq i}\gamma_j\bm{\theta}_{jt}+\gamma_i\thetastar_i\big)^\top\homomatrix^\top\thetastar_i
\\
&= \underbrace{\Big(\sum_{j\neq i}\gamma_j\bm{\theta}_{jt}\Big)^\top\homomatrix\homomatrix^\top(\thetaao_i-\thetastar_i)}_{\geq 0}
 + \underbrace{\gamma_i\Big((\thetaao_i)^\top\homomatrix\homomatrix^\top\thetaao_i-(\thetastar_i)^\top\homomatrix\homomatrix^\top\thetastar_i\Big)}_{\geq 0},
\end{align*}
which implies that $\lambda \geq 0$.
\end{proof}

We are now in a position to present the main proof of \Cref{col:bounded-reduction-m}.

\begin{proof}[Main Proof]
We write the proof for an arbitrary decision maker $i$ for whom all conditions in \Cref{col:bounded-reduction-m} hold.
From the condition on Lipschitz continuity, we have the following for an arbitrary agent with the baseline $\B_t$:
\begin{align*}
\left|\xi_i(\bm{\theta}_\bullet^\text{all})-\xi_i(\bm{\theta}_\diamond^\text{all})\right| &= \left|\widetilde{\bm{\delta}}_i(\hat{y}_{it}\big(\bm{\theta}_\bullet^\text{all};\B_t)\big) - \widetilde{\bm{\delta}}_i(\hat{y}_{it}\big(\bm{\theta}_\diamond^\text{all};\B_t)\big)\right| 
\\
&\leq L\left|\hat{y}_{it}\big(\bm{\theta}_\bullet^\text{all};\B_t)\big)-\hat{y}_{it}\big(\bm{\theta}_\diamond^\text{all};\B_t)\big)\right|,
\end{align*}
which implies that
\begin{align*}
\xi_i(\bm{\theta}_\bullet^\text{all})-\xi_i(\bm{\theta}_\diamond^\text{all}) &\leq L\left|\hat{y}_{it}\big(\bm{\theta}_\bullet^\text{all};\B_t)\big)-\hat{y}_{it}\big(\bm{\theta}_\diamond^\text{all};\B_t)\big)\right|.
\end{align*}%
Because $\widetilde{\bm{\delta}}_i(\hat{y}_{it})$ is an increasing function w.r.t. $\hat{y}_{it}$, then if $\hat{y}_{it}\big(\bm{\theta}_\bullet^\text{all};\B_t)\big)<\hat{y}_{it}\big(\bm{\theta}_\diamond^\text{all};\B_t)\big)$, we have
\begin{eqnarray*}
\widetilde{\bm{\delta}}_i(\hat{y}_{it}\big(\bm{\theta}_\bullet^\text{all};\B_t)\big) - \widetilde{\bm{\delta}}_i(\hat{y}_{it}\big(\bm{\theta}_\diamond^\text{all};\B_t)\big) < 0 
&\Rightarrow& \xi_i(\bm{\theta}_\bullet^\text{all})-\xi_i(\bm{\theta}_\diamond^\text{all}) < 0
\\
&\Rightarrow& p\big(\xi_i(\bm{\theta}_\bullet^\text{all})-\xi_i(\bm{\theta}_\diamond^\text{all})>M\big) = 0
.
\end{eqnarray*}

Next, we show the proof for the case where there is a reduction in an agent's prediction, i.e., $\hat{y}_i\big(\bm{\theta}_\bullet^\text{all};\B_t)\big)>\hat{y}_i\big(\bm{\theta}_\diamond^\text{all};\B_t)\big)$. It leads to the following:
\begin{align*}
\xi_i(\bm{\theta}_\bullet^\text{all})-\xi_i(\bm{\theta}_\diamond^\text{all}) &\leq L\left|\hat{y}_{it}\big(\bm{\theta}_\bullet^\text{all};\B_t)\big)-\hat{y}_{it}\big(\bm{\theta}_\diamond^\text{all};\B_t)\big)\right|
\\
&= L\left(\hat{y}_{it}\big(\bm{\theta}_\bullet^\text{all};\B_t)\big)-\hat{y}_{it}\big(\bm{\theta}_\diamond^\text{all};\B_t)\big)\right).
\end{align*}

When $\B_t\sim\mathcal{N}(0,\sigma^2I)$, we have the following:
\begin{eqnarray*}
\operatorname{Var}\big(\B_t^\top(\thetaao_i-\thetastar_i)\big) = \|\thetaao_i-\thetastar_i\|_2^2\cdot\sigma^2
&\Rightarrow& \B_t^\top(\thetaao_i-\thetastar_i)\sim\mathcal{N}(0,\ \|\thetaao_i-\thetastar_i\|_2^2\cdot\sigma^2)
\\
&\Rightarrow &\frac{\B_t^\top(\thetaao_i-\thetastar_i)}{\|\thetaao_i-\thetastar_i\|_2\cdot\sigma}\sim\mathcal{N}(0,1).
\end{eqnarray*}
We use $\mathbf{a}_t(\bm{\theta}_\diamond^\text{all})$ and $\mathbf{a}_t(\bm{\theta}_\bullet^\text{all})$ to denote the best responses of an agent $t$ with respect to $\bm{\theta}_\diamond^\text{all}$ and $\bm{\theta}_\bullet^\text{all}$, where
\begin{align*}
\mathbf{a}_t(\bm{\theta}_\diamond^\text{all}) &= \homomatrix^\top\Big(\sum_{j\neq i}\gamma_j\bm{\theta}_{jt}+\gamma_i\thetaao_i\Big),
\\
\mathbf{a}_t(\bm{\theta}_\bullet^\text{all}) &= \homomatrix^\top\Big(\sum_{j\neq i}\gamma_j\bm{\theta}_{jt}+\gamma_i\thetastar_i\Big)
.
\end{align*}

Then, the counterfactual change in the predicted outcome $\hat{y}_{it}$ of an arbitrary agent at round $t$ is
\begin{align*}
\hat{y}_{it}(\bm{\theta}_\diamond^\text{all};\B_t)-\hat{y}_{it}(\bm{\theta}_\bullet^\text{all};\B_t)
=\ &\big(\B_t+\homomatrix\mathbf{a}_t(\bm{\theta}_\diamond^\text{all})\big)^\top\thetaao_i - \big(\B_t+\homomatrix\mathbf{a}_t(\bm{\theta}_{\bullet}^\text{all})\big)^\top\thetastar_i
\\
=\ &\B_t^\top(\thetaao_i-\thetastar_i) + \underbrace{\big(\mathbf{a}_t(\bm{\theta}_\diamond^\text{all})\big)^\top\homomatrix^\top\thetaao_i - \big(\mathbf{a}_t(\bm{\theta}_{\bullet}^\text{all})\big)^\top\homomatrix^\top\thetastar_i}_{\geq0}
\\
=\ &\B_t^\top(\thetaao_i-\thetastar_i) + \lambda
.
\end{align*}%

Then, the counterfactual reduction in admission chance is
\begin{align*}
\xi_i(\bm{\theta}_\bullet^\text{all})-\xi_i(\bm{\theta}_\diamond^\text{all}) &\leq L\left(\hat{y}_{it}\big(\bm{\theta}_\bullet^\text{all};\B_t)\big)-\hat{y}_{it}\big(\bm{\theta}_\diamond^\text{all};\B_t)\big)\right)
\\
&= L\left(\B_t^\top(\thetastar_i-\thetaao_i) - \lambda\right)
.
\end{align*}

Next, for any $M>0$, it follows that
\begin{align*}
p\big(\xi_i(\bm{\theta}_\bullet^\text{all})-\xi_i(\bm{\theta}_\diamond^\text{all})>M\big) 
&\leq p\big(L\left(\B_t^\top(\thetastar_i-\thetaao_i) - \lambda\right)>M\big)
\\
&= p\big(\B_t^\top(\thetastar_i-\thetaao_i)>M/L+\lambda\big)
\\
&= p\left(\frac{\B_t^\top(\thetastar_i-\thetaao_i)}{\|\thetaao_i-\thetastar_i\|_2\cdot\sigma}>\frac{M/L+\lambda}{\|\thetaao_i-\thetastar_i\|_2\cdot\sigma}\right)
\\
&= 1-\operatorname{CDF}\left(\frac{M/L+\lambda}{\|\thetaao_i-\thetastar_i\|_2\cdot\sigma}\right)
\\
&= 1-\Phi\left(\frac{M/L+\lambda}{\|\thetaao_i-\thetastar_i\|_2\cdot\sigma}\right)
\\
&= \Phi\left(\frac{-M/L-\lambda}{\|\thetaao_i-\thetastar_i\|_2\cdot\sigma}\right)
.
\end{align*}

This concludes the proof.
\end{proof}

We now introduce the corollary on the admission chance of agents into other environments $j\neq i$ that we briefly mentioned in the main paper.

\begin{corollary}[Improved chance]\label{apx-col:improved-chance}
Suppose assumptions (H1), (H2), (M1)-(M3) hold for all DMs and each DM considers only two choices {$\bm{\theta}_{jt}\in\{\bm{\theta}_j^*,\bm{\theta}_j^{\text{AO}}\}$} for {$j\in[n]$}. Further, let $i$ be an arbitrary DM and suppose
\begin{enumerate}[label={(\arabic*)}]
    \item {$\|\bm{\theta}_i^*\|_2 \leq 1$}; 
    \item {$\vecalpha_j=(k_j-\gamma_j)\homomatrix\homomatrix^\top\bm{\theta}_j^*$} with {$k_j,\gamma_j>0$} for {$j\in[n]$};
    \item {$(\bm{\theta}_{jt})^\top\homomatrix\homomatrix^\top(\thetaao_i-\thetastar_i) \geq 0$} for {$j\neq i$};
    \item For $j\neq i$, each selection function, {$\bm{\delta}_j(\mathbf{x}; \bm{\theta}_{jt}):=\widetilde{\bm{\delta}}_j(\hat{y}_{jt}(\bm{\theta}_t^\text{all}))$}, is increasing w.r.t. $\hat{y}_{jt}$.
\end{enumerate}
Then,
$$p\big(\xi_i(\bm{\theta}^\text{all}_{\bullet}) \leq \xi_i(\bm{\theta}^\text{all}_{\diamond})\big) = 1,$$
where {$\bm{\theta}^\text{all}_{\bullet} = \{\bm{\theta}_t^{-i},\thetastar_i\}$}, {$\bm{\theta}^\text{all}_{\diamond} = \{\bm{\theta}_t^{-i},\thetaao_i\}$}, and {$\xi_j(\bm{\theta}_t^\text{all}) := p(W_{jt}=1|\B_t;\bm{\theta}_t^\text{all})$} denotes the admission chance of the agent $t$ from the DM $j$, given released decision parameters {$\bm{\theta}_t^\text{all}$}.
\end{corollary}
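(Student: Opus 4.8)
The plan is to reduce the claim to a pointwise (in $\B_t$) comparison of the two evaluations $\hat{y}_{jt}(\bm{\theta}_\bullet^\text{all})$ and $\hat{y}_{jt}(\bm{\theta}_\diamond^\text{all})$ and then invoke monotonicity of the selection map. I read the conclusion as concerning the admission chance into an arbitrary \emph{rival} environment $j\neq i$, matching the main-paper remark and Condition (4), which constrains $\widetilde{\bm{\delta}}_j$ only for $j\neq i$; so I fix such a $j$ and aim to show $\xi_j(\bm{\theta}_\bullet^\text{all})\leq\xi_j(\bm{\theta}_\diamond^\text{all})$ for \emph{every} realisation of $\B_t$, which immediately yields the stated probability-one statement. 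As in the bounded-reduction lemma, I first rewrite $\xi_j$ as a deterministic function of the baseline: since $\widetilde{\bm{\delta}}_j$ depends on $\mathbf{x}$ only through $\hat{y}_{jt}$, and $\X_t=\B_t+\homomatrix\mathbf{a}_t$ with the common best response $\mathbf{a}_t=\homomatrix^\top\sum_{k}\gamma_k\bm{\theta}_{kt}$ (by H1, H2, and the best-response formula of \apxBestResponse), I can write $\xi_j(\bm{\theta}_t^\text{all})=\widetilde{\bm{\delta}}_j\big(\hat{y}_{jt}(\bm{\theta}_t^\text{all};\B_t)\big)$.

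The key step is to evaluate the difference $\hat{y}_{jt}(\bm{\theta}_\diamond^\text{all};\B_t)-\hat{y}_{jt}(\bm{\theta}_\bullet^\text{all};\B_t)$. The crucial observation, and the reason this result is qualitatively stronger than \Cref{col:bounded-reduction-m}, is that DM $j$ evaluates with its own parameter $\bm{\theta}_{jt}$, which is identical under $\bm{\theta}_\bullet^\text{all}$ and $\bm{\theta}_\diamond^\text{all}$ (only DM $i$'s entry changes, from $\thetastar_i$ to $\thetaao_i$). Hence the baseline contribution $\B_t^\top\bm{\theta}_{jt}$ cancels and only the effort-driven part survives. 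Substituting the two best responses, whose difference is $\mathbf{a}_t(\bm{\theta}_\diamond^\text{all})-\mathbf{a}_t(\bm{\theta}_\bullet^\text{all})=\gamma_i\homomatrix^\top(\thetaao_i-\thetastar_i)$, and using the symmetry of $\homomatrix\homomatrix^\top$, the difference collapses to the non-random scalar $\gamma_i\,\bm{\theta}_{jt}^\top\homomatrix\homomatrix^\top(\thetaao_i-\thetastar_i)$.

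It then remains only to sign this scalar. By Condition (2) together with \Cref{col:max-improvement-m} we have $\gamma_i>0$, and by Condition (3) we have $\bm{\theta}_{jt}^\top\homomatrix\homomatrix^\top(\thetaao_i-\thetastar_i)\geq0$ for the fixed $j\neq i$; the accompanying lemma furthermore guarantees this for \emph{either} admissible play $\bm{\theta}_{jt}\in\{\thetastar_j,\thetaao_j\}$. Consequently $\hat{y}_{jt}(\bm{\theta}_\diamond^\text{all};\B_t)\geq\hat{y}_{jt}(\bm{\theta}_\bullet^\text{all};\B_t)$ for all $\B_t$, and since $\widetilde{\bm{\delta}}_j$ is increasing (Condition (4)) we obtain $\xi_j(\bm{\theta}_\bullet^\text{all})\leq\xi_j(\bm{\theta}_\diamond^\text{all})$ deterministically, hence with probability one.

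I expect the only genuine obstacle to be bookkeeping rather than substance: carefully tracking the transposes in $\homomatrix\homomatrix^\top$ and confirming that the baseline term truly cancels for $j\neq i$ (whereas for $j=i$ it would survive as $\B_t^\top(\thetaao_i-\thetastar_i)$, which is exactly why \Cref{col:bounded-reduction-m} yields only a high-probability bound instead of an almost-sure inequality). A secondary point worth verifying is that Condition (3) is applied to the \emph{actually deployed} $\bm{\theta}_{jt}$; since each rival may play $\thetastar_j$ or $\thetaao_j$, the sign inequality must hold in both cases, which is precisely what the accompanying lemma supplies.
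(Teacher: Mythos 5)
Your proposal is correct and follows essentially the same route as the paper's own proof: the baseline term cancels because DM $j$'s parameter is identical under $\bm{\theta}^\text{all}_{\bullet}$ and $\bm{\theta}^\text{all}_{\diamond}$, the prediction difference collapses to the deterministic scalar $\gamma_i(\thetaao_i-\thetastar_i)^\top\homomatrix\homomatrix^\top\bm{\theta}_{jt}\geq 0$ via Condition (3) (with the accompanying lemma covering both admissible plays), and monotonicity of $\widetilde{\bm{\delta}}_j$ then yields the almost-sure inequality. Your reading of the conclusion as concerning the rival environments $j\neq i$ (despite the statement's $\xi_i$ notation) also matches the paper, whose proof indeed concludes $p(W_{jt}=1\mid\B_t;\ \bm{\theta}_\diamond^\text{all}) \geq p(W_{jt}=1\mid\B_t;\ \bm{\theta}_\bullet^\text{all})$ for all $j\neq i$.
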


\begin{proof}
The counterfactual change in the predicted outcome $\hat{y}_{jt}$ (for any $j\neq i$) of the agent $t$ is
\begin{align*}
\hat{y}_{jt}(\bm{\theta}_\diamond^\text{all};\B_t)-\hat{y}_{jt}(\bm{\theta}_\bullet^\text{all};\B_t)
=\ &\big(\B_t+\homomatrix\mathbf{a}_t(\bm{\theta}_{\diamond}^\text{all})\big)^\top\bm{\theta}_{jt} - \big(\B_t+\homomatrix\mathbf{a}_t(\bm{\theta}_{\bullet}^\text{all})\big)^\top\bm{\theta}_{jt}
\\
=\ &(\mathbf{a}_\diamond^\top-\mathbf{a}_\bullet^\top)\homomatrix^\top\bm{\theta}_{jt}
\\
=\ &\gamma_i(\thetaao_i-\thetastar_i)^\top\homomatrix\homomatrix^\top\bm{\theta}_{jt}
\\
\geq\ &0
.
\end{align*}

When the admission rates are increasing functions of the predicted performance, we obtain the desired result:
$$p(W_{jt}=1\mid\B_t;\ \bm{\theta}_\diamond^\text{all}) \geq p(W_{jt}=1\mid\B_t;\ \bm{\theta}_\bullet^\text{all}) \quad \forall j\neq i.$$

This concludes the proof.
\end{proof}

% \section{Main Algorithm}\label{apx:algo}
% We use the word \textit{coalition} to refer to this subset of $n_s$ decision makers who perform this algorithm together. Line 9 refers to the cooperative protocol (\Cref{def:protocol}) and line 10 to 12 refer to the extended theorem on local exogenity (\Cref{theorem:local-exo-m}).
% \begin{algorithm}
% \caption{Mean-shift Linear Regression}
% \textbf{Require:} a subset of $n_s$ decision makers out of all $n$ decision makers, where $1\leq n_s\leq n$. These decision makers use ranking selection (\Cref{def:ranking-selection}).\\
% \textbf{Parameters:} number of rounds $T$, a block's length $\eta$.
% \begin{algorithmic}[1]
% \STATE $D_i \gets \{\}$ for $i=1,\ldots,n_s$
% \FOR{$t\in\{1,\ldots,T\}$}
%     \STATE blockindex $\gets \lfloor t/(\eta+1) \rfloor$
%     \IF{blockindex $\% 2 = 0$}
%     \STATE $\bm{\theta}_{it}\sim p(\bm{\theta}_{it})$ for $i=1,\ldots,n_s$
%     \ELSE
%     \STATE $t^\prime \gets t-\eta$
%     \FOR{$i\in\{1,\ldots,n_s\}$}
%     \STATE $\bm{\theta}_{it} = k_{it}\bm{\theta}_{it^\prime}$ with $k_{it}>0$
%     \STATE $\Delta\bar{y}_i \gets (\bar{y}_{it}\mid z_t=i)-(\bar{y}_{it^\prime}\mid z_{t^\prime}=i)$
%     \STATE $\Delta\bar{\mathbf{x}}_i \gets (\bar{\mathbf{x}}\mid z_t=i)-(\bar{\mathbf{x}}\mid z_{t^\prime}=i)$
%     \STATE $D_i \gets D_i \cup \{\Delta\bar{y}_i, \Delta\bar{\mathbf{x}}_i\}$
%     \ENDFOR
%     \ENDIF
% \ENDFOR
% \FOR{$i\in\{1,\ldots,n_s\}$}
%     \STATE $\thetastarhat_i \gets$ Regress $\Delta\bar{Y}_i$ onto $\Delta\bar{\mathbf{X}}_i$ with OLS and the data set $D_i$
% \ENDFOR
% \end{algorithmic}
% \end{algorithm}

\section{On Estimation of \texorpdfstring{$\homomatrix\homomatrix^\top$}{EET} and \texorpdfstring{$\bm{\theta}_i^\text{AO}$}{thetaAO}}

In this section, we show how OLS can be used to estimate $\homomatrix\homomatrix^\top$ and $\thetaao_i$. Even though our MSLR algorithm can be trivially adapted to estimate $\homomatrix\homomatrix^\top\thetastar_i$ directly, having $\homomatrix\homomatrix^\top$ is useful when one is already given $\thetastar_i$ and wants to compute $\homomatrix\homomatrix^\top\thetastar_i$, thus avoiding the hassle of deploying $\bm{\theta}_i$ in specific manners as outlined in \Cref{theorem:local-exo-s} and \Cref{def:protocol}.

To ensure consistency with the notations of our setup in \Cref{sec:prob-formulation}, in this subsection, we use $T$ to denote the number of observations (previously \textit{rounds}). We restate the definition for the OLS estimator \citep{hastie2017elements} below.

\begin{definition}[OLS Estimator]\label{def:ols}
Let $\mathbf{Y}:=\mathbf{X}\mathbf{B}+\mathbf{E}$ be the true data generating mechanism, where $\mathbf{Y}\in\mathbb{R}^{T\times K}$ is a matrix containing $T$ observations of $K$ target variables, $\mathbf{X}\in\mathbb{R}^{T\times p}$ is the data matrix containing $T$ observations of the $p$-dimensional covariates, $\mathbf{B}\in\mathbb{R}^{p\times K}$ is matrix of coefficients, and $\mathbf{E}\in\mathbb{R}^{T\times K}$ is a matrix of the observed noise. Then the OLS estimator for $\mathbf{B}$ that minimises the residual sum-of-squares is given as
$$\widehat{\mathbf{B}} = (\mathbf{X}^\top\mathbf{X})^{-1}\mathbf{X}^\top\mathbf{Y}.$$
\end{definition}

For clarity, we first introduce the following two lemmas that will later be used in defining our OLS estimators for $\homomatrix\homomatrix^\top$ and $\thetaao_i$.

\begin{lemma}
\label{lemma:generic-ols-with-biases}
Let $\mathbf{Y}:=\mathbf{X}\mathbf{B}+(\vec{1}_T)\mathbf{c}^\top+\mathbf{E}$ be the true data generating mechanism, where $\mathbf{c}\in\mathbb{R}^K$ is a column vector containing the biases associated with $K$ target variables, $\vec{1}_T$ denotes a column vector containing $T$ values of $1$, and the rest of the terms conform to our standard OLS setup (\Cref{def:ols}), then the OLS estimator for $\mathbf{B}$ and $\mathbf{c}$ is
\begin{align*}
\begin{bmatrix}
    \widehat{\mathbf{B}} \\
    \widehat{\mathbf{c}}^\top
\end{bmatrix}
=
\begin{bmatrix}
    \mathbf{X}^\top\mathbf{X} & \mathbf{X}^\top(\vec{1}_T) \\
    (\vec{1}_T)^\top\mathbf{X} & T
\end{bmatrix}^{-1}
\begin{bmatrix}
    \mathbf{X}^\top\mathbf{Y} \\
    (\vec{1}_T)^\top\mathbf{Y}
\end{bmatrix}
.
\end{align*}
\end{lemma}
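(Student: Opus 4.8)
The plan is to recognize Lemma~\ref{lemma:generic-ols-with-biases} as nothing more than the standard OLS estimator of Definition~\ref{def:ols} applied to an \emph{augmented} design matrix in which the intercept term has been folded in as an extra covariate. The key observation is that the affine model $\mathbf{Y}=\mathbf{X}\mathbf{B}+(\vec{1}_T)\mathbf{c}^\top+\mathbf{E}$ can be rewritten as a purely linear model with no separate bias term. First I would define the augmented data matrix $\widetilde{\mathbf{X}} := [\,\mathbf{X}\ \ \vec{1}_T\,]\in\mathbb{R}^{T\times(p+1)}$, obtained by appending a column of ones to $\mathbf{X}$, and the stacked coefficient matrix
$$\widetilde{\mathbf{B}} := \begin{bmatrix}\mathbf{B}\\ \mathbf{c}^\top\end{bmatrix}\in\mathbb{R}^{(p+1)\times K}.$$
With these definitions one checks immediately that $\widetilde{\mathbf{X}}\widetilde{\mathbf{B}} = \mathbf{X}\mathbf{B}+(\vec{1}_T)\mathbf{c}^\top$, so the data generating mechanism becomes $\mathbf{Y}=\widetilde{\mathbf{X}}\widetilde{\mathbf{B}}+\mathbf{E}$, which is exactly the linear form assumed in Definition~\ref{def:ols}.

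Next I would simply invoke Definition~\ref{def:ols} with $\widetilde{\mathbf{X}}$ in place of $\mathbf{X}$ and $\widetilde{\mathbf{B}}$ in place of $\mathbf{B}$. This gives the OLS estimator
$$\widehat{\widetilde{\mathbf{B}}} = (\widetilde{\mathbf{X}}^\top\widetilde{\mathbf{X}})^{-1}\widetilde{\mathbf{X}}^\top\mathbf{Y}.$$
The remaining work is the routine block-matrix computation: expanding $\widetilde{\mathbf{X}}^\top\widetilde{\mathbf{X}}$ in terms of the blocks $\mathbf{X}$ and $\vec{1}_T$ yields
$$\widetilde{\mathbf{X}}^\top\widetilde{\mathbf{X}} = \begin{bmatrix}\mathbf{X}^\top\mathbf{X} & \mathbf{X}^\top(\vec{1}_T)\\ (\vec{1}_T)^\top\mathbf{X} & (\vec{1}_T)^\top(\vec{1}_T)\end{bmatrix},$$
and since $(\vec{1}_T)^\top(\vec{1}_T)=T$, the bottom-right entry is exactly $T$. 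Similarly $\widetilde{\mathbf{X}}^\top\mathbf{Y}$ stacks into $[\,\mathbf{X}^\top\mathbf{Y};\ (\vec{1}_T)^\top\mathbf{Y}\,]$. Substituting these blocks reproduces the claimed formula verbatim, identifying the top block of $\widehat{\widetilde{\mathbf{B}}}$ with $\widehat{\mathbf{B}}$ and the bottom row with $\widehat{\mathbf{c}}^\top$.

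There is essentially no serious obstacle here; the lemma is a reduction to a known result rather than a genuinely new computation. The only point requiring minor care is the implicit invertibility assumption: the formula presupposes that $\widetilde{\mathbf{X}}^\top\widetilde{\mathbf{X}}$ is nonsingular, which holds provided the columns of $\widetilde{\mathbf{X}}$ are linearly independent, i.e.\ the covariates in $\mathbf{X}$ together with the constant column are not collinear. I would state this as a standing regularity assumption (inherited from Definition~\ref{def:ols}) rather than belabour it. I would also note briefly that the augmented least-squares problem has the same minimizer as the original affine least-squares objective, since the parametrizations $(\mathbf{B},\mathbf{c})\leftrightarrow\widetilde{\mathbf{B}}$ are in bijection and the residual $\mathbf{Y}-\mathbf{X}\mathbf{B}-(\vec{1}_T)\mathbf{c}^\top = \mathbf{Y}-\widetilde{\mathbf{X}}\widetilde{\mathbf{B}}$ is literally unchanged, so minimizing the residual sum-of-squares over $(\mathbf{B},\mathbf{c})$ is identical to minimizing it over $\widetilde{\mathbf{B}}$. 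This justifies that the block solution is indeed the OLS estimator for the original affine model, completing the proof.
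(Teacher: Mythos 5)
Your proposal is correct and follows essentially the same route as the paper's own proof: augment the design matrix with a column of ones, stack $\mathbf{B}$ and $\mathbf{c}^\top$, invoke Definition~\ref{def:ols}, and expand the block products. Your additional remarks on invertibility and on the equivalence of the affine and augmented least-squares objectives are points the paper leaves implicit, but they do not change the argument.
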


\begin{proof}
Rewrite the function of $\mathbf{Y}$ in terms of block matrices:
\begin{align*}
\mathbf{Y} &:= 
\begin{bmatrix}
    \mathbf{X} & \vec{1}_T
\end{bmatrix}
\begin{bmatrix}
    \mathbf{B} \\
    \mathbf{c}^\top
\end{bmatrix}
,
\\
\mathbf{Y} &:= \mathbf{X}^\prime \mathbf{B}^\prime
,
\end{align*}
where we denote the two matrices on the right as $\mathbf{X}^\prime$ and $\mathbf{B}^\prime$ respectively.
Then from \Cref{def:ols}, the OLS estimator for $\mathbf{B}^\prime$ is
\begin{align*}
\widehat{\mathbf{B}}^\prime =  \big((\mathbf{X}^\prime)^\top(\mathbf{X}^\prime)\big)^{-1}(\mathbf{X}^\prime)^\top\mathbf{Y}
\quad\Rightarrow\quad
\begin{bmatrix}
    \widehat{\mathbf{B}} \\
    \widehat{\mathbf{c}}^\top
\end{bmatrix}
&=
\Bigg(
\begin{bmatrix}
    \mathbf{X}^\top \\
    (\vec{1}_T)^\top
\end{bmatrix}
\begin{bmatrix}
    \mathbf{X} & \vec{1}_T
\end{bmatrix}
\Bigg)^{-1}
\begin{bmatrix}
    \mathbf{X}^\top \\
    (\vec{1}_T)^\top
\end{bmatrix}
\mathbf{Y}
\\
\begin{bmatrix}
    \widehat{\mathbf{B}} \\
    \widehat{\mathbf{c}}^\top
\end{bmatrix}
&=
\begin{bmatrix}
    \mathbf{X}^\top\mathbf{X} & \mathbf{X}^\top(\vec{1}_T) \\
    (\vec{1}_T)^\top\mathbf{X} & T
\end{bmatrix}^{-1}
\begin{bmatrix}
    \mathbf{X}^\top\mathbf{Y} \\
    (\vec{1}_T)^\top\mathbf{Y}
\end{bmatrix}
.
\end{align*}
This concludes the proof.
\end{proof}

\begin{lemma}
\label{lemma:generic-ols-the-harris-way}
With the same setup in the previous lemma, let us rewrite $\mathbf{X}$ and $\mathbf{Y}$ as columns of row vectors $\big[\mathbf{x}_1,\ldots,\mathbf{x}_T\big]^\top$ and $\big[\mathbf{y}_1,\ldots,\mathbf{y}_T\big]^\top$, in which each $\mathbf{x}_t\in\mathbb{R}^p$ and $\mathbf{y}_t\in\mathbb{R}^K$. Furthermore, we denote $\widetilde{\mathbf{x}}_t=\big[\mathbf{x}_t^\top \quad 1\big]^\top$, then the OLS estimator can be rewritten as
\begin{align*}
\begin{bmatrix}
    \widehat{\mathbf{B}} \\
    \widehat{\mathbf{c}}^\top
\end{bmatrix}
=
\left(\sum_{t=1}^T\widetilde{\mathbf{x}}_t\widetilde{\mathbf{x}}_t^\top\right)^{-1} \sum_{t=1}^T\widetilde{\mathbf{x}}_t\mathbf{y}_t^\top
.
\end{align*}
\end{lemma}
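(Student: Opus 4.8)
The plan is to observe that \Cref{lemma:generic-ols-the-harris-way} is not a fresh computation but merely a re-expression of \Cref{lemma:generic-ols-with-biases} in terms of per-observation outer products, so the whole argument collapses to the elementary identity that a Gram matrix equals the sum of the outer products of its rows. No new modelling assumption is invoked; I simply unpack the block objects already produced by the previous lemma.

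First I would reintroduce the augmented design matrix $\mathbf{X}^\prime := \big[\mathbf{X}\;\;\vec{1}_T\big]\in\mathbb{R}^{T\times(p+1)}$ used in the proof of \Cref{lemma:generic-ols-with-biases}. By construction its $t$-th row is $\big[\mathbf{x}_t^\top\;\;1\big]=\widetilde{\mathbf{x}}_t^\top$, which is exactly the definition of $\widetilde{\mathbf{x}}_t$ in the statement. With this notation, the block matrix and the stacked vector appearing in \Cref{lemma:generic-ols-with-biases} are recognised as
$$
\begin{bmatrix} \mathbf{X}^\top\mathbf{X} & \mathbf{X}^\top(\vec{1}_T) \\ (\vec{1}_T)^\top\mathbf{X} & T \end{bmatrix} = (\mathbf{X}^\prime)^\top\mathbf{X}^\prime, \qquad \begin{bmatrix} \mathbf{X}^\top\mathbf{Y} \\ (\vec{1}_T)^\top\mathbf{Y} \end{bmatrix} = (\mathbf{X}^\prime)^\top\mathbf{Y},
$$
so the estimator of the previous lemma reads $\begin{bmatrix}\widehat{\mathbf{B}}\\ \widehat{\mathbf{c}}^\top\end{bmatrix} = \big((\mathbf{X}^\prime)^\top\mathbf{X}^\prime\big)^{-1}(\mathbf{X}^\prime)^\top\mathbf{Y}$.

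Next I would apply the row-wise outer-product identity: for any matrix $A$ whose $t$-th row is $\mathbf{a}_t^\top$ and any matrix $B$ whose $t$-th row is $\mathbf{b}_t^\top$, the $(j,k)$ entry of $A^\top B$ equals $\sum_t A_{tj}B_{tk}$, whence $A^\top B=\sum_{t=1}^T \mathbf{a}_t\mathbf{b}_t^\top$. Taking $A=B=\mathbf{X}^\prime$ gives $(\mathbf{X}^\prime)^\top\mathbf{X}^\prime=\sum_{t=1}^T \widetilde{\mathbf{x}}_t\widetilde{\mathbf{x}}_t^\top$, and taking $A=\mathbf{X}^\prime$, $B=\mathbf{Y}$ gives $(\mathbf{X}^\prime)^\top\mathbf{Y}=\sum_{t=1}^T \widetilde{\mathbf{x}}_t\mathbf{y}_t^\top$. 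Substituting these two sums into the estimator above yields precisely the claimed form, which concludes the proof.

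There is essentially no obstacle beyond careful bookkeeping: the only points warranting a line of justification are the outer-product identity itself and the verification that $\widetilde{\mathbf{x}}_t$ is genuinely the $t$-th row of $\mathbf{X}^\prime$ (equivalently, that stacking the rows $\widetilde{\mathbf{x}}_t^\top$ reconstructs $\mathbf{X}^\prime$). Because $\mathbf{X}^\prime$ and $\mathbf{Y}$ share the same row index $t$, no transposition or reindexing subtlety arises, and the result follows immediately from \Cref{lemma:generic-ols-with-biases}.
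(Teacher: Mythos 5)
Your proposal is correct and follows essentially the same route as the paper's own proof: both identify the augmented design matrix whose $t$-th row is $\widetilde{\mathbf{x}}_t^\top$, recognise the block objects from the previous lemma as $(\mathbf{X}^\prime)^\top\mathbf{X}^\prime$ and $(\mathbf{X}^\prime)^\top\mathbf{Y}$, and expand these matrix products as sums of per-observation outer products. Your write-up is merely slightly more explicit about the row-wise outer-product identity, which the paper applies implicitly.
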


\begin{proof}
Let $\widetilde{\mathbf{X}}=\big[\widetilde{\mathbf{x}}_1,\ldots,\widetilde{\mathbf{x}}_T\big]^\top$ then following \Cref{lemma:generic-ols-with-biases}, we have
\begin{align*}
\widehat{\mathbf{B}}^\prime =  \big((\mathbf{X}^\prime)^\top(\mathbf{X}^\prime)\big)^{-1}(\mathbf{X}^\prime)^\top\mathbf{Y}
\quad\Rightarrow\quad
\begin{bmatrix}
    \widehat{\mathbf{B}} \\
    \widehat{\mathbf{c}}^\top
\end{bmatrix}
&=
\Bigg(
\begin{bmatrix}
    \mathbf{X}^\top \\
    (\vec{1}_T)^\top
\end{bmatrix}
\begin{bmatrix}
    \mathbf{X} & \vec{1}_T
\end{bmatrix}
\Bigg)^{-1}
\begin{bmatrix}
    \mathbf{X}^\top \\
    (\vec{1}_T)^\top
\end{bmatrix}
\mathbf{Y}
\\
&=
\big(\widetilde{\mathbf{X}}^\top\widetilde{\mathbf{X}}\big)^{-1}\widetilde{\mathbf{X}}^\top\mathbf{Y}
\\
&=
\big(\widetilde{\mathbf{X}}^\top\widetilde{\mathbf{X}}\big)^{-1}
\begin{bmatrix}
    \widetilde{\mathbf{x}}_1 & \ldots & \widetilde{\mathbf{x}}_T
\end{bmatrix}
\begin{bmatrix}
    \mathbf{y}_1^\top \\
    \vdots \\
    \mathbf{y}_T^\top
\end{bmatrix}
\\
&= \left(\sum_{t=1}^T\widetilde{\mathbf{x}}_t\widetilde{\mathbf{x}}_t^\top\right)^{-1} \sum_{t=1}^T\widetilde{\mathbf{x}}_t\mathbf{y}_t^\top
.
\end{align*}
This concludes the proof.
\end{proof}

We are now in a position to present the way to estimate $\mathbb{E}\left[\mathcal{E}_t\mathcal{E}_t^\top\right]$, which is equivalent to $\homomatrix\homomatrix^\top$ in our setting.

\begin{lemma}
Let $\widetilde{\Theta}_t=\big[\sum_{i=1}^n\gamma_{it}\bm{\theta}_{it}^\top \quad 1\big]^\top$ and $\Omega=\mathbb{E}\left[\mathcal{E}_t\mathcal{E}_t^\top\right]$, the OLS estimator for $\Omega$ and $\mathbb{E}\left[\B_t\right]$ is
\begin{align*}
\begin{bmatrix}
    \widehat{\Omega} \\
    \bar{\mathbf{b}}^\top
\end{bmatrix}
=
\left(\sum_{t=1}^T\widetilde{\Theta}_t\widetilde{\Theta}_t^\top\right)^{-1}\sum_{t=1}^T\widetilde{\Theta}_t\mathbb{E}\left[\X_t^\top\mid\bm{\theta}_t^\text{all}\right]
.
\end{align*}
\end{lemma}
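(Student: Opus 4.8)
The plan is to reduce this lemma to a direct application of \Cref{lemma:generic-ols-the-harris-way} by exhibiting a linear, noise-free relationship between the conditional mean $\mathbb{E}\left[\X_t^\top\mid\bm{\theta}_t^\text{all}\right]$ and the augmented aggregate parameter $\widetilde{\Theta}_t$. First I would substitute the agents' best response into the covariate model. Recall from \apxBestResponse\ that the optimal action is $\mathbf{a}_t=\mathcal{E}_t^\top(\sum_{i=1}^n\gamma_{it}\bm{\theta}_{it})$, so that $\X_t=\B_t+\mathcal{E}_t\mathbf{a}_t=\B_t+\mathcal{E}_t\mathcal{E}_t^\top\big(\sum_{i=1}^n\gamma_{it}\bm{\theta}_{it}\big)$. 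Writing $\vartheta_t:=\sum_{i=1}^n\gamma_{it}\bm{\theta}_{it}$, which is deterministic once the released parameters are fixed (under (H2) the weights are constants $\gamma_{it}=\gamma_i$), this is affine in $\vartheta_t$ with random coefficient matrix $\mathcal{E}_t\mathcal{E}_t^\top$ and random intercept $\B_t$.

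Next I would take the conditional expectation given $\bm{\theta}_t^\text{all}$. Because the agent's baseline $\B_t$ and effort-conversion matrix $\mathcal{E}_t$ are intrinsic quantities realised independently of the deployed parameters, we have $\mathbb{E}\left[\B_t\mid\bm{\theta}_t^\text{all}\right]=\mathbb{E}\left[\B_t\right]=:\bar{\mathbf{b}}$ and $\mathbb{E}\left[\mathcal{E}_t\mathcal{E}_t^\top\mid\bm{\theta}_t^\text{all}\right]=\mathbb{E}\left[\mathcal{E}_t\mathcal{E}_t^\top\right]=\Omega$; under (H1) the latter is immediate since $\mathcal{E}_t=\homomatrix$ gives $\Omega=\homomatrix\homomatrix^\top$. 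Pulling the deterministic $\vartheta_t$ out of the expectation and transposing, using that $\Omega$ is symmetric so $\Omega^\top=\Omega$, yields the exact identity
$$\mathbb{E}\left[\X_t^\top\mid\bm{\theta}_t^\text{all}\right]=\vartheta_t^\top\Omega+\bar{\mathbf{b}}^\top.$$

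I would then match this with the data-generating mechanism of \Cref{lemma:generic-ols-the-harris-way}, identifying the response $\mathbf{y}_t^\top$ with $\mathbb{E}\left[\X_t^\top\mid\bm{\theta}_t^\text{all}\right]$, the covariate $\mathbf{x}_t$ with $\vartheta_t$, the coefficient matrix $\mathbf{B}$ with $\Omega$, and the bias $\mathbf{c}$ with $\bar{\mathbf{b}}$; the noise term vanishes because we regress against the conditional mean itself. Since the augmented covariate $\widetilde{\mathbf{x}}_t=[\mathbf{x}_t^\top\ 1]^\top$ then coincides with $\widetilde{\Theta}_t=[\vartheta_t^\top\ 1]^\top$, substituting into the formula of that lemma produces the stated expression directly.

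The step to handle with care, rather than a deep obstacle, is the interchange of expectation and the factorisation $\mathbb{E}\left[\mathcal{E}_t\mathcal{E}_t^\top\mid\bm{\theta}_t^\text{all}\right]=\Omega$: this requires the randomness in $(\B_t,\mathcal{E}_t)$ to be independent of the released parameters, which is exactly the Stackelberg timing of the model, since the parameters are published before the agent's type contributes to $\X_t$ and do not influence $\B_t$ or $\mathcal{E}_t$. I would also remark that the displayed estimator is in fact exact (it recovers the true $\Omega$ and $\bar{\mathbf{b}}$) whenever the Gram matrix $\sum_{t=1}^T\widetilde{\Theta}_t\widetilde{\Theta}_t^\top$ is invertible, which holds once the DMs deploy sufficiently many affinely independent aggregate parameters $\vartheta_t$ to span $\mathbb{R}^m$.
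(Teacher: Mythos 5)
Your proposal is correct and follows essentially the same route as the paper: substitute the agents' best response into $\X_t$, use the marginal independence of $(\B_t,\mathcal{E}_t)$ from the released parameters to obtain the noise-free affine relation $\mathbb{E}\left[\X_t^\top\mid\bm{\theta}_t^\text{all}\right]=\widetilde{\Theta}_t^\top\begin{bmatrix}\Omega \\ \bar{\mathbf{b}}^\top\end{bmatrix}$, and then apply \Cref{lemma:generic-ols-the-harris-way} to the stacked equations. Your added remarks on the Stackelberg timing justifying the independence and on invertibility of the Gram matrix are consistent with (and slightly more explicit than) the paper's argument.
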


\begin{proof}
Note that this result is similar to that of \citet{harris2022strategic} except that we provide this for the setting of multiple decision makers and will prove this using \Cref{lemma:generic-ols-the-harris-way}.

Rewriting the structural causal function of $\X_t$ under the best response $\mathbf{a}_t$ as follows:
\begin{align*}
\X_t &= \B_t + \mathcal{E}_t\mathcal{E}_t^\top\left(\sum_{i=1}^n\gamma_{it}\bm{\theta}_{it}\right)
\quad\Rightarrow\quad
\mathbb{E}\left[\X_t\mid\bm{\theta}_t^\text{all}\right] = \mathbb{E}\left[\B_t\right] + \mathbb{E}\left[\mathcal{E}_t\mathcal{E}_t^\top\right]\left(\sum_{i=1}^n\gamma_{it}\bm{\theta}_{it}\right),
\end{align*}
where the two conditional expectation terms on the left-hand side simplify because $\B_t$ and $\mathcal{E}_t$ are marginally independent of $\bm{\theta}_t^\text{all}$. Then transposing both sides, we obtain an equation for each data point $\bm{\theta}_t^\text{all}$:
\begin{align*}
\mathbb{E}\left[\X_t^\top\mid\bm{\theta}_t^\text{all}\right] &= \mathbb{E}\left[\B_t^\top\right] + \left(\sum_{i=1}^n\gamma_{it}\bm{\theta}_{it}^\top\right)\mathbb{E}\left[\mathcal{E}_t\mathcal{E}_t^\top\right]
\\
&=
\begin{bmatrix}
    \sum_{i=1}^n\gamma_{it}\bm{\theta}_{it}^\top & 1
\end{bmatrix}
\begin{bmatrix}
    \mathbb{E}\left[\mathcal{E}_t\mathcal{E}_t^\top\right] \\
    \mathbb{E}\left[\B_t^\top\right]
\end{bmatrix}
\\
&= \widetilde{\Theta}_t^\top 
\begin{bmatrix}
    \mathbb{E}\left[\mathcal{E}_t\mathcal{E}_t^\top\right] \\
    \mathbb{E}\left[\B_t^\top\right]
\end{bmatrix}
.
\end{align*}

Stacking all data points vertically, let $\mathbf{X}^\top_\mathbb{E}=\Big[\mathbb{E}\left[\X_1\mid\bm{\theta}_1^\text{all}\right],\ldots,\mathbb{E}\left[\X_T\mid\bm{\theta}_T^\text{all}\right]\Big]^\top$ and $\widetilde{\bm{\Theta}}=\Big[\widetilde{\Theta}_1,\ldots,\widetilde{\Theta}_T\Big]^\top$, we have the following linear setup that results in an OLS estimator:
\begin{align*}
\mathbf{X}^\top_\mathbb{E} = \widetilde{\bm{\Theta}}
\begin{bmatrix}
    \mathbb{E}\left[\mathcal{E}_t\mathcal{E}_t^\top\right] \\
    \mathbb{E}\left[\B_t^\top\right]
\end{bmatrix}
\quad\Rightarrow\quad 
\begin{bmatrix}
    \widehat{\Omega} \\
    \bar{\mathbf{b}}^\top
\end{bmatrix} &= \big(\widetilde{\bm{\Theta}}^\top\widetilde{\bm{\Theta}}\big)^{-1}\widetilde{\bm{\Theta}}^\top\mathbf{X}^\top_\mathbb{E}
\\
&= \big(\widetilde{\bm{\Theta}}^\top\widetilde{\bm{\Theta}}\big)^{-1}
\begin{bmatrix}
    \widetilde{\Theta}_1 & \ldots & \widetilde{\Theta}_T
\end{bmatrix}
\begin{bmatrix}
    \mathbb{E}\left[\X_1^\top\mid\bm{\theta}_1^\text{all}\right] \\
    \vdots \\
    \mathbb{E}\left[\X_T^\top\mid\bm{\theta}_T^\text{all}\right]
\end{bmatrix}
\\
&=
\left(\sum_{t=1}^T\widetilde{\Theta}_t\widetilde{\Theta}_t^\top\right)^{-1}\sum_{t=1}^T\widetilde{\Theta}_t\mathbb{E}\left[\X_t^\top\mid\bm{\theta}_t^\text{all}\right]
.
\end{align*}
This concludes the proof.
\end{proof}

We present the way to estimate $\thetaao_i$ using the following theorem, where $\bm{\kappa}_i$ denotes the un-normalised version of $\thetaao_i$.

\begin{lemma}
Suppose that Assumptions (M1), (M2), and (M3) hold for an arbitrary decision maker $i$ and denote the following:
\begin{equation*}
\bm{\kappa}_i = \vecalpha_i+\gamma_i\homomatrix\homomatrix^\top\bm{\theta}_i^*, \quad
\zeta_i(\bm{\theta}_\diamond^{-i}) = \beta_i + \psi_i(\bm{\theta}_\diamond^{-i}), \quad
\widetilde{\Theta}_{it_s} = [\bm{\theta}_{it_s}^\top \quad 1]^\top,
\end{equation*}
where we constrain our data set to the case $\bm{\theta}_t^{-i}=\bm{\theta}_\diamond^{-i}$ for some arbitrary value $\bm{\theta}_\diamond^{-i}$ and we let the samples be indexed by $t_s=1,\ldots,T_s$ with $T_s$ being the sample size, then the OLS estimator for $\bm{\kappa}_i$ and $\zeta_i(\bm{\theta}_\diamond^{-i})$ is
\begin{align*}
\begin{bmatrix}
    \widehat{\bm{\kappa}}_i \\
    \widehat{\zeta}_i(\bm{\theta}_\diamond^{-i})
\end{bmatrix}
=
\left(\sum_{t_s=1}^{T_s}\widetilde{\Theta}_{it_s}\widetilde{\Theta}_{it_s}^\top\right)^{-1}\sum_{t_s=1}^{T_s}\widetilde{\Theta}_{it_s}\mathcal{Q}_i(\{\bm{\theta}_{it_s},\bm{\theta}_\diamond^{-i}\})
.
\end{align*}%
\end{lemma}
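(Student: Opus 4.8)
The plan is to recognise the claimed formula as a direct instance of the generic OLS identity in \Cref{lemma:generic-ols-the-harris-way}, once the objective $\mathcal{Q}_i$ has been written as an affine function of $\bm{\theta}_{it}$. First I would recall from the proof of \Cref{theorem:bounded-optim-m} (see \Cref{apx:bounded-opt-ext}) that, under Assumptions (M1)--(M3), the objective decomposes as
$$\mathcal{Q}_i(\{\bm{\theta}_{it},\bm{\theta}_t^{-i}\}) = \big(\vecalpha_i+\gamma_i\homomatrix\homomatrix^\top\bm{\theta}_i^*\big)^\top\bm{\theta}_{it} + \big(\beta_i + \psi_i(\bm{\theta}_t^{-i})\big) = \bm{\kappa}_i^\top\bm{\theta}_{it} + \zeta_i(\bm{\theta}_t^{-i}),$$
using the given definitions $\bm{\kappa}_i = \vecalpha_i+\gamma_i\homomatrix\homomatrix^\top\bm{\theta}_i^*$ and $\zeta_i(\bm{\theta}_t^{-i}) = \beta_i + \psi_i(\bm{\theta}_t^{-i})$. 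By \Cref{prop:dom-strat} the slope $\bm{\kappa}_i$ does not depend on $\bm{\theta}_t^{-i}$; only the intercept does.

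Next, I would restrict the data set to the subsample in which the rivals' parameters are held fixed at the arbitrary value $\bm{\theta}_\diamond^{-i}$, so that the intercept collapses to the single constant $\zeta_i(\bm{\theta}_\diamond^{-i})$. Indexing these samples by $t_s = 1,\ldots,T_s$, each observation then satisfies the exact affine relation
$$\mathcal{Q}_i(\{\bm{\theta}_{it_s},\bm{\theta}_\diamond^{-i}\}) = \bm{\theta}_{it_s}^\top\bm{\kappa}_i + \zeta_i(\bm{\theta}_\diamond^{-i}) = \widetilde{\Theta}_{it_s}^\top\begin{bmatrix} \bm{\kappa}_i \\ \zeta_i(\bm{\theta}_\diamond^{-i}) \end{bmatrix},$$
where $\widetilde{\Theta}_{it_s} = [\bm{\theta}_{it_s}^\top \quad 1]^\top$ is precisely the augmented covariate vector from the statement. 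This matches the data-generating template $\mathbf{Y} = \mathbf{X}\mathbf{B} + (\vec{1}_{T_s})\mathbf{c}^\top + \mathbf{E}$ of \Cref{lemma:generic-ols-the-harris-way} specialised to a single scalar target ($K=1$), with the covariate playing the role of $\bm{\theta}_{it_s}$, the coefficient $\mathbf{B}$ that of $\bm{\kappa}_i$, and the bias $\mathbf{c}$ that of $\zeta_i(\bm{\theta}_\diamond^{-i})$.

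Finally, I would substitute these identifications into the conclusion of \Cref{lemma:generic-ols-the-harris-way}, namely $\big(\sum_{t_s}\widetilde{\mathbf{x}}_{t_s}\widetilde{\mathbf{x}}_{t_s}^\top\big)^{-1}\sum_{t_s}\widetilde{\mathbf{x}}_{t_s}\mathbf{y}_{t_s}^\top$, which immediately yields the claimed estimator for $\widehat{\bm{\kappa}}_i$ and $\widehat{\zeta}_i(\bm{\theta}_\diamond^{-i})$. There is no genuine analytical obstacle; the argument is essentially a matching of notation. The only point demanding care is the restriction to a fixed $\bm{\theta}_\diamond^{-i}$: this is what turns the intercept into a single estimable constant rather than a quantity varying across rows, and it is justified precisely because \Cref{prop:dom-strat} guarantees that the slope $\bm{\kappa}_i$ is invariant to $\bm{\theta}_t^{-i}$, so the fit on this subsample recovers the same $\bm{\kappa}_i$ that governs the entire family of objectives.
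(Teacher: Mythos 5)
Your proposal is correct and follows essentially the same route as the paper's own proof: recall the affine decomposition $\mathcal{Q}_i = \bm{\kappa}_i^\top\bm{\theta}_{it} + \zeta_i(\bm{\theta}_t^{-i})$ from the proof of \Cref{theorem:bounded-optim-m}, restrict to the subsample with $\bm{\theta}_t^{-i}=\bm{\theta}_\diamond^{-i}$ so that the intercept is a single constant, and instantiate \Cref{lemma:generic-ols-the-harris-way} with $K=1$. The only cosmetic difference is that you credit the invariance of the slope $\bm{\kappa}_i$ to \Cref{prop:dom-strat}, whereas the paper attributes it directly to Assumptions (M1)--(M2), from which the decomposition (and hence the invariance) already follows; this does not affect the validity of the argument.
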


\begin{proof}
Recall the decomposition of $\mathcal{Q}_i$ from the proof for \Cref{theorem:bounded-optim-m},
\begin{align*}
\mathcal{Q}_i(\{\bm{\theta}_{it},\bm{\theta}_t^{-i}\}) 
&= \big(\vecalpha_i+\gamma_i\homomatrix\homomatrix^\top\bm{\theta}_i^*\big)^\top\bm{\theta}_{it} + \Big(\beta_i + \psi_i(\bm{\theta}_t^{-i})\Big)
\\
&=
\bm{\kappa}_i^\top\bm{\theta}_{it} + \zeta_i(\bm{\theta}_t^{-i})
\\
&=
\begin{bmatrix}
    \bm{\theta}_{it}^\top & 1
\end{bmatrix}
\begin{bmatrix}
    \bm{\kappa}_i \\
    \zeta_i(\bm{\theta}_t^{-i})
\end{bmatrix}
\end{align*}
where we could transpose the terms in the last equation because $\mathcal{Q}_i(\{\bm{\theta}_{it},\bm{\theta}_t^{-i}\})$ is a scalar.

Note that neither the value of $\bm{\theta}_{it}$ nor that of $\bm{\theta}_t^{-i}$ affects $\bm{\kappa}_i$ and the functional form of $\zeta_i$, due to Assumption (M2). When we restrict ourselves to the data set where $\bm{\theta}_t^{-i}=\bm{\theta}_\diamond^{-i}$ for some arbitrary value $\bm{\theta}_\diamond^{-i}$ then we obtain the desired result, following \Cref{lemma:generic-ols-the-harris-way} on OLS estimator,
\begin{align*}
\mathcal{Q}_i(\{\bm{\theta}_{it_s},\bm{\theta}_\diamond^{-i}\})
=
\begin{bmatrix}
    \bm{\theta}_{it_s}^\top & 1
\end{bmatrix}
\begin{bmatrix}
    \bm{\kappa}_i \\
    \zeta_i(\bm{\theta}_\diamond^{-i})
\end{bmatrix}
.
\end{align*}
This concludes the proof.
\end{proof}

\section{Detailed Simulation Setup and Additional Experiments} \label{apx:experiments}

%%%%%
\begin{figure}[t!]
    \centering
    \includegraphics[width=0.7\columnwidth]{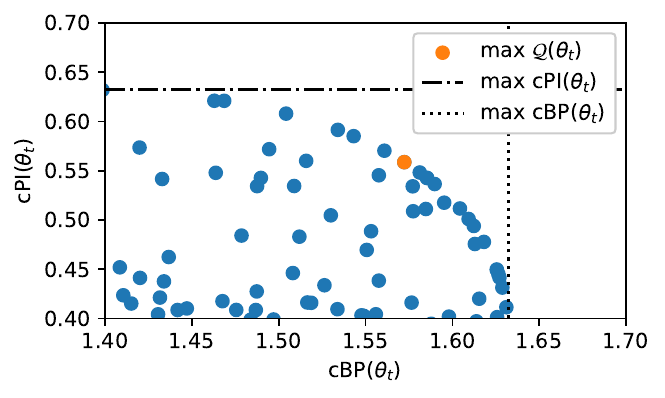}
    \caption{A scatter plot showing different pairs of values for cPI$(\bm{\theta}_t)$ and cBP$(\bm{\theta}_t)$ by varying $\bm{\theta}_{t}$. The pair that maximises $\mathcal{Q}(\bm{\theta}_{t})$ is shown in orange. Dashed and dotted lines show the respective maximum values of cPI and cBP. In this case, cPI$(\bm{\theta}_t)$ and cBP$(\bm{\theta}_t)$ cannot be maximised simultaneously.}
    \label{fig:trade-off}
\end{figure}
%%%%%

\begin{figure*}[t!]
    \centering
    \begin{subfigure}[b]{0.49\textwidth}
        \centering
        \includegraphics[width=\textwidth]{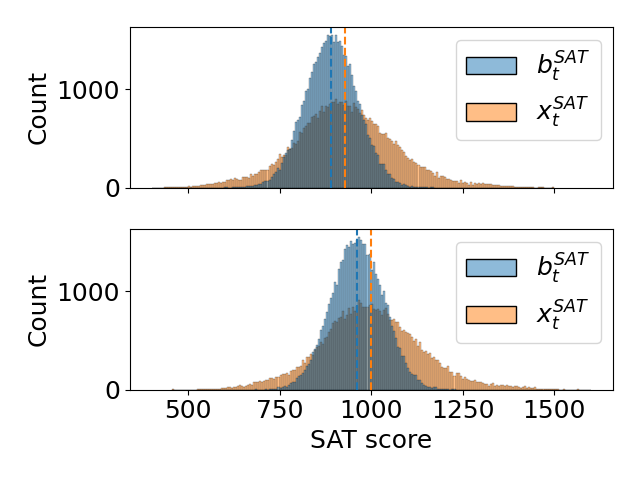}
        \caption{Distribution of $\X_t^\text{SAT}$ among disadvantaged (top) and advantaged (bottom) students. Color-coded vertical lines show the mean of the distribution color-coded by the respective color.
        }
        \label{fig:our-settings-data1}
    \end{subfigure}
    \hfill
    \begin{subfigure}[b]{0.49\textwidth}
        \centering
        \includegraphics[width=\textwidth]{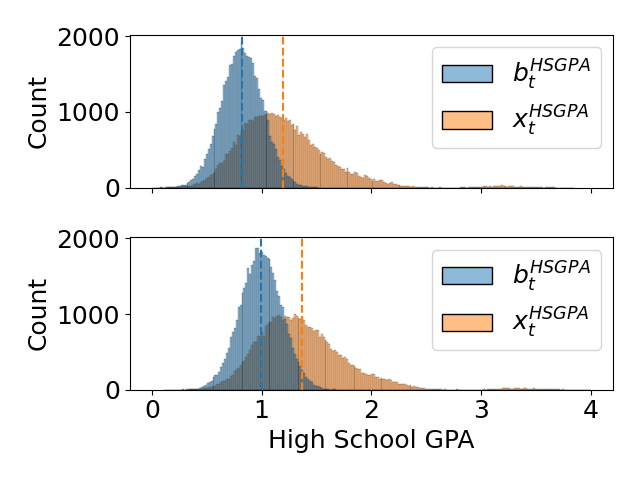}
        \caption{Distribution of $\X_t^\text{HS GPA}$ among disadvantaged (top) and advantaged (bottom) students. Color-coded vertical lines show the mean of the distribution color-coded by the respective color.
        }
        \label{fig:our-settings-data2}
    \end{subfigure}
    \hfill
    \begin{subfigure}[b]{\textwidth}
        \centering
        \includegraphics[width=0.6\textwidth]{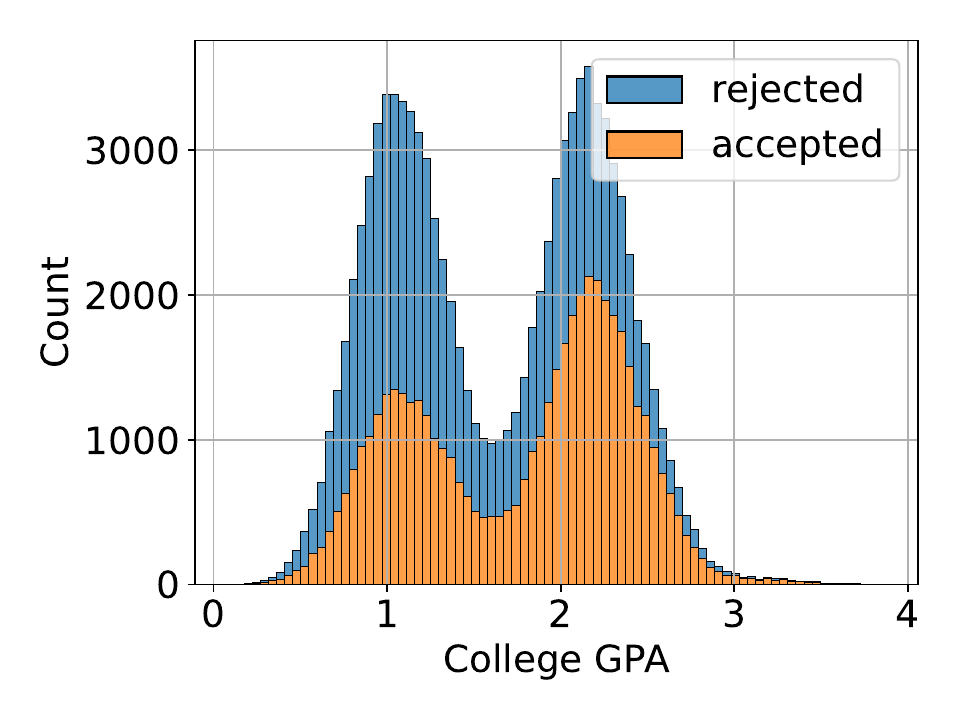}
        \caption{Illustration via stacked bar-plot that in our setup, the distributions of $Y_t$ and $Y_t | W_t = 1$ are not identical. $p(Y_t | W_t=1)$ (orange) is skewed to left, unlike the marginal distribution $p(Y_t)$ (blue and orange combined). }
        \label{fig:our-settings-data3}
    \end{subfigure}
    \caption{Distribution of $\X_t$, $\B_t$ and $Y_{it}$ in our synthetic dataset, for $n=1$. We refer the readers to the text for details.}
    \label{fig:our-settings-data}
\end{figure*}

\begin{figure}[t!]
    \centering
    \begin{subfigure}[t]{0.49\textwidth}
        \centering
        \includegraphics[width=0.95\textwidth]{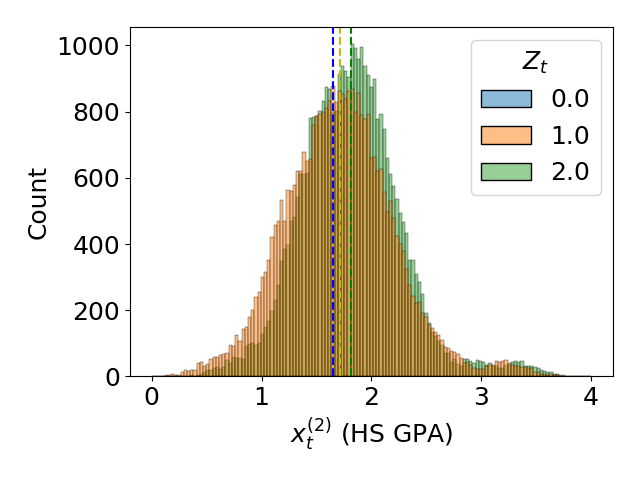}
        \caption{Distribution of High School GPA enrolled in different environments.}
        \label{fig:our-settings-multi2}
    \end{subfigure}
    \hfill
    \begin{subfigure}[t]{0.49\textwidth}
        \centering
        \includegraphics[width=0.95\textwidth]{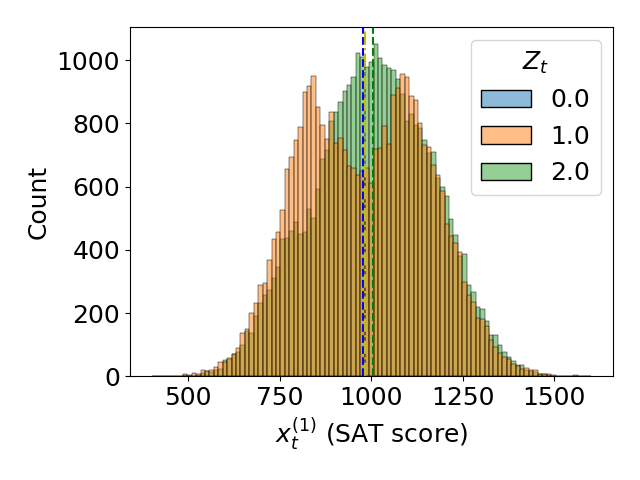}
        \caption{Distribution of SAT score in different environments. }
        \label{fig:our-settings-multi1}
    \end{subfigure}
    \caption{Illustration of heterogeneity in environments. Distribution of $X_t$ in our synthetic dataset, for $n=2$. Color-coded vertical lines show the mean of the distribution color-coded by the respective color. High School GPA is higher for applicants in $Z_t=2$ than in $Z_t=1$ (Figure \ref{fig:our-settings-multi2}). The same applies for the SAT score (Figure \ref{fig:our-settings-multi1}).}
    \label{fig:our-settings-multi}
\end{figure}

\subsection{Our Simulation Setup} \label{apx:exp-simulation-setup}
% \label{apx:simulation-setup}

Following \citet{harris2022strategic}, we consider the problem of predicting the the college GPA (i.e., target) from high school GPA and SAT score (i.e., the covariates. In particular, we denote by $Y_{it} \in \mathbb{R}$ the college GPA of $i$th student in round $t$ in environment $i$. Similarly, let $\X_t = (X_t^{\text{SAT}}, X_t^{\text{HS GPA}})^\top$ denote the covariates of the student in round $t$. Firstly, to create confounding between $\B_t$ and $O_{it}$, we consider two groups: disadvantaged ($g_t=0$), and advantaged ($g_t=1$). We now generate $\B_t= (\B_t^{\text{SAT}}, \B_t^{\text{HS GPA}})^T$ and $O_{it}$ as in \citet{harris2022strategic}:  
\begin{align*}
    \B_t^{\text{SAT}} &\sim \begin{cases} \mathcal{N}(800, 200) & g_t = 0  \\ \mathcal{N}(1000, 200) & g_t = 1\end{cases} \\ 
    \B_t^{\text{HS GPA}} &\sim \begin{cases} \mathcal{N}(1.8, 0.5) & g_t = 0  \\ \mathcal{N}(2.2, 0.5) & g_t = 1\end{cases} \\
    O_{it} &\sim \begin{cases}\mathcal{N}(0.5, 0.2) & g_t = 0 \\ \mathcal{N}(1.5, 0.2) & g_t = 1\end{cases} 
\end{align*}

For simplicity, we let each environment to have equal preference $\gamma_i = \frac{1}{n}$ for $i\in[n]$, where we recall that $n$ denotes the number of environments. We set the effort conversion matrix $\homomatrix$ for all students according to $\mathbb{E}[\mathcal{E}_t]$ in \citet{harris2022strategic}:
\begin{align*}
    \homomatrix = \begin{pmatrix}
        10 & 0 \\ 0 & 1
    \end{pmatrix}
\end{align*}

We then compute $\mathbf{x}_t = \mathbf{b}_t + \homomatrix \mathbf{a}_t$ where $\mathbf{a}_t = \homomatrix^\top \sum_{i=1}^{n} \gamma_{it} \bm{\theta}_{it}$ is the optimal action. To retain the real-world interpretation, we normalize High School GPA and SAT scores to lie between (0, 4) and (400, 1600) respectively. Figure \ref{fig:our-settings-data1} and \ref{fig:our-settings-data2} show the distribution of $\X_t$ and $\B_t$.

Assessment rules $\bm{\theta}_{it}$ are generated heterogeneously such that each environment $i$ emphasizes the HS GPA for prediction more than all environments $j < i$:  
\begin{align*}
    \bm{\theta}_{it} \sim \mathcal{N}\left(\begin{pmatrix}
        1 \\ i
    \end{pmatrix}, \begin{pmatrix}
        10 & \\ & 1
    \end{pmatrix}\right) \qquad i=1, \ldots, n
\end{align*}

Furthermore, as cooperative protocol (\Cref{def:protocol}) requires $\bm{\theta}_{it} = k \bm{\theta}_{it^\prime}$ for two rounds $t, t^\prime$, we generate scaled duplicates of each $\bm{\theta}_{it}$ for each environment. In particular, $\text{for each }\;i, \;\exists \;t, t^\prime \;\text{s.t.}\; \bm{\theta}_{it} = k \bm{\theta}_{it^\prime}$. Finally, we parameterize how often to deploy scaled duplicates by parameters $\{\eta_i: i \in \{1, \ldots, n\}\}$, which we also discuss in \Cref{sec:experiments}.

Afterwards, we compute $w_{it} \in \{0, 1\}$ by selecting $\rho \in [0,1]$ fraction of the students having the highest prediction $\X_{t}^\top \bm{\theta}_{it}$ in environment $i$ in round $t$. Formally, we have
\begin{equation*}
    w_{it} = \begin{cases}
        1 & \text{if $p(\X_{t}^\top\bm{\theta}_{it} < \mathbf{x}_t^\top\bm{\theta}_{it}) > (1 - \rho$)} \\
        0 & \text{o.w.}.
    \end{cases}
\end{equation*} 

We remark that we use the same selection parameters $\bm{\theta}_{it}$ for all students within a round $t$. 

Afterwards, we compute $z_t \in \{0, \ldots, n\}$, which denotes which college the student in round $t$ enrolls in. If $w_{it} = 0 \;\forall i$, then $z_t=0$ (i.e. corresponds to student $t$ being rejected). Otherwise, $z_t$ is randomly sampled from a categorical distribution $Cat(\{i: W_{it} = 1\})$. Event probabilities used for sampling are set to respective normalized preferences $\{\gamma_i: w_{it}=1\}$. We recall that when $n=1$, then $Z_t := W_{t}$.

Finally, we compute $y_{it} = \mathbf{x}_t^\top\bm{\theta}_i^* + o_{it}$, if $z_t = i$, where $\bm{\theta}_i^* = (\theta_{i}^{*,\text{SAT}}, \theta_{i}^{*,\text{HS GPA}})^\top$ is the causal coefficient of environment $i$. As \citet{harris2022strategic} deduced $\thetastar \approx (0, 0.5)$ from a real world dataset, we set $\thetastar_{i} = (\theta_{i}^{*,\text{SAT}}, \theta_{i}^{*,\text{HS GPA}})^\top$ to
\begin{equation*}
    \theta_{i}^{*,\text{SAT}} = 0 \qquad \theta_{i}^{*, \text{HS GPA}} \sim \mathcal{N}(0.5, 0.1). 
\end{equation*}

In our experiments. For $n=1$, Figure \ref{fig:our-settings-data3} illustrates the difference between the distribution of (i) college GPA of selected students (i.e., $Y_{it} | W_{it} = 1$) and (ii) of all students, had they been selected $Y_{it}$. For $n=2$, Figure \ref{fig:our-settings-multi2} shows that, as expected, High School GPA and SAT scores in $Z_t=2$ has a higher mean than in $Z_t=1$, as shown in Figure \ref{fig:our-settings-multi}.

% \subsection{Further Details about \texorpdfstring{$\eta_i$}{eta}}  \label{apx:exp-async}
% \label{apx:betaparam}
% According to \Cref{def:protocol}, environment $i$ and $j$ are cooperative if they deploy the linearly dependent parameter vector $\bm{\theta}_{it}$ at the same rounds $t$ and $t'$. For experiments, we control the rate of deployment of dependent vectors by $\eta_i$. In particular, environment $i$ waits for $\eta_i$ rounds before deploying the dependent vector for previous $\eta_i$ rounds. More formally, we ensure that environment $i$ deploys a dependent vector $k\bm{\theta}_{it}$ in round $t'$ for the parameter vector $\bm{\theta}_{it}$ in round $t$, where
% \begin{align*}
%     t &= k + \left(\left\lfloor \frac{k-1}{\eta_i}\right\rfloor * \eta_i\right) \\
%     t' &= t + \eta_i, 
% \end{align*}
% where $k \in \{1, 2, \ldots\}$ denotes the $k$-th occurrence of a linearly independent  $\bm{\theta}$ for environment $i$. Finally, we say that a set of DMs deploy the parameter vectors synchronously if they deploy the linearly dependent vector at the same frequency (i.e., they share the parameter $\eta_i$), or asynchronous otherwise.
% See also the MSLR algorithm in \Cref{apx:algo}.

\subsection{Analyses of Assumptions}
\label{apx:analyses-of-assumptions}
In the single DM setting, we test the sensitivity of \Cref{theorem:bounded-optim-s} and \Cref{theorem:local-exo-s} against the linearity assumption on the relationship between $B_t$ and $X_t$, and against Assumption H1. Specifically, to break the linearity between $B_t$ and $X_t$, we apply the standard logistic function over $X_t$ because this can well reflect the bounded performance of agents in reality. We then use the parameter $\alpha_1$ to control the transition between a fully linear relationship and a logistic one. At the two extremes, when $\alpha_1=0$, $X_t$ is a linear transformation of $B_t$ and when $\alpha_1=1$, $X_t$ is a logistic transformation of $B_t$. Similarly, to break Assumption H1, we introduce random perturbation into an agent's effort conversion matrix $\mathcal{E}_t$ and we control the strength of this perturbation with $\alpha_2$. At the two extremes, when $\alpha_2=0$, all agents have the same conversion matrix $\homomatrix$ (as we used throughout our paper), and when $\alpha_2=1$, an agent's effort conversion matrix $\mathcal{E}_t$ follows a multivariate Gaussian distribution. The specific parameters of this distribution depend on the private type of the agent \citep{harris2022strategic}.

\Cref{fig:sensitivity-for-optimum} and \Cref{fig:sensitivity-for-causality} show the respective results, for utility optimisation and for causal parameter learning. In general, the performance declines when assumptions are violated, however our proposed methods do not perform worse than the baselines.

We also study Assumption S1 and provide scatter plots (\Cref{fig:assumptions-S1}) showing the relationship between cBP and $\bm{\theta}_t$ in the single DM case. We make Assumption S1 to simplify our theoretical analysis, but as we demonstrate here, it can be violated in practice and cBP is not fully linear in $\bm{\theta}_t$. However, our experiments on the utility maximisation still display the superior performance of $\thetaao$, despite this violation, hinting that our approach is not sensitive to Assumption S1. This observation, nevertheless, suggests that Assumption S1 should be relaxed, e.g., to a partially linear model. We will pursue this in future work.

\begin{figure*}
\centering
    \begin{subfigure}[t]{0.45\textwidth}
    \centering
    \includegraphics[width=\textwidth]{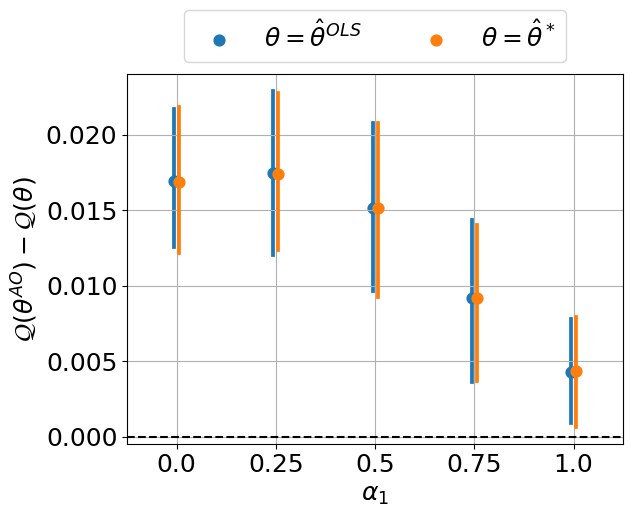}
    \caption{When the linearity assumption for $X_t$ and $B_t$ does not hold.}
    \end{subfigure}
    \hfill
    \begin{subfigure}[t]{0.45\textwidth}
    \centering
    \includegraphics[width=\textwidth]{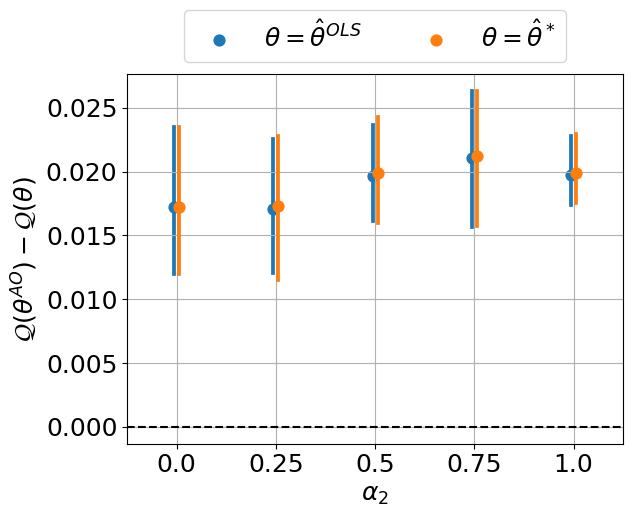}
    \caption{When Assumption H1 does not hold.}
    \end{subfigure}
\caption{Each of the two graphs shows the differences in the utility value $\mathcal{Q}$ between our $\thetaao$ and the other two naive choices of $\bm{\theta}_t$, hence $\mathcal{Q}(\thetaao)-\mathcal{Q}(\bm{\theta})$. Higher values mean better performance of our proposed approach.}
\label{fig:sensitivity-for-optimum}
\end{figure*}

\begin{figure*}
\centering
    \begin{subfigure}[t]{0.45\textwidth}
    \centering
    \includegraphics[width=\textwidth]{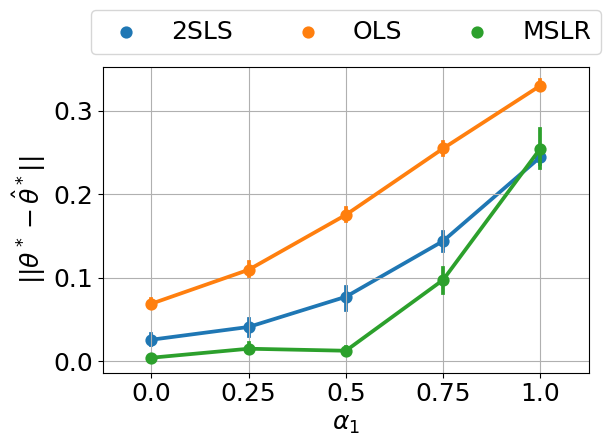}
    \caption{When the linearity assumption for $X_t$ and $B_t$ does not hold.}
    \end{subfigure}
    \hfill
    \begin{subfigure}[t]{0.45\textwidth}
    \centering
    \includegraphics[width=\textwidth]{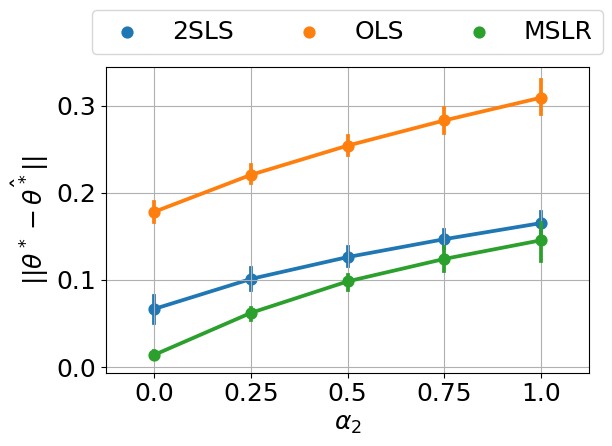}
    \caption{When Assumption H1 does not hold.}
    \end{subfigure}
\caption{Each of the two graphs compares the estimation errors between the three methods for inferring the true causal parameters. Lower values mean better performance.}
\label{fig:sensitivity-for-causality}
\end{figure*}

\begin{figure*}
\centering
    \begin{subfigure}[b]{0.47\textwidth}
    \centering
    \includegraphics[width=\textwidth]{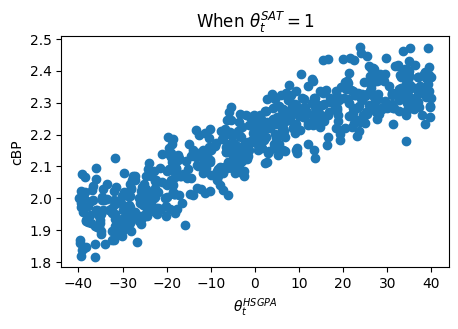}
    % \caption{XXX}
    \end{subfigure}
    \hfill
    \begin{subfigure}[b]{0.47\textwidth}
    \centering
    \includegraphics[width=\textwidth]{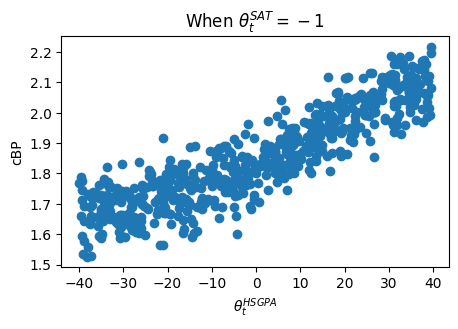}
    % \caption{XXX}
    \end{subfigure}

    \begin{subfigure}[b]{0.47\textwidth}
    \centering
    \includegraphics[width=\textwidth]{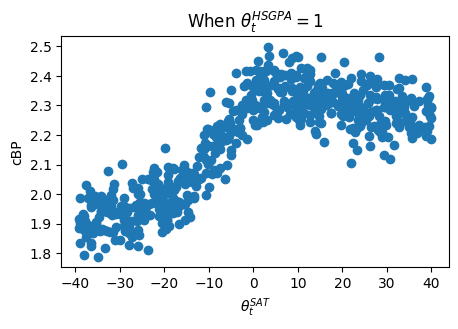}
    % \caption{XXX}
    \end{subfigure}
    \hfill
    \begin{subfigure}[b]{0.47\textwidth}
    \centering
    \includegraphics[width=\textwidth]{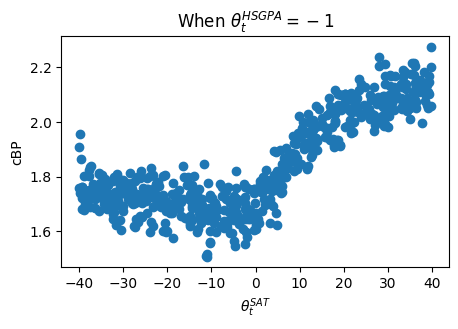}
    % \caption{XXX}
    \end{subfigure}
\caption{Scatter plots showing the relationship between cBP and $\bm{\theta}_t$ in the single DM case. When either $\theta_t^\text{SAT}$ or $\theta_t^\text{HS GPA}$ is kept fixed.}
\label{fig:assumptions-S1}
\end{figure*}

\subsection{Further Experiments}  \label{apx:exp-further}

In this section, we present some additional results.

\paragraph{\citet{harris2022strategic} algorithm under selection bias.}

\begin{figure}[t]
    \centering
    \includegraphics[width=0.6\textwidth] {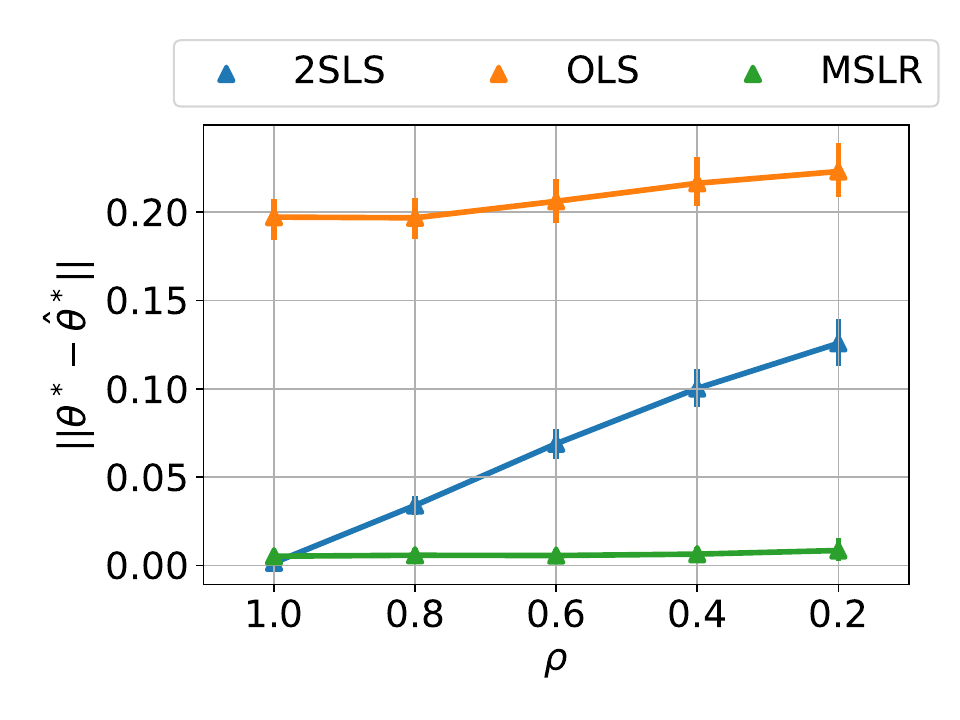}
    \caption{[Lower is better] Estimation errors under decreasing $\rho$ parameter (i.e., under increasing selection bias). The performance of 2SLS deteriorates, whereas our method is robust to the strength of selection bias induced by $\rho$. 
    }
    \label{fig:rho-param}
\end{figure}
% \begin{figure}
%     \centering
%     \includegraphics[width=.45\textwidth] {}
%     \caption{Supplementary result for \Cref{fig:results2}. Estimated causal effect of $X_{t}^{\text{SAT}}$ on $Y_{it}$ for 2 DMs (each depicted by a different linestyle). Cooperation between DMs consistently leads to an unbiased estimate of the effect $\theta^{*, \text{SAT}}$ for both DMs. Error bars show 99\% confidence interval.}
%     \label{fig:supp-results2}
% \end{figure}
\begin{figure}[t!]
     \centering
     \begin{subfigure}[b]{\textwidth}
         \centering
         \includegraphics[width=0.6\textwidth]{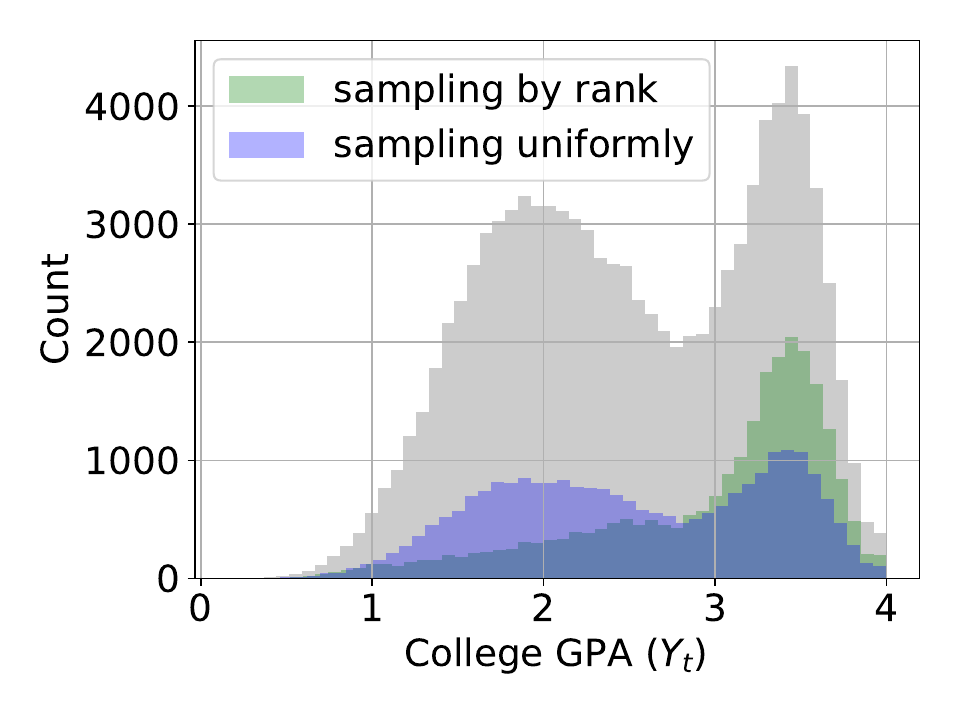}
         \caption{Effect of sampling students either randomly or by percentile rank (see text). Sampling randomly induces a similar distribution to the original one (in gray), while sampling by percentile rank induces a distribution with a larger mode than the original distribution.}
         \label{fig:harris-et-al-selection-dist}
     \end{subfigure}
     \hfill
     \begin{subfigure}[b]{\textwidth}
         \centering
         \includegraphics[width=0.6\textwidth]{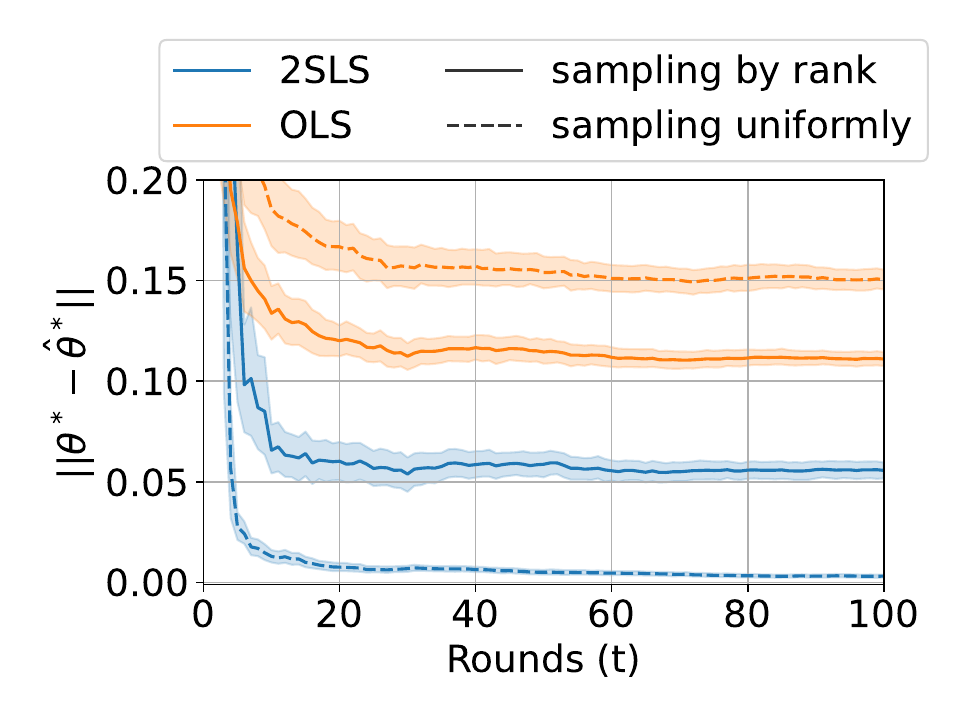}
         \caption{Results of applying 2SLS and OLS on both scenarios from \Cref{fig:harris-et-al-selection-dist}. 2SLS converges to a biased estimate when considering a subsample induced by sampling by percentile rank (i.e., under selection bias), unlike when considering a subsample from a uniform distribution (i.e., no selection bias).  }
         \label{fig:harris-et-al-selection-results}
     \end{subfigure}
        \caption{Illustration that 2SLS produces a biased estimate of the causal parameters $\thetastar$ under selection bias.}
        \label{fig:harris-et-al-selection}
\end{figure}

Firstly, we demonstrate that \citet{harris2022strategic} recovers a biased estimate of the parameters $\thetastar$ under their original settings with the augmentation of a selection variable $W_{it}$. In particular, in our setup, unlike \citet{harris2022strategic}, we assume that (a) all students have the common effort conversion matrix, (b) the decision maker deploys linearly dependent parameter vector $\bm{\theta}_{it}$. We now validate that these assumptions are not necessary: in the absence of aforementioned assumptions, prior work still produces a biased estimate of the parameter vector $\bm{\theta}_{it}$ under selection bias introduced by a selection variable $W_{it}$. 

To this end, we generate $y_{it}$ as per \citet{harris2022strategic}. We now consider two scenarios: (a) Randomly sample $\rho=0.25$ fraction of the students from each round, or (b) sample only the students with predictions $\hat{y}_{it}$ lying in the top $\rho=0.25$ percentile. \Cref{fig:harris-et-al-selection-dist} illustrates the difference in the distribution. We now run 2-stage least squares (2SLS) \citep{harris2022strategic} for both scenarios and demonstrate in \Cref{fig:harris-et-al-selection-results} that the method produces a biased estimate in the latter scenario, unlike in the former scenario. 

\paragraph{Effect of the selection parameter $\rho$ }
Throughout the experiments section, we arbitrarly set $\rho=0.5$  for all environments. We remark that when $\rho=1$, we select all students from the round. We now investigate the impact of $\rho$ on the causal parameter estimation for all methods. To this end, we show in \Cref{fig:rho-param} that the estimation error of 2SLS \citep{harris2022strategic}  deteriorates as the strength of the selection bias increases (i.e., as the $\rho$ parameter \emph{decreases}), whereas our proposed method is robust to the degree of selection bias. 

We note that this variant of ranking selection also preserves the selection statuses of agents under the condition outlined in \Cref{theorem:local-exo-s} and \Cref{theorem:local-exo-m}. This is because this variant is a deterministic function, applied on top of the CDF in \Cref{def:ranking-selection}.

\paragraph{Estimation for $\bm{\theta}^{*, \text{SAT}}$}
\begin{figure}[t!]
    \centering
    \includegraphics[width=0.7\textwidth]{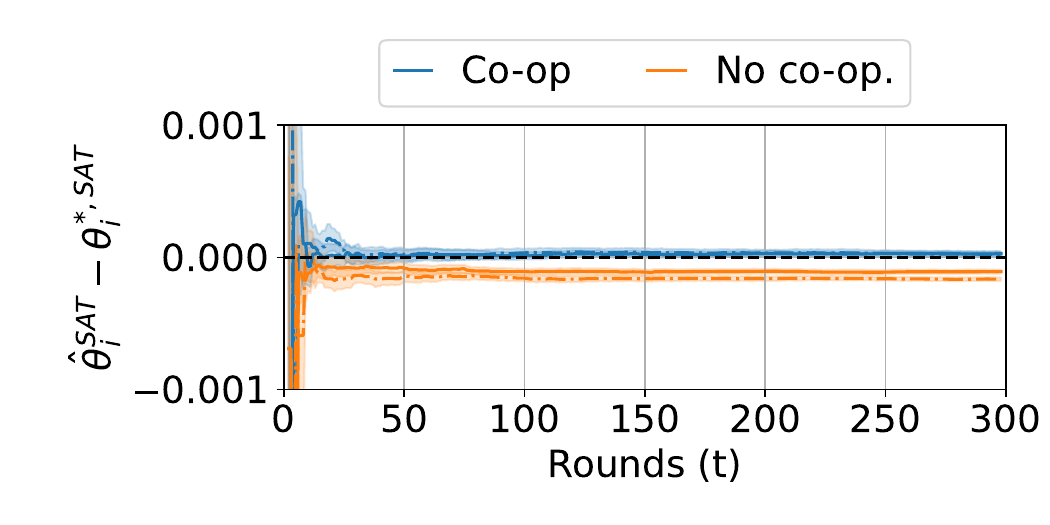}
    \caption{Supplementary result for \Cref{fig:results2}. Bias in the estimated causal effect of $X_{t}^{\text{SAT}}$ on $Y_{it}$ for 2 DMs, each depicted by a different line style.}
    \label{fig:p-vs-np3}
\end{figure}
We demonstrate the supplementary result to \Cref{fig:results2} in \Cref{fig:p-vs-np3}.

%\subsection{Trade-off plot}
%\Cref{fig:trade-off} that was referred to, in the main text.

\subsection{On normalisation of \texorpdfstring{$\bm{\theta}_{it}$}{theta}}
\label{apx:bounding-theta}
We explain our normalisation scheme used in the experiments. For simplicity, we only describe the single-DM case. Recall that $\mathcal{Q}(\bm{\theta}_t)$ is neither scale-invariant nor bounded in general, Assumption (S2) was introduced to bound $\mathcal{Q}$. This assumption also acts as a constraint that any deployed $\bm{\theta}_t$ must satisfy. 

Our goal is now to find $\arg\max_{\bm{\theta}_t:\|\bm{\theta}_t\|_2\leq\tau}\mathcal{Q}(\bm{\theta}_t)$ where $\tau$ is a generalisation of the threshold in Assumption (S2). This generalisation leads to $\thetaao$ with the property $\|\thetaao\|_2=\tau$. From \Cref{theorem:bounded-optim-s}, it can be seen that $\mathcal{Q}(\bm{\theta}_t)$ can get arbitrarily large depending on the value of $\tau$. Therefore setting $\tau=\|\thetastar\|$ is the only choice for a fair comparison between $\mathcal{Q}(\thetaao)$ and $\mathcal{Q}(\thetastar)$. Similarly, $\tau=\|\thetastarhat\|$ is necessary to compare $\mathcal{Q}(\thetaaohat)$ and $\mathcal{Q}(\thetastarhat)$.

On the other hand, if any $\thetaolshat$ violates the \textit{generalised} Assumption (S2), we scale it down so that $\|\thetaolshat\|=\tau$, otherwise, we do not scale it up. Because unlike $\thetaaohat$, scaling up $\thetaolshat$ does not guarantee an increase in utility $\mathcal{Q}$.

\end{document}